\newtheorem{setting}[theorem] {Setting}
\newcommand{\tspace}{\mathcal{T}}
\newcommand{\sumrt}{\sum_{t\in\tspace, r\in\{0,1\}}}
\newcommand{\ptmidrx}{p_{t\mid r}(X)}
\newcommand\norm[1]{\lVert#1\rVert}
\newcommand{\Xov}{\mathcal{X}_{\mathrm{ov}}}
\newcommand{\Xnov}{\mathcal{X}_{\mathrm{nov}}}
\newcommand{\indov}[1]{\mathbb{I}_{\mathrm{ov}}(#1)}
\newcommand{\indnov}[1]{\mathbb{I}_{\mathrm{nov}}(#1)}
\crefname{assumption}{assumption}{assumptions}
\begin{document}

% \RUNAUTHOR{Zhou}

\RUNTITLE{Optimal and Equitable Encouragement Policies}

\TITLE{Mind the Gap: Optimal and Equitable Encouragement Policies}

\ARTICLEAUTHORS{%
\AUTHOR{Angela Zhou}
\AFF{Department of Data Sciences and Operations, University of Southern California, \EMAIL{zhoua@usc.edu}}

} %

\ABSTRACT{%
In consequential domains, it is often impossible to compel individuals to take treatment, so that optimal policy rules are merely suggestions in the presence of human non-adherence to treatment recommendations. 
We study personalized decision problems in which the planner controls recommendations into treatment rather than treatment itself. Under a covariate-conditional no-direct-effect model of encouragement, policy value depends on two distinct objects: responsiveness to encouragement and treatment efficacy. This modeling distinction makes induced treatment take-up, rather than recommendation rates alone, the natural fairness target and yields tractable policy characterizations under budget and access constraints. In settings with deterministic algorithmic recommendations, the same model localizes overlap-robustness to the recommendation-response model rather than the downstream outcome model.
% Under heterogeneity, covariates may predict take-up of treatment and final outcome, but differently. While optimal treatment rules optimize causal outcomes across the population, access parity constraints or other fairness considerations on who receives treatment can be important. For example, in social services, a persistent puzzle is the gap in take-up of beneficial services among those who may benefit from them the most. We study causal identification and robust estimation of optimal treatment rules, including under potential violations of positivity. We consider fairness constraints such as demographic parity in treatment take-up, and other constraints, via constrained optimization. Our framework can be extended to handle algorithmic recommendations under an often-reasonable covariate-conditional exclusion restriction, using our robustness checks for lack of positivity in the recommendation. We develop a two-stage algorithm for solving over parametrized policy classes under general constraints to obtain variance-sensitive regret bounds. 
We illustrate the methods in case studies based on data from reminders of SNAP benefits recertification, and from pretrial supervised release with electronic monitoring. While the specific remedy to inequities in algorithmic allocation is context-specific, it requires studying both take-up of decisions and downstream outcomes of them.
}%

\FUNDING{}

\KEYWORDS{} 

\maketitle
%\linenumbers

\section{Introduction}

% Combining causal inference and machine learning to estimate heterogeneous treatment effects can increase revenue and improve public and individual health outcomes by facilitating personalized treatments such as medication decisions or interactions with e-commerce platforms and targeting of social interventions to those who benefit from them most. 
Data-driven decisions leverage individual-level data to learn heterogeneous treatment effects and can improve upon uniform decision rules for the population. Personalized targeting of promotions to customers can improve profits and longer-term revenues beyond one-size-fits-all campaigns \citep{simester2020targeting}. A crucial operational lever comprises of \textit{nudges and reminders}, \citep{hirano2000assessing,karlan2016getting}, %such as \textit{reminding} doctors to discuss the influenza vaccine \citep{hirano2000assessing} or consumers to save \citep{karlan2016getting}. 
% Recent work on \textit{targeted} nudges to reduce product returns finds that targeting can further reduce returns under a \textit{lower} nudging budget \citep{von2025smart}. 
and recent work shows that \textit{targeting} nudges can further improve outcomes under a lower budget \citep{von2025smart}.
%find large causal effects of reminding doctors to discuss the influenza vaccine driven by the effect of encouragement rather than treatment 
% In particular, a large literature on so-called \textit{optimal treatment regimes}, or off-policy learning, 
A rapidly growing research literature studies personalized decision rules based on treatment effect heterogeneity to prescribe who \textit{ought} to receive different treatments, deployed in healthcare, e-commerce, and social and societal settings \citep{athey17,kt15,manski05,zhao2012estimating}. 

However, in many important settings, we cannot \textit{compel} individuals into taking treatment. Such mismatches between treatment assignment and realization are called non-compliance or non-adherence in the causal literature. In fact, non-adherence is inherent in \textit{encouragement designs} that randomize over \textit{encouragements}, or {recommendations} into treatment, when it is impossible to randomize the treatment itself due to a human-in-the-loop who makes the final decision, or other ethical concerns. %Randomizing treatment may be not possible either because it is ultimately a human-in-the-loop who receives the treatment and it is impossible to compel someone into treatment or because it would be unethical to withhold access to the treatment intervention from the control group. 
For example, in e-commerce, although companies can assign different users to different visual interfaces of a website, they cannot {compel} users to sign up or subscribe. Instead, they can \textit{nudge} or \textit{encourage} users to sign up via promotions and offers. Encouragement designs are widely used in e-commerce to avoid withholding access to new features \citep{spotifyengineering2023} and in social settings because ``when evaluating a noncompulsory, entitlement program, researchers and practitioners cannot and should not force individuals to take up the program nor deny eligible individuals access to the program" \citep{heard2017real}.
% Alternatively, companies may not want to {withhold access} to a new feature to construct a control group and may instead randomize a new feature's {visibility} or the extent of advertising for it \citep{spotifyengineering2023}. %Then, the treatment group receives more encouragement to use the intervention, while the control group has access but does not receive additional encouragement. %
% Encouragement designs are pervasive not only in e-commerce but more generally in impact assessments across social settings, such as healthcare and social policy. Guidelines for the Abdul Latif Jameel Poverty Action Lab (J-PAL), a major provider of randomized controlled trials in development, note that ``when evaluating a noncompulsory, entitlement program, researchers and practitioners cannot and should not force individuals to take up the program nor deny eligible individuals access to the program" and recommend encouragement designs \citep{heard2017real}.

Throughout, the decision-maker controls an encouragement or recommendation, not treatment uptake itself. %Treatment uptake is a behavioral response to encouragement, and downstream outcomes causally depend on realized treatment rather than encouragement. 
Our analysis relies on a covariate-conditional exclusion restriction/assumption of no direct effect of encouragement on outcomes. Conditional on observed covariates, encouragement affects outcomes only through its effect on treatment take-up. This assumption encodes the ``nudge" nature of encouragements, and it enables us to conceptually separate the effects of nudges on treatment take-up from the effects of treatments on downstream outcomes. 
To interpret that decomposition in terms of heterogeneous treatment effects, we additionally invoke treatment unconfoundedness; under truly randomized encouragement, alternative IV-style interpretations are also available.
We therefore study policies that choose who to encourage so as to maximize policy value under the induced treatment behavior, optionally subject to budget and fairness constraints on induced treatment take-up or other groupwise impacts. %Our goal is to optimize encouragement policies while separately accounting for disparities in treatment take-up and in downstream benefits. 
%We study these issues in case studies on SNAP recertification reminders, randomized encouragement to enroll in insurance, and algorithmic recommendations for supervised release. %Our motivating examples share a common design: decision-makers act on encouragements rather than treatments, encouragements only shift treatment take-up, and the resulting targeting problem is naturally multi-objective once outcomes, access, and resource constraints are taken into account.

Optimal personalized policies often are optimizing in the space of encouragements, not treatments. %
% In particular, we focus on the implications of personalized encouragement interventions for decision-making and fairness or equity concerns; %
% we term such prospective decision rules \textit{optimal encouragement policies}. 
% In OR/MS, the goal of making optimal decisions confers advantages. The standard intention-to-treat (ITT) approach simply considers encouragement itself as the treatment, and directly optimizes the average downstream outcome of encouragements. For example, pure pricing and profit maximization maximize the ITT effect (i.e., expected profit). Sometimes downstream utility is indeed the primary criterion of interest. 
Marginal encouragement effects are often enough for operational decision-making.
% On the one hand, the primary goal of operations is to make optimal decisions and sometimes optimizing downstream utility is enough. 
For example, revenue management and pricing maximizes the expected profit, corresponding to the \textit{intention-to-treat} effect. %Making good decisions sometimes only requires estimating the average impact of different encouragement decisions, in constrast to 
Operational goals to make better decisions differ from scientific goals to estimate the specific causal effect of receiving treatment, which gives rise to more challenging instrumental variables estimands.
%The question is how to move the encouragement operational lever to optimize decision utility. However, moving the operational lever in turn 
% But changing the distribution of encouragements changes \textit{how individuals take-up treatment}, which separately \textit{affects downstream decision utility}. 
On the other hand, especially in the same consequential domains where it is impossible to compel treatment due to actions on and effects on people, decision-makers are interested separately in tracking \textit{who ultimately receives treatment} (take-up) and \textit{who benefits from receiving treatment} (treatment efficacy).  %Indeed, reasoning about demand is common in operations, and recent work on fair pricing has highlighted important concerns about fairness in demand and access \citep{cohen2022price}. Our causal perspective introduces this intermediate treatment demand as a separate consideration, generalizing beyond the specific revenue functional form in price. 
For example, recent papers optimize for personalized allocation of job training services \citep{kallus2023treatment,athey2021policy,bertsimas2019optimal}, leveraging data from large randomized experiments of job training programs \citep{imbens2024lalonde}. However, recent deployments of algorithmic job profiling in Europe have exposed key fairness concerns. 
% On the enrollee side, job training programs are long and many do not complete the full program. On the enroller side, recent algorithmic job profiling deployments in Europe have been a source of controversy: 
% Austria's labor agency's 
A job profiling tool in Austria gave women and non-Austrian citizens lower predictions (hence lower priority for job training) and was ultimately scrapped by the data protection authority \citep{derstandardDatenschutzbehrdeKippt}. The federal court overturned this decision, but the data had already been destroyed. The labor agency emphasized the predictive scores would only inform caseworker decisions, not dictate them. Would disparities in displayed predictive scores translate in practice also into discriminatory decisions for women, and worse labor market outcomes? Answering this requires separating how recommendations affect downstream human decisions from how those decisions affect outcomes.

Encouragement policy design is therefore inherently multi-objective: decision-makers may care about downstream outcomes, treatment take-up, encouragement burden, and disparities across groups. These tradeoffs may be represented either through penalized objectives or through constrained formulations. %In particular, this is distinct from goals in causal inference and science that seek to separately understand how effective the treatment really is, which introduces challenges to identification and is %

The distinction between encouragements versus the treatments they nudge is ubiquitous in practice, arising whenever treatment assignment relies on intermediate human decisions. 
% Firms act on encouragements rather than treatments, encouragements only shift treatment take-up, and the resulting targeting problem is naturally multi-objective once outcomes, access, and resource constraints are taken into account.
% Many situations in data-driven operations have this structure, especially when treatment decisions rely on intermediate human decisions. 
For example, price affects both demand and revenue. Demand fairness could be relevant \citep{cohen2022price,kallus2021fairness}, while personalized pricing could expand access \citep{karlan2008credit}.
% find that a microcredit lender's price discounts expand access to women and lower-income individuals with very small increase in default risk.
% Firms may consider targeted price discounts, which not only impact marketing budget vs. revenue, but may also introduce customer acquisition  or demand fairness concerns \citep{cohen2022price}, for example \citet{karlan2008credit} find that price discounts offered by a large microcredit lender expand access to women and lower-income individuals with very small increase in default: increasing access at low value loss. 
In healthcare, nonadherence is a major issue and can be significantly affected by social determinants of health \citep{miao2024identifying}. We later discuss a case study of targeted outreach for take-up of social services, so we discuss it here in more detail.
% \footnote{For example, \citet{miao2024identifying} studies reasons why patients might switch contraceptives (i.e., move off an initial prescription) and finds significantly higher switching rates for Black and Latinx groups because of not only medical concerns about side effects but also insurance coverage.
% }
% \end{example}

% Our next example introduces public-sector allocation questions regarding administrative burden in operations, which we revisit in our case study, so we provide additional context and go into more detail. 

% The term \textit{administrative burden} refers to costs associated with applying for, receiving, and
% participating in government benefits and services \citep{OMB-adminburden}. There has been great policy and public-sector interest in reducing administrative burdens. These operational frictions can disproportionately disenfranchise marginalized and disadvantaged groups and can undermine equity-related policy goals. %
% We model several different strategies to reduce administrative burden---reductions in information costs, improvements to outreach via omnichannel and digital marketing, and digital service design \citep{OMB-adminburden}---generally through \textit{encouragement} into treatment (takeup of services). 
\begin{example}[Takeup of social services and digital outreach]
In public programs such as Supplemental Nutrition Assistance Program (SNAP), many eligible individuals do not receive or retain benefits because of operational frictions in information, outreach, and benefits renewal (re-certification). These frictions are naturally modeled as barriers to take-up rather than as differences in treatment efficacy, which is fixed. Encouragements such as reminders, outreach, or navigation support are typically the only operational lever available to nonprofits and agencies to improve program enrollment among the likely-eligible. But positive average treatment effects hide groupwise heterogeneity where worse-off individuals may be less likely to act on nudges and enroll into beneficial treatment enrollment \citep{finkelstein2019take}.
\end{example}
Our SNAP case study later returns to this setting and asks whether disparities under naive targeting are driven primarily by treatment benefit or by responsiveness to encouragement. We show that naive personalization can generate substantial access fairness gaps because responsiveness to encouragement is less aligned with treatment benefit for nonwhite recipients.

Thus far, we have focused on modeling  \textit{physically randomized} encouragements such as those we have just described. However, our model can be extended to model the impacts of \textit{algorithmic advice} on \textit{humans-in-the-loop}. In this algorithmic advice setting, the covariate-conditional exclusion restriction is still plausible, but the ``overlap" assumption is violated: Randomized (or as-if randomized) algorithmic encouragement or nudge recommendations are typically fixed functions of covariates. Crucially, our model's decomposition of treatment take-up and effects on outcomes renders recent approaches based on robust optimization \citep{ben2021safe} less conservative. %If, as in the settings that we consider here, human decision-makers have wide discretion resulting in \textit{treatment overlap}, then rather than extrapolating the entire treatment effect function itself, we need extrapolate only the \textit{treatment response to the recommendation}, which can be more plausible. 

\begin{example}[Algorithmic advice]
    Doctors prescribe treatment on the basis of algorithmic recommendations \citep{lin2021does}, managers and workers combine their expertise to act on the basis of algorithmic decision support \citep{bastani2021improving}, and in the social sector, caseworkers assign individuals to benefit programs on the basis of recommendations from risk scores that support triage \citep{de2020case,green2019disparate,yacoby2022if}. 
\end{example}

% \paragraph{An audit-to-remedy perspective.}
A central managerial point is that disparities in encouragement settings can arise from different causal channels, including treatment efficacy and responsiveness to encouragement. Because these channels imply different operational responses, we advocate an audit-to-remedy workflow: first audit unconstrained encouragement policies by separately examining take-up and efficacy, then choose the appropriate intervention.
We study personalized decision problems in which the planner controls recommendations into treatment rather than treatment itself. Under a covariate-conditional no-direct-effect model of encouragement, policy value depends on two distinct objects: responsiveness to encouragement and treatment efficacy. This modeling distinction makes induced treatment take-up, rather than recommendation rates alone, the natural fairness target and yields tractable policy characterizations under budget and access constraints. In settings with deterministic algorithmic recommendations, the same model localizes overlap-robustness to the recommendation-response model rather than the downstream outcome model.
In the supervised-release application, local changes to simple interpretable scorecards can more than halve current disparities in supervision at comparatively little value loss. Data-driven encouragement policies offer additional degrees of freedom for improving fairness upon the status quo.

\section{Related Work}
In the main text, we briefly highlight the most relevant methodological and substantive work and defer additional discussion to the appendix.

\paragraph{Fairness analysis of optimal policy learning and responses to algorithmic advice}

\citet{chohlas2021learning} puts forth a consequentialist framework for parity-aware policy design, emphasizing elicitation of preferences on efficiency and resource parity in a contextual bandit problem. In our work, the planner controls recommendations rather than treatment itself and disparities in induced take-up introduce another objective. The elicitation mechanisms of \citet{chohlas2021learning} could similarly be used here to guide trade-offs on multiple objectives. \citet{ge2025rethinking} studies how to robustly improve the algorithmic fairness of classifiers under downstream human decisions without any information on compliance. In contrast, our later extension to algorithmic advice introduces differential causal effects of receiving treatment, and throughout we assume observational data access to prior human-made decisions in response to classifiers. \citet{mclaughlin2022algorithmic,gillis2021fairness} study the fairness of machine-assisted human decisions and develop a principal-agent model studying how algorithmic advice might shift beliefs in risk and preferences. In contrast, we are not micro-founded as to why disparities in compliance arise, which is an important direction of future work. Further, our assumptions are complementary and assume decision-makers are ``set in their ways" responding to algorithmic risk labels: empirically supported by prior studies \citep{pruss2023ghosting,imai2020experimental} in the criminal justice setting of our later case study. A rapidly growing literature in operations studies \textit{algorithmic advice}, at times from a behavioral focus (but not specialized to causal effects). \citet{grand2024best} also shares a focus on leveraging adherence to optimize \textit{advice} rather than \textit{decisions} - however they adopt a dynamic model with updating. %We focus on implications for optimizing \textit{causal} encouragements as well.

\paragraph{Optimal encouragement designs/policy learning with constraints.}
There is extensive literature on off-policy evaluation and learning, empirical welfare maximization, and optimal treatment regimes \citep{athey2021policy,zhao2012estimating,manski05,kt15}.
%cite{qiu2021optimal} study an optimal individualized encouragement design, though their focus is on optimal individualized treatment regimes with instrumental variables (IVs). %
 The most closely related work in terms of problem setup is the formulation of ``optimal encouragement designs" in \citet{qiu2021optimal}, but they focus on knapsack resource constraints. %
\citet{sun2021treatment} has studied uniform feasibility in constrained resource allocation, but without encouragement or fairness. \citet{ben2021safe} studies robust extrapolation in policy learning from algorithmic recommendation, exactly implicitly optimizing in the space of encouragements, but not fairness concerns. By introducing separate consideration of treatment take-up and efficacy, we can substantially reduce the conservativity of robust formulations and limit extrapolation to behavioral response to recommendations rather than unknown treatment effects. %Our later case study on supervised release leverages the considerable randomness in final treatment decisions for supervised release (decisions that are arguably less consequential than pretrial detention decisions and hence subject to wide discretion) so that we require robust extrapolation over only the first out of two stages. 

\paragraph{Fair pricing in operations: take-up (demand) vs. revenue}
We view our perspective on fairness constraints as translating the normatively important role of \textit{demand fairness} from fair pricing analysis in operations management to settings beyond revenue outcomes alone and our \textit{treatment access fairness} framework. \cite{cohen2022price} study revenue and other implications of different potential fairness notions in pricing, including price, demand, consumer surplus, and no-purchase valuation under a linear demand model. \cite{kallus2021fairness} study fairness in pricing, from a nonparametric, contextual perspective. Although fair pricing analysis often hinges specifically on the functional form of revenue and demand, its importance highlights the analogous importance of considering demand fairness in other decision-impact settings. %Demand fairness is analogous to \textit{treatment access fairness} in our general causal policy optimization framework. 

% \paragraph{Algorithmic advice in operations} 
\paragraph{Algorithmic fairness in operations}
Another rich line of work studies fairness in dynamic resource allocation \citep{manshadi2021fair,salem2026algorithmic,freund2023group,banerjee2023online}. Our encouragements model points out that allocating resources need not ensure that individuals benefit: potentially due to concerning persistent access or take-up gaps. \citet{garg2020dropping} study how access gaps to test score information can change normative policy recommendations, \citet{andrews2022modeling} studies access gaps in resource allocation, and \citet{garg2025heterogeneous,herd2019administrative} documents access gaps and administrative burdens in public-facing mechanisms.

\section{Encouragement model and identification}\label{sec-problemsetup}

We introduce the setup, key assumptions, and the resulting characterization of encouragement policies that underlies the analysis below, before discussing extensions via sensitivity analysis. The central modeling assumption is that recommendation affects outcomes only through treatment take-up. Under this encouragement model, policy value depends jointly on responsiveness to encouragement and treatment efficacy.

We work in the potential outcomes framework for causal inference \citep{rubin1980randomization}, which posits that each individual is endowed with a vector of potential outcomes, one for each value of treatment. %The so-called fundamental challenge of causal inference is that we only observe the realized potential outcome under the observed treatment, so that estimating the causal effects of decisions requires statistical imputations of what would have happened under another tretment. 
We define the following: 
\begin{itemize}
    \item recommendation flag
$R \in\{0,1\}$, where $R=1$ means \textit{encouraged}/\textit{recommended} (we use the terms \textit{encouragement }and \textit{recommendation} interchangeably);  
\item  treatment $T(r) \in \{0,1\}$, where $T(r)=1$ indicates that the treatment decision was $1$ when the recommendation was $r$; and 
\item outcome $Y(t(r))$, the potential outcome under encouragement $r$ and treatment $t$. 
\end{itemize}

% We are generally instead  interested in \textit{personalized recommendation rules}, described via the policy function $\pi(r\mid X) \coloneqq \pi_r(X)$, which gives the probability of assignment of recommendation $r$ to covariates $X$. 
We let $\pi$ denote a \textit{personalized encouragement recommendation rule}. In general, for theoretical characterizations of optimal policies, we allow $\pi$ to depend on both covariates and protected attributes, and write
\[
\pi(x,a):=P(R=1\mid X=x,A=a), \qquad \pi_r(x,a):=P(R=r\mid X=x,A=a),
\]
When discussing practically implementable policy learning, however, we typically restrict attention to covariate-only policies $\pi(x)$. We seek to evaluate and optimize the value under different candidate policies $\pi$, which differ from the distribution of encouragements in our observed data.
%We use the distinction between $\pi(x,a)$ and $\pi(x)$ to indicate whether protected attributes are available to the policy.

Finally, for the purposes of assessing algorithmic fairness, we define the group-level protected attribute information $A \in\{a,b\}.$ The protected attribute $A$ designates group information by which it may be relevant to decompose groupwise performance disparities, such as race, gender, or age. 
\paragraph{Our multi-objective criteria: treatment effectiveness and take-up.}
The average encouragement effect (AEE) is the difference in average outcomes if we refer everyone vs. no one, while the encouragement policy value $V(\pi)$ is the population expectation induced by the outcomes and treatment  with recommendations following the policy distribution. 
\[ AEE \coloneqq \E[Y(T(1)) - Y(T(0))], 
\qquad Y(\pi) \coloneqq \sum_{r \in\{0,1\}} \pi_r(X, A) Y(T(r)), 
\qquad 
V(\pi) \coloneqq \E[Y(\pi)]
\]
% We use the \textit{AEE} terminology instead of \textit{ITT} because the conventional first-stage intention-to-treat in ITT is actually our first-stage encouragement or recommendation.  
% Because algorithmic decision-makers may be differentially responsive to recommendations and treatment effects may be heterogeneous, the optimal recommendation rule may differ from the (infeasible) optimal treatment rule when constraints are taken into account or for simpler policy classes.

Regarding fairness, we are concerned about disparities in policy value and treatment take-up (resources or burdens) across different groups, denoted $A \in\{a,b\}$. (For notational brevity, we may generically discuss identification/estimation without additionally conditioning on the protected attribute.) Because the policy controls encouragement recommendations $R$ rather than treatment $T$ directly, fairness can be
evaluated on several margins. Later, our primary focus is introducing awareness of disparities in access, take-up or demand for treatment.
\begin{equation}
    \Delta_T(\pi)\coloneqq
\E[T(\pi) \mid A=a]-  \E[T(\pi)\mid A=b]
\end{equation}% $\Delta_T(\pi)\coloneqq\E[T(\pi) \mid A=a]-  \E[T(\pi)\mid A=b]$. 
This is the natural access disparity in encouragement settings, because the planner controls recommendation rather than treatment directly. Other disparity measures that are useful additional diagnostics include: groupwise differences in final outcomes,
treatment take-up, encouragement rates, and improvement from targeting --- we discuss these in the context of our case studies later on.%. In our notation, these are
% For binary outcomes, we interpret $Y(t(r))=1$ as the positive outcome. %When outcomes and treatments are binary, $Y\in\{0,1\},T\in \mathcal{T}$, where $\mathcal{T}=\{0,1\}$, we may further develop analogues of fair classification criteria. In the formal analysis, we define the policy value by the primary outcome objective
% \[
% % V(\pi) \coloneqq \E[Y(\pi)].
% \]

% \subsection{Multiple objectives are relevant for encouragement policies.}
%  formulated as: 
%  % We introduc÷e group-wise disparity measures in: 
% \begin{description}
% % \item[Access, take-up or demand for treatment.] $\Delta_T(\pi)\coloneqq
% % \E[T(\pi) \mid A=a]-  \E[T(\pi)\mid A=b]$
%     \item[Outcome levels.] $\E[Y(\pi) \mid A=a]-\E[Y(\pi) \mid A=b]$. 
%     \item[Encouragement spending or burden.] $\E[\pi(X)\mid A=a]-\E[\pi(X) \mid A=b]$.
%     \item[Improvement from targeting.]
% %Let $Imp_u\left(\pi^* ; 0\right) =\E[Y(\pi) - Y(0) \mid A=u]$
% $\E[Y(\pi) - Y(0) \mid A=a] -\E[Y(\pi) - Y(0) \mid A=b]$.
% \end{description}
% These quantities distinguish disparities in a policy's impacts on who ultimately benefits, 
 % who takes up treatment, who received encouragement, or improvement upon the prior status quo. %In principle, the decision-maker can have preferences across multiple of these measures. 

% Naive ITT-based targeting can obscure whether observed disparities arise from unequal treatment benefit or from unequal access to treatment once encouraged. 

In this paper, we first recommend auditing unconstrained policy optimization for disparities. Auditing can surface undesirable disparities, and we also consider multi-objective and budget-constrained policy optimization problem, where $B$ is a budget fraction of the population that can receive treatment:  
\begin{align*}
&\textrm{Auditing: } &&\Delta_T(\pi^*), \text{ where } \pi^* \in \arg\max \E[V(\pi)]\\
&\textrm{Fair policy optimization: }    &&\max \left\{ \E[Y(\pi)] - \lambda \abs{\E[T(\pi) \mid A=a]-  \E[T(\pi)\mid A=b]}  \colon \E[\pi(X)] \leq B \right\} 
\end{align*}
% We argue that it is necessary to \textit{model both impacts on outcomes, $V(\pi)$ and impacts on treatment}. 
The budget on encouragement is part of our default formulation since resource-constrained settings benefit the most from personalized targeting.
% : otherwise encouragements, which are often low-touch and may not cause harm, could be delivered to everyone.
Though the fairness-penalized and multi-objective version is practical, for theoretical analysis we often study the fairness-constrained problem. Since fairness is highly contextual, we generally don't recommend trying to impose all disparity measures as constraints simultaneously. Instead, we use disparities as diagnostic measures to identify different gaps and to inform potentially different operational remedies, as illustrated in our later case studies.

\paragraph{Causal identification and estimation: key assumptions}
We discuss identification and estimation on the basis of the following recommendation propensity $e_r$, treatment propensity $p_{t\mid r}$, and outcome model $\mu_t$: 
\begin{align*}
e_r(X,A) &\coloneqq P(R=r\mid X,A),
\qquad 
p_{t\mid r}(X,A)\coloneqq P(T=t\mid R=r,X,A),\;\;\\
\mu_t(X,A) &\coloneqq \E[Y\mid T=t, X,A] \text{  (asn.} \ref{asn-exclusionrestriction})
\end{align*}

An important object for interpretation, appearing in the policy learning characterizations, is the effect of encouragement on treatment take-up. 
$$\kappa(x,a)
\coloneqq  p_{1\mid 1}(X,A) - p_{1\mid 0}(X,A) $$

The following causal assumptions comprise our primary model, and our later extension to algorithmic advice is based on sensitivity analysis to some of them. 

\begin{assumption}[Consistency and stable unit treatment values (SUTVA)]\label{asn-consistency}
    $Y_i = Y_i(T_i(R_i)).$
\end{assumption}

\begin{assumption}[Exclusion / no direct effect of recommendation]\label{asn-exclusionrestriction}
$Y_i(t(r)) = Y_i(t), \forall i,t,r$
% $Y(T(R)) \indep R \mid T,X,A.$
\end{assumption} 
% \begin{assumption}[Unconfoundedness]\label{asn-unconfoundedness}
% $Y(T(R)) \indep T(R)\mid X,A.$
% \end{assumption}

\begin{assumption}[Randomized recommendations]\label{asn-randomized-recommendations}
$R \indep \{T(0),T(1),Y(t(0)),Y(t(1)) : t\in\mathcal{T}\}\mid X,A.$
\end{assumption}

\begin{assumption}[Treatment unconfoundedness given recommendation]\label{asn-unconfoundedness}
$(Y(1),Y(0)) \indep T \mid R,X,A.$
\end{assumption}

\begin{assumption}[Stable responsivities under new recommendations]\label{asn-stableresponsivity}
$P(T=t\mid R=r,X)$ remains fixed from the observational to the future dataset. 
\end{assumption}
Our key assumption beyond standard causal inference assumptions is \cref{asn-exclusionrestriction}, the conditional exclusion restriction/no direct effect assumption that the encouragement/recommendation has no causal effect on the outcome beyond its effect on the treatment. This assumption is typically mild for encouragements, although it requires that all of the covariate information that is informative of downstream outcomes is measured. %Although this assumption may not exactly hold in all applications, stating it is also a starting point for sensitivity analysis under violations of it \citep{kallus2018confounding,kallus2019interval}. 

% \Cref{asn-exclusionrestriction} implies that the heterogeneous encouragement effect decomposes separately into a treatment-takeup and 
We begin with \Cref{asn-unconfoundedness} first to highlight the two distinct mechanisms at play, compliance and treatment response, and discuss potential violations later on. 

\Cref{asn-stableresponsivity} is a structural assumption that limits our method to most appropriately reoptimize over small changes to existing algorithmic recommendations. For example, $p_{0\mid 1}(x)$ (disagreement with algorithmic recommendation) could be a baseline algorithmic aversion. Not all settings are appropriate for this assumption. We do not assume microfoundations on how or why human decision-makers deviate from algorithmic recommendations; rather, we take these patterns as given. %Again, we can relax this assumption with sensitivity analysis. %Application-specific scalarizations can be formed later from the separately estimated moments $\E[Y(\pi)]$, $\E[T(\pi)]$, or $\E[\pi(X,A)]$ when substantively useful. %

Under these assumptions, we obtain causal identification of the policy value via outcome modeling (sometimes called regression adjustment). Causal identification rewrites the causal estimand in terms of probability distributions estimable from data. The argument follows by applying the conditional exclusion restriction and consistency but, crucially, does not rely on overlap. 
% We also first consider a special type of fairness constraint, resource parity, and characterize optimal decisions. 

\begin{proposition}[Regression adjustment identification]\label{prop-identification}
\begin{align*}
\E[Y(\pi)]
&\textstyle = \sum_{t \in \tspace, r\in\{0,1\}}
\E[\pi_r(X) \mu_{t}(X)p_{t\mid r}(X) ] .
\end{align*} 
\end{proposition}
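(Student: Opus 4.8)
The plan is to unwind the definition of the policy value $V(\pi) = \E[c(\pi, T(\pi), Y(\pi))]$ by iterated expectations, peeling off one layer of randomness at a time — first the recommendation drawn from $\pi$, then the treatment realized through the responsivity model $p_{t\mid r}$, and finally the cost-of-outcome through the outcome model $\mu_t$ — and to invoke the stated assumptions at exactly the points where each conditional expectation must be replaced by an observable regression function. Throughout I will suppress the protected attribute $A$ as the excerpt permits, since conditioning on it changes nothing structural.

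\textbf{Step 1 (condition on $X$ and expand over $r$).} Write $V(\pi) = \E\big[\E[c(\pi,T(\pi),Y(\pi))\mid X]\big]$. Since $\pi$ assigns recommendation $r$ with probability $\pi_r(X)$ independently of the potential outcomes (the policy is a function of $X$ and exogenous randomization only), the inner expectation splits as $\sum_{r\in\{0,1\}} \pi_r(X)\, \E[c(r, T(r), Y(T(r)))\mid X]$. This is the step that introduces the $\pi_r(X)$ factor and the sum over $r$.

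\textbf{Step 2 (peel off the treatment layer).} For fixed $r$, further condition on $T(r)$: $\E[c(r,T(r),Y(T(r)))\mid X] = \sum_{t\in\tspace} P(T(r)=t\mid X)\,\E[c_{rt}(Y(t))\mid T(r)=t, X]$. By \cref{asn-unconfoundedness} (and consistency, \cref{asn-consistency}, to pass between $Y(T(r))$ and $Y(t)$ on the event $T(r)=t$) the conditional expectation of the outcome term does not depend on the conditioning event $T(r)=t$ beyond $t$ itself; by the stable-responsivity/consistency setup the quantity $P(T(r)=t\mid X)$ is identified by the observable $p_{t\mid r}(X) = P(T=t\mid R=r,X)$ — this is exactly where \cref{asn-stableresponsivity} (and the randomization of $R$ guaranteed by \cref{asn-overlap}) is used to equate the interventional treatment distribution with the observed conditional one.

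\textbf{Step 3 (peel off the outcome layer via the exclusion restriction).} It remains to identify $\E[c_{rt}(Y(t))\mid T(r)=t,X]$ with the observable regression $\mu_t(X)$. By \cref{asn-exclusionrestriction}, $Y(T(R))\indep R\mid T,X$, so conditioning additionally on the realized recommendation does not change the outcome law; combined with \cref{asn-unconfoundedness} and consistency this gives $\E[c_{rt}(Y(t))\mid T(r)=t,X] = \E[c_{rt}(Y)\mid T=t,X] = \mu_t(X)$, matching the display in the problem setup where $\mu_{r,t}(X,A)=\mu_t(X,A)$. Substituting back through Steps 2 and 1 yields $V(\pi) = \E\big[\sum_{t\in\tspace,\,r\in\{0,1\}} \pi_r(X)\,\mu_t(X)\,p_{t\mid r}(X)\big]$, and linearity of expectation moves the sum outside, giving the claimed identity.

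\textbf{Main obstacle.} The only delicate point is the bookkeeping in Step 2–3: one must be careful that the cost function depends on $r$, $t$, and $y$ jointly, so the exclusion restriction is genuinely needed to strip the $r$-dependence of the \emph{outcome} conditional (not merely unconfoundedness), and one must correctly handle the nested potential outcome $Y(T(r))$ by applying consistency on the event $\{T(r)=t\}$ before invoking independence. The decomposability assumption \cref{asn-decomposable-costs} is not strictly needed for this particular identity but keeps the $\mu_t$ and $p_{t\mid r}$ pieces cleanly separable; I would remark on that rather than rely on it.
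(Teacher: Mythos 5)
Your proposal is correct and follows essentially the same route as the paper's proof: iterated expectations over $X$ and the policy, factorization of the treatment indicator from the outcome term via unconfoundedness, and identification of $\E[c_{rt}(Y(t))\mid T=t,X]$ with $\mu_t(X)$ via the conditional exclusion restriction and consistency. The only cosmetic difference is that the paper conditions on $R=r$ directly (so $p_{t\mid r}(X)$ appears immediately) where you condition on $X$ alone and then invoke randomization of $R$ to equate $P(T(r)=t\mid X)$ with $p_{t\mid r}(X)$; your remark that decomposability of costs is not needed for this identity is also accurate.
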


We first also assume overlap in recommendations and treatment. %Later, however, we give robust methods for relaxing this assumption, leveraging our finer-grained characterization.
\begin{assumption}[Overlap]\label{asn-overlap}
    $\rho_r \leq  e_r(X,A) \leq 1-\rho_r; \;\; \rho_t \leq  p_{t\mid r}(X,A) \leq 1-\rho_t \text{ and } \rho_r, \rho_t > 0 $.
\end{assumption}

We consider two problem settings, which model different situations and differ based on the strength of the overlap assumptions. 
\begin{setting}[Setting 1: Randomized encouragement]\label{setting-encouragement}
    The encouragement $R$ is (as-if) randomized and satisfies overlap (\Cref{asn-overlap}). 
\end{setting}Then, $R$ can be interpreted as the ITT or prescription, whereas $T$ is the actual realization thereof. \Cref{setting-encouragement} models nonadherence situations where decision-makers can target encouragements but not direct receipt of the treatment itself. 

\paragraph{Our perspective: Fairness constraints in causal intention-to-treat (ITT) analyses.}
We consider covariate-conditional intention-to-treat causal effect under the assumption of treatment unconfoundedness. We evaluate encouragement policies by their population-level value and by
groupwise impacts under the treatment behavior they induce. Thus, treatment
take-up is central for fairness constraints, while utility is assessed through
policy value; we do not pursue IV or principal-stratum estimands.

% \paragraph{Our perspective: Fairness constraints in causal intention-to-treat (ITT) analyses. }
% To summarize our distinctive perspective, we focus on deriving estimators for ITT analyses with an eye to relevant fairness constraints. Our interest is in imposing separate desiderata on treatment realizations under noncompliance, but we do not conduct IV inference: We assume that unconfoundedness holds. Our analysis essentially considers simultaneously two perspectives in the constrained optimization: 1) viewing treatment as a potential outcome of a recommendation treatment, i.e., $T(R)$, and 2) taking an ITT stance on the causal effects of treatment on outcomes, i.e., $Y(T)$, even though treatment is not controllable. Marginally, the first perspective yields estimates of disparities and is relevant for estimating fairness constraints, while the second is relevant for estimating the utility objective. %
% Importantly, the quantities that we estimate are not on joint events of takeup and final outcome utility (in contrast to principal stratification). Rather, we assess personalized policies by their population-averaged utility and fairness measures. 
%  Since fairness is highly contextual, we generally don't recommend trying to impose all disparity measures as constraints simultaneously. Instead, we use them as diagnostic measures to identify different kinds of gaps and to inform potentially different remedies, as illustrated in our later case studies.

\subsection{Policy characterizations}

% \subsection{Optimal policy characterizations}
Next, we focus on basic characterizations of the optimal encouragement policy under a constrained resource budget, where targeting becomes particularly useful. %We focus on the class of threshold policies under global budget constraints. (\textit{Targeting} encouragements is only needed when there is a budget constraint, as it is rare for encouragements to have negative treatment effects). 
% We highlight that explicitly \textit{disaggregating} effects on enrollment into treatment enables isolating the source of disparities into two substantively \textit{distinct} mechanism: outreach vs. interventional effectiveness. We illustrate with a preview of our case study on SNAP outreach how unfairness surfaces in real-world scenarios. We develop analytical fairness decompositions that highlight key diagnostic metrics that \textit{lower-bound} disparities in policy value. These metrics can be estimated from data --- we highlight these in the case of the SNAP case study. 
% Outline: 
% Overview: Naive ITT targeting. 
% Fairness decomposition 
% Case study: Overview of data source. 
% Case study: descriptives. 
% Case study: illustration of decomposition and results. 
We begin by considering \textit{naive targeting on the intention-to-treat effect}, i.e. viewing the encouragement as a treatment itself. The optimal policy thresholds on what we call the \textit{encouragement score}: 
\begin{align*}
    s(X, A)&\coloneqq\E[Y \mid R=1, X, A]- \E[Y \mid R=0, X, A]\\
    &=\E[Y(T(1))-Y(T(0)) \mid X, A]. 
    \tag{under \cref{asn-consistency,asn-exclusionrestriction}} \\
        &=\E[(Y(1)-Y(0))(T(1)-T(0))  \mid X, A]. 
    \tag{ \cref{asn-exclusionrestriction}}
\end{align*}
Define a generic measure of treatment efficacy $\theta(X,A)$, akin to a conditional Wald estimand, that can be used to define a generic decomposition of the encouragement score as follows:
% $$
% and the encouragement score can be decomposed as follows: 
$$\theta(X,A) \coloneqq \frac{s(X,A)}{\E[T(1)-T(0)  \mid X, A]}, \;\; s(X,A) \coloneqq \theta(X,A)\,(p_{1\mid 1}(X,A) -p_{1\mid 0}(X,A)).$$ 

% Under \Cref{asn-unconfoundedness}, 
We define the heterogeneous treatment effect $\tau(X,A) = \E[Y(1) - Y(0) \mid X,A].$ Crucially, \Cref{asn-unconfoundedness} enables interpretation of the treatment efficacy part of the decomposition of $s(X,A)$ as a heterogeneous treatment effect, i.e. $\theta(X,A) = \tau(X,A)$, so that 
\begin{align*} s(X,A) = \E[Y(1)-Y(0)\mid X,A]\,(p_{1\mid 1}(X,A) -p_{1\mid 0}(X,A))=\tau(X,A)\kappa(X,A)
\tag{\Cref{asn-unconfoundedness}, \Cref{asn-exclusionrestriction}}
\end{align*} We study decompositions of the encouragement score to interpret optimal policies, where naive ITT targeting by default optimizes it.%; throughout the paper we refer to such approaches as . 
% in preamble:
% \usepackage{booktabs,tabularx,array}

\begin{proposition}[Quantile-threshold optimality]Assume the distribution of $s(X,A)$ is continuous (no atoms). Consider optimizing the ITT effect of encouragements under a global encouragement budget: 
% \begin{align*}
$\pi^*_\beta(x,a) \in \arg\max_\pi \{\E[Y(\pi)] \colon \E[\pi(X,A)] \leq \beta \} $
%\tag{budgeted naive encouragement policy }
% \end{align*}
Let $t_\beta \coloneqq \max\!\left\{\inf\{z : P(s(X,A)\le z)\ge 1-\beta\},\,0\right\}$.
Then the optimal policy is a threshold policy at the top-$\beta$ quantile of the encouragement score (so long as it is positive):
\begin{equation}
    \pi^*_\beta(x,a) \;=\; \mathbf{1}\!\left\{\, s(x,a) \ge t_\beta\,\right\}\label{eqn-thresh-policies}
    % \tag{}
\end{equation}
\end{proposition}

Threshold policies on encouragement score solve budget-constrained naive targeting; later we study properties of such threshold policies.

\subsection{Beyond the base assumptions }

\paragraph{Beyond recommendation overlap (\Cref{asn-overlap})}
Although we begin with these base assumptions, we can handle violations of certain of these assumptions with separate robustness checks. The most central of these robustness analyses is our extension to \textit{algorithmic advice} by allowing for violations of \Cref{asn-overlap}, for which we discuss robust optimization 
\begin{setting}[Setting 2: Algorithmic recommendation]\label{setting-algorec}
       The algorithmic recommendation $R$ is the output of a predictive model and does not satisfy \Cref{asn-overlap}. 
\end{setting}
We later extend our methods to the second setting, where $R$ does not satisfy overlap in recommendation but there is sufficient randomness in human decisions to satisfy overlap in treatment. For example, such recommendations arise from binary \textit{high risk}/\textit{low risk} labels of classifiers since decisions in consequential domains are rarely automated but rather are used to inform humans-in-the-loop, who decide whether to assign treatment. 

\paragraph{Beyond treatment unconfoundedness (\Cref{asn-unconfoundedness})}
It may look strange to reference the heterogeneous treatment effect in a paper on encouragement designs/adherence. The key is our \Cref{asn-unconfoundedness}, on unconfounded treatments. Other papers making similar assumptions include \citet{rahier2021individual} in e-commerce, \citet{imai2010identification} in mediation analysis, and \citet{rudolph2020defining}. %We do so to focus our discussions on the two different substantive mechanisms: how people take-up treatment vs. how effective treatment is for them. 

\begin{table}[t]
\centering
\small
\setlength{\tabcolsep}{5pt}
\renewcommand{\arraystretch}{1.08}
\begin{tabularx}{\linewidth}{
>{\raggedright\arraybackslash}p{2.8cm}
>{\centering\arraybackslash}p{3.2cm}
>{\raggedright\arraybackslash}X}
\toprule
Framework & Score decomposition ($s(x,a)=$) & Interpretation of multiplier \\
\midrule
Unconfounded treatments (A4)
&
$\displaystyle% s(x,a)=
\kappa(x,a)\tau(x,a)$
&
Under \Cref{asn-unconfoundedness}, $\displaystyle \tau(x,a)=\E[Y(1)-Y(0)\mid X=x,A=a]$,
the conditional treatment effect.
\\[0.2em]

IV + monotonicity
&
$\displaystyle% s(x,a)=
\kappa(x,a)\tau_c(x,a)$
&
$\displaystyle \tau_c(x,a)=\E[Y(1)-Y(0)\mid T(1)>T(0),X=x,A=a]$,
the conditional LATE / CLATE.
\\[0.2em]

Wang--Tchetgen Tchetgen %(A5)
&
$\displaystyle %s(x,a)=
\kappa(x,a)\theta(x,a)$
&
$\displaystyle \delta_W(x,a)=\Delta_Y(x,a)/\kappa(x,a)$,
the conditional Wald estimand; under A5.a or A5.b it coincides with the conditional ATE.
\\
\bottomrule
\end{tabularx}
\caption{Alternative interpretations of $\theta(x,a)$ in the encouragement score
$s(x,a)$. 
% The score is unchanged across rows; only the interpretation of the multiplier changes. 
% In the binary-treatment causal IV model with monotonicity,
% $\kappa(x,a)=\Pr(T(1)>T(0)\mid X=x,A=a),$
% the conditional complier share. 
Full assumptions are given in the appendix.}
\label{tab:alt-identification-interpretations}
\end{table}

% \begin{remark}[Alternatives to assuming treatment unconfoundedness] 
% % \az{to edit }

\Cref{asn-unconfoundedness} may be violated due to the role of private information governing how humans make decisions. If one objects to our treatment unconfoundedness assumption, there are a few options. An alternative route available when the recommendation is truly randomized is to interpret the recommendation \(R\) as an instrumental variable for treatment \(T\). In that case, a similar decomposition holds but each of the treatment-take-up vs. treatment-effectiveness objects has a slightly different interpretation, outlined in \Cref{tab:alt-identification-interpretations}. See \Cref{apx-addldisc-identification} in the Appendix for more details.
% \end{remark}

In our Setting 2 (Algorithmic advice), such alternative identification approaches are not available. However, our paper focuses on data- and covariate-rich settings. It may be plausible that observed covariates explain most, but not all of selection into treatment - so that sensitivity analysis approaches \citep{kallus2021minimax,yadlowsky2018bounds} can robustly optimize over ``last-mile" unobserved confounders. We illustrate this approach in our algorithmic advice case study on supervised release in \Cref{subsec-supervised-release} and how to conduct such a sensitivity analysis in the Appendix.

\section{Audit: are disparities driven by take-up or efficacy?}\label{sec-fairness-analysis}

In encouragement designs, value is generated through two distinct causal channels:
encouragement changes treatment uptake, and treatment uptake changes outcomes.
As a result, a policy that targets on the individual encouragement effect can generate
group disparities for two substantively different reasons. First, groups may differ in
treatment benefit. Second, groups may differ in responsiveness to encouragement. These mechanisms have different normative
interpretations and different operational remedies, so we analyze them separately.

We proceed in three steps. %First, we introduce these groupwise notions of fairness.
First, we discuss how the relationship between treatment benefit and compliance is central to quantifying the improvements of personalized targeting, and therefore why separately analyzing treatment take-up and efficacy is crucial to analyze mechanisms underlying disparities in downstream improvements. 
% we explain why naive ITT targeting can improve on untargeted encouragement:
% the gains are largest when treatment benefit and compliance are positively aligned.
Second, we show that even when treatment effects are distributionally identical across
groups, naive ITT targeting can still generate persistent disparities through
compliance-only covariates. Finally, we illustrate the value of auditing both take-up and efficacy separately in a real-world case-study where we highlight that such phenomena do appear empirically.

\subsection{The joint dependence of treatment effects and compliance governs policy value improvements from targeting.}

% Under unconfoundedness, naive ITT targeting ranks individuals by the encouragement score
% \[
% s(X,A)=\tau(X,A)\kappa(X,A),
% \]
% the product of the treatment effect and the compliance effect. Under an IV formulation,
% the analogous score is the corresponding local encouragement effect. This score makes clear
% why ITT targeting can improve value: 
% Targeted encouragements direct encouragements
% toward individuals who \textit{both} benefit from treatment and are responsive to the nudge; but there can be groupwise heterogeneity in both, resulting in different next steps when seeking to improve observed disparities in value. 

To see why it is important to distinguish between treatment efficacy and treatment take-up, consider a healthcare setting where patients are prescribed an overall beneficial medication, whose effects may nonetheless vary, but is potentially burdensome to obtain and administer. Suppose that a personalized prescription policy yields lower encouragement value for a particular group. This disparity could arise because the treatment is less effective for that group, or because patients are less likely to adhere to the prescription when encouraged. These two mechanisms imply fundamentally different responses: differences in treatment efficacy call for revisiting the underlying science of the treatment, whereas differences in take-up point to operational interventions such as adherence support, outreach design, or access improvements.

Targeting is most effective when individuals who benefit most are also those most responsive to encouragement. But there can be groupwise heterogeneity in both. We breakdown the \textit{groupwise-improvement from personalized ITT targeting} to highlight different sources of disparities. Choose $\pi_0 = 0$, i.e. encourage no-one. For example, this could be the status quo without any encouragements.
Then, for a group $A=a$, we can define the improvement and a decomposition into marginal effects and the \textit{conditional covariance} between compliance and treatment efficacy scores:
\begin{align*}
&{Imp}_a(\pi^*;0)
\coloneqq 
\E[Y(\pi)-Y(0) \mid A=a] 
=\mathbb{E}\!\left[s_a\,\mathbb{I}\{s_a\ge t^*\}\mid A=a\right] \nonumber
\\
&= 
P\left(s_a \geq t^* \mid A=a\right)\left(E\left[\tau_a \mid s_a \geq t^*, A=a\right] E\left[\kappa_a \mid s_a \geq t^*, A=a\right]+\operatorname{Cov}\left(\tau_a, \kappa_a \mid s_a \geq t^*, A=a\right)\right)%\label{eqn-improv-diagnostic}
\end{align*}

% Let the induced within-group budget be $\bar F_a(t^*):=P(s_a\ge t^*\mid A=a)$.

% The improvement at a threshold policy is driven by marginal levels of thresholded $\tau, \kappa$ as well as their conditional covariance: 
% \begin{equation}{Imp}_a(\pi^*;0) = 
% % \underbrace
% {\mathbb{E}[\tau_a\mid s_a\ge t^*]\ \mathbb{E}[\kappa_a\mid s_a\ge t^*]}%_{\text{marginal term}}
% +\operatorname{Cov}\!\Big(
%    \tau_a - \mathbb{E}[\tau_a\mid s_a\ge t^\star],
%    \ \kappa_a - \mathbb{E}[\kappa_a\mid s_a\ge t^\star]
%    \,\Big|\ s_a\ge t^\star\Big)\label{eqn-improv-diagnostic}
%    \end{equation}

The conditional covariance term reflects the contribution of the joint dependence between treatment effects and compliance. It is zero within the encouraged stratum if treatment benefit and compliance are uncorrelated, and positive when individuals who benefit more from treatment also tend to be more responsive to encouragement. Thus, targeting is especially valuable when treatment efficacy and responsiveness are aligned. More generally, groups can differ in the gains they derive from personalized targeting because of differences either in the marginal levels of treatment benefit and compliance or in the alignment between them.

% The conditional covariance term reflects the contribution of \textit{joint dependence} of the treatment effect and compliance effects. It is zero (within the encouraged stratum) iff the treatment effect and compliance are uncorrelated, and positive when higher-benefit units also tend to comply more (as a ``win-win" situation). Intuitively, targeting encouragements is particularly beneficial when the people who benefit most are also the ones where encouragement nudges into treatment. Misalignment between effectiveness and nudgeability, surfacing as low or even negative dependence, suggest that encouragements need to be redesigned to achieve better targeting.  

\paragraph{A stylized access-driven model of encouragement disparities.}
% In encouragement designs, the value of targeting someone is a product of two distinct objects: how much encouragement changes treatment uptake, and how much treatment changes outcomes. %A disparity in the product need not mean one group benefits more from treatment; it may instead reflect unequal access or unequal ability to comply once encouraged. 
Observed disparities in encouragement effects or improvements from personalized targeting can arise under very different mechanisms. We give a \textit{possibility result} of low value-fairness trade-off arising due to naive ITT targeting exploiting ``compliance proxies", covariates that affect compliance alone but not treatment effects. 

% To make this concrete, consider a simple setting in which the protected attribute has no direct effect on either compliance or the heterogeneous treatment effect, but groups differ in the distribution of covariates that matter for compliance alone. Then treatment benefits can be identical across groups even though encouragement-induced treatment take-up differs systematically by group.
To concretize, consider a simple model with covariates partitioned as $(X^s, X^\kappa, X^\tau)$ where $X$ is a shared factor affecting both compliance and heterogeneous treatment effect, $X^\kappa$ modifies compliance alone, and $X^\tau$ modifies the heterogeneous treatment effect alone. The protected attribute $A$ has no direct effect on compliance or the heterogeneous treatment effect, but $A$ impacts the distribution of a compliance-relevant covariate $X^\kappa$, i.e. $X^\kappa \mid A$ differs. 
% Consider the following model: 
% \begin{align}
% &X = (X^s, X^\tau, X^\kappa), \;\;\; X^s \sim N(0,1),\;\;\;  X^\tau \sim N(0,1), \;\;\; X^\kappa \mid A=g \sim N(\mu_g,1)%, \;\;\; X^\kappa \mid A=1 \sim N(\mu_1,1)
% \label{eqn-simple-gaussian-model}\\
%     & \tau(X,A) = \log(1+\exp(\beta_s X^s + \beta_\tau X^\tau )),\qquad 
%     % \\
%     p_{1\mid 0}(X,A) 
%     % &
%     = \bar{p}_0, \quad \kappa(X,A) = (1-\bar{p}_0) \Phi(\theta^\kappa X^\kappa + \theta^s X^s) %+ \epsilon_\kappa )%, \qquad\epsilon_\kappa \sim N(0,0.75^2) 
% \end{align}
  % Importantly, in the simple model above, the protected attribute $A$ has no direct effect on compliance or the heterogeneous treatment effect. However, $A$ impacts the distribution of a compliance-relevant covariate $X^\kappa$. 
  
To interpret this example, $X^\kappa$ could be zip code that affects location and therefore distance to medical services and adherence. With other health information about a person, zip code wouldn't plausibly biologically affect treatment effectiveness, but could affect adherence/compliance. Such compliance differences alone can generate persistent disparities in encouragement and treatment take-up. 
Thus, naive targeting may reward ease of nudging rather than treatment benefit, especially when it exploits covariates that proxy for access or adherence rather than efficacy. The resulting disparities can be normatively concerning when lower responsiveness reflects structural access barriers rather than lower need. Naive ITT targeting favors individuals who are easier to reach, not necessarily those for whom treatment is most valuable, and therefore could entrench structural disparities. 

In \Cref{apx-addldisc-fairnessanalysis} we analytically characterize the favorable fairness-value tradeoff in this instance. A simple proxy-blind targeting approach, i.e. optimizing over rules that do not use compliance proxy information, nearly eliminates group-wise encouragement disparity from $0.1232$ to $0.0088$ with a modest value loss of $0.0031$ (1.4\%). 
% We first consider an simple model where unfairness arises due to differences in group averages among predictors of compliance, even though treatment effects are distributionally identical in group. %This models a setting, for example, where group identity has no direct impact on heterogeneous treatment effects. However, one group may have a different distribution on covariates that are informative of compliance only --- driving disparities in encouragement scores and naive ITT-based targeting. 

% We introduce this stylized example to highlight how naive ITT targeting can perpetuate disparities arising from treatment take-up --- which often has a different normative remedy related to operational delivery, rather than the ``fundamentals" of treatment effectiveness embedded in the domain, such as federal policy guidelines in the later case of SNAP, program design, or medication, which are difficult to improve with operational delivery changes alone. We view this as a \textit{possibility} result: disparities under naive ITT targeting can arise entirely from differential compliance response, even when treatment benefits are identical across groups, such that fairness-value tradeoffs are low. 
This stylized example shows that disparities under naive ITT targeting can arise entirely from differential compliance, even when treatment benefits are identical across groups, and therefore that fairness adjustments could nearly eliminate such disparities at potentially low tradeoff in overall value. Next, we apply such diagnostics on a real encouragement setting, where we see similar phenomena show up in real-world data. 
\subsection{Case study: Text message reminders for SNAP recertification}
We illustrate the above diagnostic analysis of compliance and treatment benefit in a case study drawing on a randomized-controlled trial of text message reminders for SNAP recertification. We have two purposes: first, we ask whether racial disparities in the encouragement score \(s(X)=\tau(X)\kappa(X)\) are driven primarily by treatment benefit or by compliance. Second, we illustrate how these differences translate into disparities under the actual threshold policies induced by naive ITT targeting.
\paragraph{Background}
Each year, households receiving SNAP (food assistance) must complete a short \emph{recertification interview} to verify continued eligibility. Missing this interview results in termination of benefits, even for households that otherwise remain eligible. We study a pilot program in San Francisco in which clients could \emph{opt in} to receive a text-message reminder about the recertification deadline \citep{homonoff2021program}. The reminder encouraged participants to complete the required interview—our effective \emph{treatment}—so that their benefits would continue. Since many who fail to recertify are in fact eligible, increasing interview attendance often improves benefit continuity and household welfare. 

We reinterpret the text message reminder as an \emph{encouragement} \(R\) affecting the take-up of treatment \(T\) (interview completion), which in turn influences the outcome \(Y\) (next-year SNAP benefits). We focus on heterogeneity in both the reminder’s compliance effect and the treatment effect of interview completion, with particular attention to racial disparities in their alignment. 

% Text reminders are a low-cost operational lever that can reduce administrative burdens, but limited budgets may constrain how widely they are deployed. Our analysis evaluates how to target such reminders to maximize overall benefit take-up while considering fairness in access and impact across groups.

% \paragraph{What this case study is designed to diagnose.}
% The theory above showed that naive ITT targeting can be unfair for two different reasons: groups may differ in treatment benefit \(\tau(X)\), or they may differ in compliance with the encouragement \(\kappa(X)\). These mechanisms imply different interpretations and different remedies. 

\paragraph{Step 1: Descriptive diagnostics: heterogeneity in treatment benefit and compliance, and their alignment.}
We begin by comparing the groupwise distributions within each race (binarized to white $A=0$ or nonwhite $A=1$) in \Cref{fig:sfhsa-density-race} of the heterogeneous treatment effect $\tau(X)$ the heterogeneous compliance effect $\kappa(X)$ and the encouragement score which is their product and which gives the ranking for optimal budget-constrained encouragement targeting rules. %In a setting with fixed costs of recommendation and treatment, the default approach of viewing the recommendation \textit{as} the treatment, i.e., an optimal unconstrained encouragement regime, sorts by the latter score. 

\Cref{fig:sfhsa-density-race} shows these descriptive diagnostics. The first plot shows the treatment effect heterogeneity: The distribution is wider for nonwhite than for white beneficiaries; i.e., it has more mass at higher magnitudes of $\tau$ and less at lower magnitudes of $\tau$. However, we see the \textit{opposite} effect on compliance: While the results are relatively weak, they are bimodal, with a substantially greater probability density at higher compliance treatment effects for white than for nonwhite beneficiaries. One reason for this could be the language barrier for non-English-speaking applicants. The last plot shows the heterogeneity in the encouragement effect: the distribution of the \textit{heterogeneous encouragement effect} for white beneficiaries is wider than that for nonwhite ones. These diagnostics show that group-level differences in nudgeability dominate in disparities in the encouragement score. 

% Overall, heterogeneity in treatment effects and compliance indicates room for improvement from personalized targeting. However, disparities in ranking by encouragement score are driven by disparities in compliance. %, indicating that resource-constrained optimal rules would tend to target outreach to white individuals because of their higher ``nudgeability". 

Another source of potential disparities in improvement from personalized targeting arises from \textit{differential alignment} between treatment take-up and efficacy. We quantify this alignment with the Spearman's $\rho_{SP}$ rank correlation coefficient. %In this setting, we find that treatment efficacy is broadly correlated with takeup: Higher treatment effects are rank correlated with higher compliance effects for both groups. 
% However, the \textit{strength} of this relationship, quantified via the Spearman's $\rho_{SP}$ rank correlation coefficient, varies across groups, leading to disparities in potential improvements from targeting. 
Among white beneficiaries only, the rank correlation coefficient between heterogeneous treatment and compliance effects is $0.68$, while it is $0.45$ for nonwhite beneficiaries. While both correlations are statistically significantly positive, the relationship is \textit{weaker} for nonwhite than for white beneficiaries. This means that non-whites who benefit the most from treatment may not be the most responsive to nudges into it. Spearman correlation is not itself the conditional covariance term in the improvement decomposition---it is rank-based, scale-free, and not conditioned on selection---but it is a useful threshold-free diagnostic proxy for disparities in policy value and/or improvement due to differential alignment.
%This echoes the improvement decomposition above: targeting is most productive when treatment benefit and compliance are aligned. As a scale-free diagnostic of such alignment, we report the within-group Spearman rank correlation between \(\tau(X)\) and \(\kappa(X)\). It is \(0.68\) for white beneficiaries and \(0.45\) for nonwhite beneficiaries, indicating weaker alignment for nonwhite households. 

\begin{figure}
    \centering
    \includegraphics[width=0.24\linewidth]{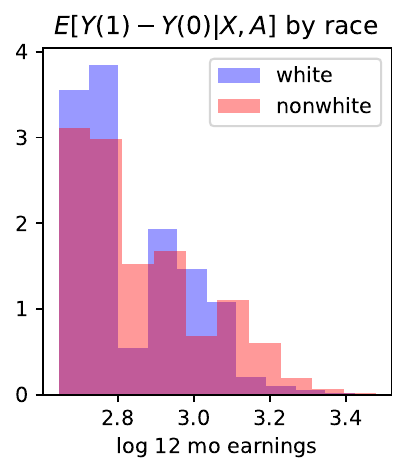}\includegraphics[width=0.33\linewidth]{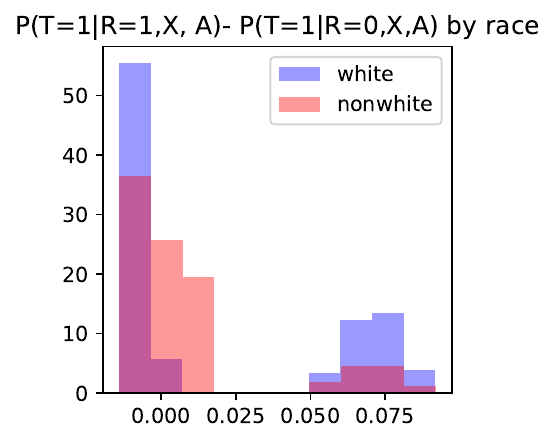}\includegraphics[width=0.33\linewidth]{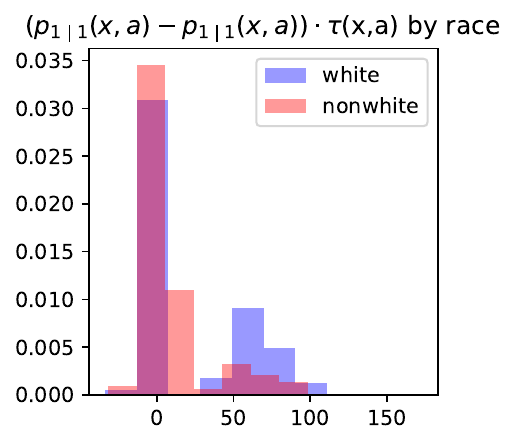}
    \caption{SNAP recertification case study: Heterogeneous treatment effects $\tau$, compliance, and their product (heterogeneous encouragement effects), density distribution plots by race.}
    \label{fig:sfhsa-density-race}
\end{figure}

\begin{figure}
    \centering
      \includegraphics[width=0.33\linewidth]{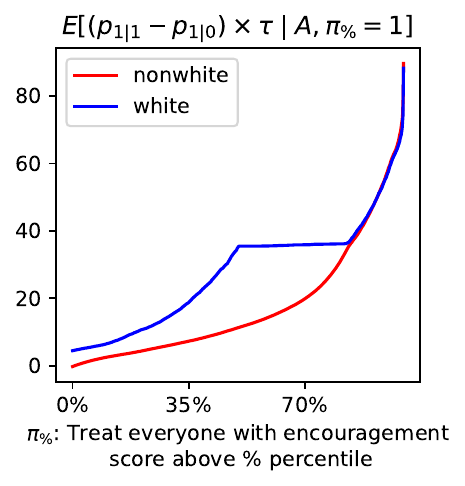}\includegraphics[width=0.33\linewidth]{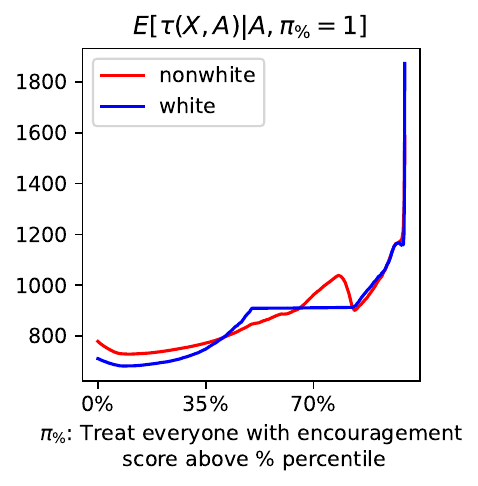}\includegraphics[width=0.33\linewidth]{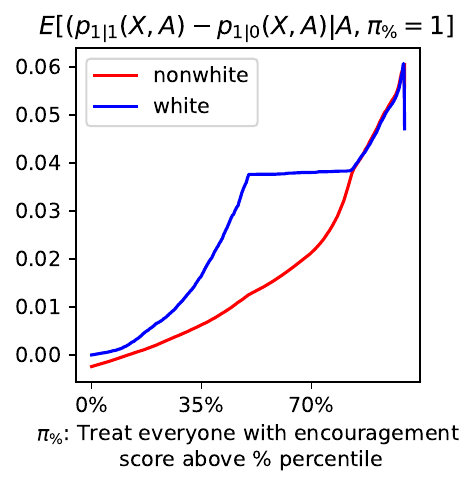}
    \caption{SNAP recertification case study. Each figure indicates the performance metric (conditional encouragement effect, i.e., compliance score $\times$ heterogeneous treatment effect; heterogeneous treatment effect; and compliance score induced by a threshold policy that thresholds on $\left(p_{1 \mid 1}-p_{1 \mid 0}\right) \times \tau$).} 
    \label{fig:sfhsa-cdfplot-race}
\end{figure}

\paragraph{Step 2: Investigate disparities under naive ITT targeting.}
We next investigate how these differences result in disparities in the actual allocations induced by naive ITT targeting. To do so, we consider threshold policies as in \Cref{eqn-thresh-policies} of the form $\pi_{\%}(X)=\mathbb{I}\{s(X)\ge \% \text{percentile of } s\},$ since they solve naive-ITT budgeted targeting. %
% which rank individuals by the encouragement score and vary the budget through the threshold \(t\). 

Whereas Step 1 uses unconditional, threshold-free diagnostics to assess whether treatment benefit and compliance are differently aligned across groups, Step 2 evaluates the conditional means of these same quantities within the sets actually selected by naive ITT threshold rules, which are exactly (proportional to) the objects governing the groupwise improvement decomposition. \Cref{fig:sfhsa-cdfplot-race} connects the previous mechanism-based diagnostic to investigate \textit{who is selected} by naive ITT targeting under different policies. 

For each threshold, we look within each group at those who would be selected and plot the average encouragement score, treatment benefit, and compliance (y-axis, respectively). We sweep over score thresholds on the x-axis to sweep over all possible budgets.  %\az{edit axis labels? confusing}
% , equivalently over budget levels:
% moving right corresponds to a more selective policy that encourages a smaller fraction of the
% population. % In \Cref{fig:sfhsa-cdfplot-race}, we evaluate the lift in different performance metrics conditional on protected group and conditional on the group's being encouraged under a resource-constrained policy---the y-axes show $\E[ c(Y(\pi))) \mid A=a, \pi^*(X) =1], \E[ (Y(1) - Y(0)) \mid A=a, \pi^*(X) = 1], \E[ (T(1) - T(0)) \mid A=a, \pi^*(X) = 1]$. The x-axis ranges over thresholds on the conditional encouragement effect, $(p_{1\mid 1}(X,A) - p_{1\mid 0}(X,A)) \times \tau(X,A)$. 
% Thus, the x-axis explores the range of the population covered under the budget allocation, with the budget averaged over the full population on the left-hand side and encouragement restricted to few individuals on the right-hand side. This represents the range of resource-constrained rules under a naive ITT approach. 

The left panel shows the average encouragement score among selected individuals by group. For large ranges of moderate budgets, the selected white applicants have larger encouragement score than non-white applicants. The middle plot illustrates the treatment benefit $\tau$, which is broadly similar within both group's selected sets, so disparities in average outcome are not driven by differences in how much individuals stand to gain from interview completion. By contrast, the right panel shows a clearer gap, persisting across budgets, in the average compliance effect $\E[ (p_{1\mid 1}(X,A) - p_{1\mid 0}(X,A))\mid \pi^*=1,A]$.%\textit{disaggregate} the individual encouragement effect into the heterogeneous causal effect $\tau$ of interview attendance and the compliance effect $(p_{1\mid 1}(X,A) - p_{1\mid 0}(X,A))$ of text message reminders on interview attendance. 

% Consistent with the earlier results on the strong positive rank correlation of $\tau$ and $(p_{1\mid 1}(X,A) - p_{1\mid 0}(X,A))$, both plots are nearly nondecreasing as we threshold based on increasing \textit{individual encouragement scores} $(p_{1\mid 1}(X,A) - p_{1\mid 0}(X,A)) \times \tau$. However, as we also saw earlier, this correlation is \textit{weaker} for nonwhite beneficiaries, and we can observe a small region of nonmonotonic behavior. 

Comparing these panels, we see that disparities in improvement naive ITT-based allocation are in fact driven almost entirely by disparities in access, which would persist under a large range of budgets. %Later in \Cref{sec-experiments}, we confirm the magnitude of resulting disparities in improvement for a range of budget-optimal policies.%While leveraging disparities in Spearman's correlation to diagnose disparities in resulting policy values is scale-free, \Cref{fig:sfhsa-cdfplot-race} exactly highlights the magnitude of the conditional covariance gap. 

% Consistent with the earlier results on the strong positive rank correlation of $\tau$ and $(p_{1\mid 1}(X,A) - p_{1\mid 0}(X,A))$, both plots are nearly nondecreasing as we threshold based on increasing \textit{individual encouragement scores} $(p_{1\mid 1}(X,A) - p_{1\mid 0}(X,A)) \times \tau$. However, as we also saw earlier, this correlation is \textit{weaker} for nonwhite beneficiaries, and we can observe a small region of nonmonotonic behavior. 

\textit{Step 3: Interpretation of potential disparities, if present.}

A central concern is that worse-off groups may benefit more from treatment yet be less responsive to low-touch encouragement because of administrative and resource barriers. In SNAP, the former is plausible given diminishing marginal utility, while the latter is consistent with the burdens of recertification, scheduling, childcare, and transportation. This distinction matters because responsiveness is potentially actionable: organizations can redesign outreach intensity, navigation support, or interview logistics. \citet{koenecke2023popular} likewise find public support for more equitable SNAP outreach, including greater spending to reach harder-to-reach groups.

By separating treatment efficacy from responsiveness, our audit identifies access gaps that would persist even under outcome-efficient allocations. %and clarifies the relevant 
next step. 
Knowing whether limits to improvements from personalized targeting stem from disparities in compliance or treatment efficacy clarifies relevant next steps. 
In SNAP, that next step lies in outreach and administrative redesign rather than the underlying benefit itself. However local administrators and nonprofits have far more room to design supplemental outreach processes affecting compliance, than they do to modify the centralized policy eligibility rules that underlie treatment efficacy.

\paragraph{Step 4: 
From audit to redesign: quantifying the gains and limits of personalized targeting in SNAP.}

% We continue our case study in the context of text message reminders of recertification interviews for SNAP. We next turn to 
We use 
off-policy evaluation and optimization to quantify the improvements and limits of personalized targeting in SNAP, illustrating our how prior diagnosed disparities 
result in potential equity concerns in group-differential limits to improvement from targeting, due to persistent access barriers.

\begin{figure}
    \centering
    \includegraphics[width=0.8\linewidth]{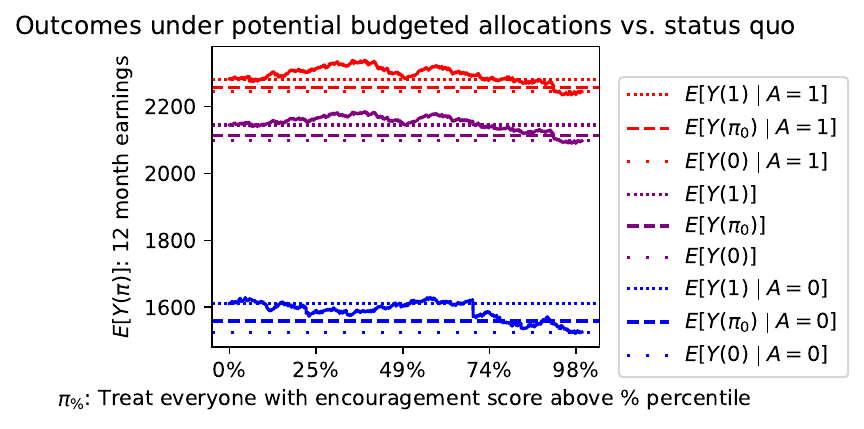}
    \caption{Comparison of average group outcomes under budgeted allocations (solid lines) for targeted treatment of different beneficiary shares ($E[Y(\pi_{\%})\mid A=a]$) with self-selection ($E[Y(\pi_0)\mid A=a]$) or no-reminder ($E[Y(0)\mid A=a]$) status quo.}
    
    \label{fig:sfhsa-budgeted-allocation-comparison}
\end{figure}
% \paragraph{Comparison of automatic enrollment in reminders (recommend to all) and optimized budgeted allocation }

We quantify the extent of potential improvements from automatic enrollment in reminders (recommend to all) or, under potential budget constraints, budget-optimal targeted allocations. 
In \Cref{fig:sfhsa-budgeted-allocation-comparison}, the y-axis measures different groupwise or average downstream values: %values of $\E[Y(\pi)]$ in  differently colored series of lines (red, purple, blue): 
red for the group $A=1$ (nonwhite), $\E[Y(\pi)\mid A=1]$; purple for the entire population, $\E[Y(\pi)]$; and blue for $A=0$ (white), $\E[Y(\pi)\mid A=0]$. Next, within each set of colored lines, we compare the following: the \textit{solid} line for budget-optimal policies $\E[Y(\pi_{\%})]$, the \textit{densely dotted} line treating all ($E[Y(1)\mid A=a]$), the \textit{dashed} line indicating the observed self-selection, and the \textit{sparsely dotted} line treating no one ($E[Y(0)\mid A=a]$). The x-axis ranges from treating everyone to no-one 
by ranging over encouragement score percentile thresholds, hence sweeping over all possible budget restrictions. 
% can be interpreted as $1-$budget percentage: the policy treats everyone with encouragement score above the $x\%$ percentile, so the left-side that treats everyone above 0\% percentile, and the right-side treats above 100\% percentile, treating no-one)
% The solid line ranging over budgets indicates $\E[Y(\pi_{\%})]$, where $\pi_{\%}$ is the optimal policy under a budget corresponding to delivery of recommendations to some percentage of the population. The x-axis shows this percentage ranging from 0\% (no one is treated) to 100\% (everyone is treated). Next, the set of dotted/dashed lines presents the outcomes under unpersonalized or untargeted policies: The densely dotted line indicates treat-all, $E[Y(1)\mid A=a]$; the dashed line indicates the observed self-selection (and hence averages over outcomes in the dataset within group status); and the sparsely dotted line indicates $E[Y(0)\mid A=a]$. 
% Within a set of colored lines, on the left side, the budget is large, so the value of the targeted allocation, $E[Y(\pi_{\%})]$ (solid line), tracks the value of the recommend-to-all policy, $E[Y(1)]$; while on the right side, the value of small budgets tracks the value of the recommend-to-none policy, $E[Y(0)]$. 
This figure also connects back to the earlier fairness decomposition. For each group \(a\), the groupwise improvement
$\E[Y(\pi_b)\mid A=a]-\E[Y(0)\mid A=a]$ can be read off the plot as the difference between solid and sparsely-dotted lines. Observe that the magnitude of these improvements differs across groups, exactly due to our previously observed disparities in compliance and alignment between treatment take-up and efficacy. %The earlier decomposition and diagnostics suggest that these unequal gains are driven primarily by compliance and weaker alignment between treatment benefit and responsiveness to reminders, rather than by lower treatment benefit itself.

Overall, additional reminders weakly improve outcomes relative to no reminders, and relative to currently observed self-selection. The main exception is the white subgroup, for whom self-selection modestly improves outcomes. By contrast, for budgets covering at least roughly $25\%$ of the population, targeted allocation substantially improves group-average outcomes relative to both current self-selection and the no-reminder baseline. Thus, in terms of group-level outcomes, targeted reminder allocation improves upon the status quo. Nonetheless, limits persist as to the extent of gains from low-touch outreach alone, suggesting that additional investment in outreach resources beyond text reminders to ensure that those who may benefit the most from program enrollment can overcome access barriers or burdens to do so. Additional research would be needed to ascertain root causes and whether the observed disparities in compliance reflect behavioral inattention, trust in government, or structural barriers to compliance with administrative burdens matters for policy implications.

\section{Policy Optimization under Access Constraints}\label{sec-methodd}
Here we move from diagnosis to decision rules. We first characterize the structure of optimal fairness-aware encouragement policies in the population—showing they remain threshold rules on appropriate scores or Lagrangian scores—and then use those moment representations to motivate the estimation and constrained policy-learning procedures that follow.

% Here we move from diagnosis to decision rules. We first characterize the structure of optimal fairness-aware encouragement policies in the population—showing they remain threshold rules on appropriate scores or Lagrangian scores—and then use those moment representations to motivate the estimation and constrained policy-learning procedures that follow.

Although our motivating decision problem is multi-objective, the formal statistical analysis below focuses on the primary outcome criterion $V(\pi)=\E[Y(\pi)]$. Fairness, access, and budget considerations enter through separately estimated constraint moments, and application-specific scalarizations can be formed from those same moments when desired.

\begin{proposition}[Group-specific thresholds under demographic parity. ]\label{prop-threshold-policy-groupfair}
Suppose the distribution of $s(x,a)$ is continuous (no atoms). Consider optimizing encouragement effects under a $\beta$-fraction budget and groupwise budget parity: 
\begin{equation}
\pi_{DP}^*(x,a) 
\in \max_{\pi(x,a)} \{\E[Y(\pi)] \colon \E[\pi(X,A)] \leq \beta, \abs{\E[\pi(X,A)\mid A=a]-\E[\pi(X,A)\mid A=b]} = 0 \}\label{eqn-demographic-parity-budget} %\tag{demographic parity in budget}
\end{equation}
    Define the $\beta$-quantile for group $a$'s score: $t_{\beta}^a \coloneqq \max\!\left\{\inf\{z : P(s(X,a)\le z)\ge 1-\beta\},\,0\right\}$.
    Then the optimal policy comprises group-dependent thresholds at the top-$\beta$ quantile of the encouragement score within each group (so long as it is positive):
$$\pi_{DP}^*(x,a) \;=\; \mathbb{I}[ s(x,a) \ge t_{\beta}^a].$$
\end{proposition}

That is, ensuring parity in encouragement budget by group is achieved by ranking within the score distributions in each group. 

\paragraph{Treatment parity--constrained optimal decision rules} We consider an access/resource/burden parity fairness constraint: 
\begin{align}%
% V_\epsilon^* = 
\max_{\pi} \; 
\{
\E[Y(\pi)] \colon \E[T(\pi)\mid A=a]- \E[T(\pi)\mid A=b]  \leq \epsilon \}
\label{eq-opt-treatment-parity-constraint}
\end{align}
We discuss the one-sided constraint in the analysis for simpler discussion; enforcing absolute values, etc., follows in the standard way. Not all values of $\epsilon$ may be feasible, and the amount of room for reducing disparities in treatment depends on the heterogeneity in treatment response. The feasible range of treatment-parity disparities is an interval
\([\inf_\pi \abs{\Delta_T},\sup_\pi \abs{\Delta_T}]\) that can be pre-calculated, so that a given treatment disparity constraint \(\Delta_T\le \epsilon\) is feasible iff $\epsilon$ lies within the interval.

We first characterize a threshold solution when the policy class is unconstrained. 
\begin{proposition}[Threshold solutions under treatment-parity constraints]\label{prop-threshold-treatment-parity}
Let 
% $g(A):=\tfrac{\indic{A=a}-\indic{A=b}}{p(A)}$, 
$\kappa(X,A):=p_{1\mid 1}(X,A)-p_{1\mid 0}(X,A)$, $c_0:=\E[p_{1\mid 0}(X,A)(\tfrac{\indic{A=a}-\indic{A=b}}{p(A)})]$, and
$$\ell(\lambda,X,A):=\kappa(X,A)\bigl\{\tau(X,A)-\lambda \tfrac{\indic{A=a}-\indic{A=b}}{p(A)}\bigr\}.$$
Suppose \eqref{eq-opt-treatment-parity-constraint} is strictly feasible. Then there exists $\lambda^*\in\arg\min_{\lambda\ge 0}\bigl\{\E[\ell(\lambda,X,A)_+]+\lambda(\epsilon-c_0)\bigr\}$ such that $\pi^*(x,u)=\mathbb{I}\{\ell(\lambda^*,x,u)>0\}$.
If instead $\pi$ is restricted to covariates only, $\lambda^*\in\arg\min_{\lambda\ge 0}\bigl\{\E[\E[\ell(\lambda,X,A)\mid X]_+]+\lambda(\epsilon-c_0)\bigr\}$ and $\pi^*(x)=\mathbb{I}\{\E[\ell(\lambda^*,X,A)\mid X=x]>0\}$.
\end{proposition}
Establishing this threshold structure (which follows by duality of infinite-dimensional linear programming) allows us to provide a generalization bound argument. We make a standard assumption that the functions we use for estimation are learnable, e.g. have finite VC dimension.

\begin{assumption}[Learnable nuisance functions]\label{asn-learnability}
    Assume that the nuisance models $\eta = [p_{1\mid 0},p_{1\mid 1}, \mu_{1},\mu_0]^\top, \eta \in \mathcal{F}_\eta$ are consistent and well specified with finite Vapnik--Chervonenkis (VC) dimension $v_\eta$ over the product function class $\mathcal{F}_\eta$.
\end{assumption}

\begin{proposition}[Policy value generalization]\label{prop-polgen-unconstr}

Let $\Pi = \{ \mathbb{I}\{\E[\ell(\lambda,X,A; \eta)\mid X]>0 \colon \lambda \in \mathbb{R}; \eta \in \mathcal{F}_\eta \}.$ 
    $$\textstyle \sup_{\pi \in \Pi, \lambda\in\mathbb{R}}\left|
(\E_n[ \pi \ell(\lambda,X,A) ] - \E[ \pi \ell(\lambda,X,A) ])  
    \right| = O_p(n^{-\frac 12} ) $$
\end{proposition}
This bound is stated for known nuisance functions; verifying stability under estimated nuisance functions further requires rate conditions.
\paragraph{Doubly robust estimation}
We may improve the statistical properties of the estimation by developing \textit{doubly robust} estimators, which can achieve faster statistical convergence when both the probability of recommendation assignment (when it is random) and the probability of outcome are consistently estimated or can otherwise protect against misspecification of either model. We first consider the ideal setting when algorithmic recommendations are randomized so that $e_r(X) = P(R=r\mid X)$. 
\begin{proposition}[Variance-reduced estimation]\label{prop-doublerobustness}
\begin{align*}  \textstyle V(\pi)&=
\sum_{t\in \tspace, r\in\{0,1\}}\E\left[ 
\pi_r(X)
\left\{ \frac{\indic{R=r}}{e_r(X)}
(\mathbb{I}[T=t]  Y -\mu_{t}(X)\ptmidrx ) +\mu_{t}(X)\ptmidrx
\right\} 
\right] \\
\E[T(\pi)] &= \textstyle \sum_{r\in\{0,1\}} \E\left[ 
\pi_r(X)
\left\{ \frac{\indic{R=r}}{e_r(X)}
(T  -p_{1\mid r}(x) ) +p_{1\mid r}(x) 
\right\} 
\right] 
\end{align*} 
\end{proposition}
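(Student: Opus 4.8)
The plan is to recognize each summand as an augmented inverse-propensity-weighted (AIPW) estimating term --- a ``plug-in'' regression piece plus an ``IPW correction'' piece --- and then argue (i) that the plug-in pieces already sum to the estimand identified in \Cref{prop-identification}, and (ii) that each correction piece has mean zero. For the first display I would write its $(t,r)$ summand (reading the subscripts as in \Cref{prop-identification}) as $\psi_{t,r} = \psi_{t,r}^{\mathrm{reg}} + \psi_{t,r}^{\mathrm{corr}}$, with $\psi_{t,r}^{\mathrm{reg}} = \pi_r(X)\mu_t(X)\ptmidrx$ and $\psi_{t,r}^{\mathrm{corr}} = \pi_r(X)\frac{\indic{R=r}}{e_r(X)}\bigl(\indic{T=t}c_{rt}(Y) - \mu_t(X)\ptmidrx\bigr)$. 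By \Cref{prop-identification}, $\sum_{t\in\tspace,\,r\in\{0,1\}}\E[\psi_{t,r}^{\mathrm{reg}}] = V(\pi)$, so everything reduces to showing $\E[\psi_{t,r}^{\mathrm{corr}}]=0$ for each $(t,r)$.

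For step (ii) I would use the tower property, conditioning on $X$ (and $A$, suppressed): the factors $\pi_r(X), e_r(X), \mu_t(X), \ptmidrx$ are $\sigma(X)$-measurable and come out, leaving the inner quantity $\E[\indic{R=r}\bigl(\indic{T=t}c_{rt}(Y) - \mu_t(X)\ptmidrx\bigr)\mid X]$. Conditioning once more on $R$, the indicator $\indic{R=r}$ selects a single slice, so this equals $P(R=r\mid X)\,\E[\indic{T=t}c_{rt}(Y) - \mu_t(X)\ptmidrx \mid R=r, X]$; here $P(R=r\mid X)=e_r(X)$, which is positive by \Cref{asn-overlap} and cancels the weight. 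By \Cref{asn-consistency} and the conditional exclusion restriction \Cref{asn-exclusionrestriction}, $\E[\indic{T=t}c_{rt}(Y)\mid R=r,X] = P(T=t\mid R=r,X)\,\E[c_{rt}(Y)\mid T=t,X] = \ptmidrx\,\mu_t(X)$, which exactly cancels the subtracted regression term --- so the inner conditional expectation is zero, giving $\E[\psi_{t,r}^{\mathrm{corr}}]=0$ and hence the first identity.

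The take-up identity I would handle by the same template with ``cost'' $c\equiv T$ and only the $t=1$ level retained. Since a stochastic $\pi$ draws $R'\sim\pi(\cdot\mid X)$ independently, $\E[T(\pi)] = \sum_{r\in\{0,1\}}\E[\pi_r(X)\,\E[T(r)\mid X]]$; invoking the (as-if) randomization of $R$ used already in \Cref{prop-identification} together with \Cref{asn-consistency} (which makes $T(r)=T$ on $\{R=r\}$), $\E[T(r)\mid X] = \E[T\mid R=r,X] = p_{1\mid r}(X)$, so the regression form is $\sum_r\E[\pi_r(X)p_{1\mid r}(X)]$. The correction term $\E\bigl[\pi_r(X)\frac{\indic{R=r}}{e_r(X)}(T(r)-p_{1\mid r}(X))\bigr]$ again vanishes: conditioning on $X$ and then $R$ gives $\frac{\pi_r(X)}{e_r(X)}\,P(R=r\mid X)\bigl(\E[T(r)\mid R=r,X]-p_{1\mid r}(X)\bigr) = \frac{\pi_r(X)}{e_r(X)}\,e_r(X)\cdot 0 = 0$, using \Cref{asn-consistency} on $\{R=r\}$ once more.

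The computation itself is routine; the step I expect to need the most care is the passage between the estimand --- defined through the potential outcomes $T(r)$ and $Y(t(r))$ --- and the AIPW summands, which involve the \emph{observed} $(T,Y)$. This passage must invoke consistency (\Cref{asn-consistency}) and the conditional exclusion restriction (\Cref{asn-exclusionrestriction}) at exactly the right place, and it must correctly keep track of the potential outcome $T(r)$ on the event $\{R\neq r\}$, where the IPW weight is zero but $T(r)$ still sits inside the outer expectation. I would also flag two points: no rate or consistency condition on the nuisances is needed for this identity --- it is an exact population identity at the true nuisances --- and the advertised ``double robustness'' can be read off the same decomposition, since if $\{e_r\}$ is misspecified but $\{\mu_t,p_{t\mid r}\}$ is correct the inner conditional expectation in step (ii) is still zero, whereas if $\{\mu_t,p_{t\mid r}\}$ is misspecified but $\{e_r\}$ is correct the mismatched plug-in and correction terms telescope back to $\sum_{t,r}\E[\pi_r(X)\mu_t(X)\ptmidrx]=V(\pi)$.
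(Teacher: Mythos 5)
Your proposal is correct, and it supplies exactly the argument the paper leaves implicit: the paper states \Cref{prop-doublerobustness} without a proof (remarking only that a similar characterization appears in the cited prior work), so the natural derivation is precisely your AIPW decomposition --- plug-in terms recovering the regression-adjustment identification of \Cref{prop-identification}, plus correction terms shown to be mean zero by iterating expectations over $X$ and then $R$, with consistency and the conditional exclusion restriction (\Cref{asn-exclusionrestriction}) used to equate $\E[\indic{T=t}c_{rt}(Y)\mid R=r,X]$ with $\mu_t(X)p_{t\mid r}(X)$. You also correctly read the subscripts $c_{r1},\mu_1$ in the displayed statement as $c_{rt},\mu_t$ (a typo, as the sum over $t$ makes clear), and your handling of $T(r)$ on $\{R\neq r\}$ and the double-robustness remark are both sound.
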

We retain the full expression rather than simplifying \citep[as appears in][]{qiu2021optimal} since the doubly robust estimation of constraints changes the Lagrangian.
For example, for regression adjustment, it is clearer in \Cref{prop-threshold-treatment-parity} how constraints affect the optimal decision rule. %

\subsection{Extension: Algorithmic advice with treatment, but not recommendation overlap}\label{subsec-overlap-robust}

When the recommendations are, e.g., the \textit{high risk}/\textit{low risk} labels from binary classifiers, the overlap assumption may not be satisfied since the algorithmic recommendations are deterministic functions of covariates. However, note that identification in \Cref{prop-identification} requires only SUTVA, consistency, and the exclusion restriction.

A naive approach based on parametric extrapolation is to estimate $p_{1\mid 1}(X)$, treatment responsivity, on the observed data and simply use the parametric form to extrapolate to the full dataset. Though this is generally unsatisfactory by itself, it is the starting point for our robust extrapolation methods.
% . (In \Cref{apx-addldisc-method}, we describe the variance reduction that can be possible). On the other hand, parametric extrapolation is generally unsatisfactory because the conclusions will be driven by the model specification rather than by observed data. Nonetheless, it can provide a starting point for robust extrapolation of structurally plausible treatment response probabilities. %
\paragraph{Robust extrapolation under violations of overlap}
We next describe methods for robust extrapolation under structural assumptions about the smoothness of the outcome models. Under violations of overlap, the only unknown quantity is $p_{t\mid r}( X)$ in regions of no overlap in recommendation; however, a plausible assumption is that the underlying function is smooth in covariates. A robust approach obtains worst-case bounds on policy value under all functions compatible with a particular smoothness assumption. On the other hand, we assume that overlap holds with respect to $T$ given covariates $X$, so our finer-grained approach via \Cref{asn-exclusionrestriction} yields milder penalties due to robustness since we need robustly extrapolate only the treatment response to recommendations, $p_{t\mid r}( X),$ rather than the outcome model's $\mu_t(X).$
Define the regions of no overlap as the following: 
 Let $\mathcal{X}^{\text{nov}}_r = \{ x: P(R=r\mid x) = 0\}$; in this region, we do not jointly observe all potential values of $(t,r,x)$. In addition, let $\mathcal{X}^{\text{nov}} = \bigcup_{r} \mathcal{X}^{\text{nov}}_r$. Correspondingly, define the overlap region as $\mathcal{X}^{\text{ov}} = (\mathcal{X}^{\text{nov}})^c.$ We consider uncertainty sets for ambiguous treatment recommendation probabilities. 
For example, one plausible structural assumption is \textit{monotonicity} of treatment in recommendation, or other global shape restrictions.
% We define the following uncertainty set: 
% \[
% \textstyle \mathcal{U}_{q_{t\mid r}} \coloneqq  \left\{ q_{1\mid r}(x') \colon 
% q_{1\mid r}(x) \geq p_{1\mid r}(x),\; 
% \forall x \in \Xnov_r\; 
% \sum_{t\in\tspace} q_{t\mid r}(x) = 1, 
% \forall x, r
% \right\} 
% \] 
We could assume uniform bounds on unknown probabilities; more refined bounds, such as Lipschitz smoothness with respect to some distance metric $d$; or boundedness. 
\begin{align*}
\textstyle 
\mathcal{U}_{\text{lip}} &\coloneqq  \left\{ q_{1\mid r}(x') \colon 
d(q_{1\mid r}(x'), p_{1\mid r}(x))\leq L d(x',x),\;(x',x)\in(\Xnov\times \Xov)
\right\}\\
\mathcal{U}_{\text{bnd}} &\coloneqq \left\{ q_{1\mid r}(x') \colon 
\underline{b}(x)\leq q_{1\mid r}(x')  \leq \overline{b}(x)
\right\} 
\end{align*}

We write $\indov{X}:=\mathbb{I}\{X\in\Xov\}$ and $\indnov{X}:=\mathbb{I}\{X\in\Xnov\}$ for the overlap and no-overlap indicators.
Define $V_{ov}(\pi)\coloneqq \sumrt 
 \E[ \pi_r(X)\ptmidrx \mu_t(X) 
 \indov{X}
 ]
$. Let $\mathcal{U}$ denote the uncertainty set including any custom constraints, e.g., $\mathcal{U} = \mathcal{U}_{bnd} \cap\mathcal{U}_{\text{lip}} $. %
We define the pessimistic robust lower bound $\underline{V}_{\mathrm{rob}}(\pi)\coloneqq V_{\mathrm{ov}}(\pi)+\underline{V}_{\mathrm{nov}}(\pi)$, where
\begin{align*}\textstyle
\underline{V}_{\mathrm{nov}}(\pi)\coloneqq \min_{q_{1\mid r}(X,A)\in\mathcal{U}}\sum_{r\in\{0,1\}}\E\!\left[\pi_r(X,A)\bigl\{\mu_0(X,A)+q_{1\mid r}(X,A)\tau(X,A)\bigr\}\indnov{X}\right].
\end{align*}
Here $\underline{V}_{\mathrm{rob}}(\pi)$ is the pessimistic robust lower bound; the corresponding optimistic bound replaces the minimum by a maximum.

In the binary-outcome, constant-bound case, the worst-case no-overlap term admits a simple closed form based on the sign of the relevant conditional mean; see \Cref{apx-opt-overlap}. We therefore state only the more general interval-bounded robust LP here.
We consider the case of continuous-valued outcomes in the example setting of the simple treatment parity--constrained program of \Cref{eq-opt-treatment-parity-constraint}. We first study simple uncertainty sets, such as intervals, to deduce insights about the robust policy, with a more general reformulation in the appendix. 
For interval bounds, it is convenient to summarize each interval by its midpoint $m_r$ and radius $\rho_r$.
We now characterize the policy that maximizes the pessimistic robust lower bound $\underline{V}_{\mathrm{rob}}(\pi)$.
\begin{proposition}[Overlap-robust linear program]\label{prop-robustlp}
Suppose that $r,t\in\{0,1\}$ and $q_{1\mid r}(\cdot,u)\in\mathcal U_{\mathrm{bnd}}$ for all $r\in\{0,1\}$, $u\in\{a,b\}$, so that $\underline B_r(x,u)\le q_{1\mid r}(x,u)\le \overline B_r(x,u)$. Define midpoint and radius $m_r(x,u):= \frac 12 ({\underline B_r(x,u)+\overline B_r(x,u)}),\;\rho_r(x,u):=
\frac 12 
({\overline B_r(x,u)-\underline B_r(x,u)})$, objective-at-midpoint $c_1(\pi):=\sum_{r\in\{0,1\}}\E\!\left[\tau(X,A)\pi_r(X,A)m_r(X,A)\indnov{X}\right]$, and disparity on the overlap region:
\[
\Delta_{\mathrm{ov}}^T(\pi):=\E[T(\pi)\indov{X}\mid A=a]-\E[T(\pi)\indov{X}\mid A=b].
\]
Then the overlap-robust policy optimization problem is:
\begin{align*}
\max_{\pi}\; &
V_{\mathrm{ov}}(\pi)
+\E[\mu_0(X,A)\indnov{X}]
+c_1(\pi)
-\sum_{r\in\{0,1\}}
\E\!\left[
|\tau(X,A)|\,\pi_r(X,A)\,\rho_r(X,A)\,\indnov{X}
\right]
\\
\text{s.t.}\; &
\sum_{r\in\{0,1\}}
\Big\{
\E[\pi_r(X,A)\overline B_r(X,A)\indnov{X}\mid A=a]
-\E[\pi_r(X,A)\underline B_r(X,A)\indnov{X}\mid A=b]
\Big\}
+\Delta_{\mathrm{ov}}^T(\pi)\le \epsilon.
\end{align*}
\end{proposition}

A key payoff of our modeling perspective is that the take-up / treatment-efficacy decomposition makes the overlap-robust problem less conservative: when recommendation overlap fails, we only need to robustly bound extrapolated treatment response to recommendation, rather than the downstream outcome model itself. Later in \Cref{sec-experiments} we illustrate how this allows us to robustly improve current recommendations. 
% That sharper robustness analysis is what later lets us work with tractable uncertainty sets over simple interpretable scorecards and search for local improvements to the status-quo decision matrix.
% \begin{proposition}[Robust linear program]\label{prop-robustlp}
% Suppose that $r,t\in\{0,1\}$ and $q_{r1}(\cdot,u)\in \mathcal{U}_{bnd}, \forall r,u.$ %\%az{need to rename $\tau$}
% Define 
% \begin{align*}\tau(x,a) &= \mu_1(x,a)-\mu_0(x,a),\;\;  \Delta B_r(x,u) \coloneqq (\overline{B}_r(x,u)- \underline{B}_r(x,u)), \\
% B^{\text{mid}}_r(x,u) &\coloneqq \underline{B}_r(x,u) + 
% \frac 12 \Delta B_r(x,u), \;\; c_1(\pi) \textstyle \coloneqq \sum_r \E[\tau\pi_r B^{\text{mid}}],\\
% \E[\Delta_{ov}T(\pi)] &\coloneqq  \E[T(\pi)
% \indov{X}
% \mid A=a]- 
% \E[T(\pi)
% \indov{X}
% \mid A=b] \;\;
% \end{align*} 
% Then, the robust linear program is: 
%  \begin{align*}
%     \min \;& \textstyle V_{ov}(\pi) + \E[ \mu_0 ] + c_1(\pi)- 
% \frac 12\sum_r \E[  \abs{\tau} \pi_r  \Delta B_r(X,A) \indnov{X}]
% \\
% \text{s.t. }& \textstyle  \sum_r\{ \E[ \pi_r \overline{B}_r(X,A) \indnov{X}\mid A=a ] 
% - \E[\pi_r \underline{B}_r(X,A) \indnov{X}\mid A=b] \}+\Delta_{ov}^T(\pi) \leq \epsilon 
% \end{align*}
% \end{proposition}

\subsection{Extension: variance-sensitive constrained policy learning}\label{sec-general-method}
We previously discussed policy optimization over unrestricted decision rules given estimates. 
We now turn to implementable policy learning over a restricted policy class under more general fairness constraints. Let $\Delta(\pi)\in\mathbb{R}^K$ denote a vector of identified disparity moments, and consider
\[
\max_{\pi\in\Pi}\{E[Y(\pi)] : E[\Delta(\pi)] \le d\}.
\]
This formulation also enables local Pareto improvement around a reference policy $\pi_0$, by treating allowable value loss relative to $\pi_0$ as an additional constraint. To obtain tighter out-of-sample feasibility, we solve these constrained problems with a two-stage procedure that localizes around a first-stage solution and calibrates variance-based slacks. We discuss our two-stage procedure building on a constrained optimization oracle, which we instantiate with \citep{agarwal2018reductions}, but other methods with statistical guarantees on the Lagrangian regret also suffice.

\paragraph{A constrained policy optimization oracle. }
We describe a specific procedure for constrained policy optimization that we leverage to concretize our results.

\citet{agarwal2018reductions} develops a general saddle-point procedure that optimizes such fairness-constrained problems over a hypothesis class with two key algorithmic tools: convexifying the space of policies via randomization, and saddle-point optimization over the convexified constrained problem. Such hypothesis classes include probabilistic classifiers that are good parametrizations of our causal encouragement policy.

They convexify $\Pi$ via randomized policies $Q \in \mathcal{P}(\Pi)$, where $\mathcal{P}(\Pi)$ is the set of distributions over $\Pi,$ i.e., a randomized classifier that samples a policy $\pi\sim Q$. Hence, they solve:
\begin{align*}
\max_{Q \in \mathcal{P}(\Pi)} \min_{\lambda} \mathcal{L}(Q,\lambda), \text{ where } \mathcal{L}(Q,\lambda) \coloneqq \mathbb{E}[V(Q)] - \lambda^\top(\E[ {\Delta}(Q) ]-{d})
\end{align*}
Of course, the stochastic objective and constraints need to be estimated from data. We let $\hat V, \hat\delta$ denote such estimates of the policy value and constraint scores.

We also require a feasibility margin $\epsilon_k$ that depends on concentration of the estimated constraints, so the sampled constraint vector is $\hat d_k = d_k - \epsilon_k,$ for all $k$, ensuring that the constraint holds at nominal level $d_k$ out of sample.

% \paragraph{Moment notation.}
We introduce some additional notation. Let \(O=(X,A,R,T,Y)\sim P\) denote a fresh observation%, let \(\pi\in\Pi\) be a deterministic policy,
% and let \(Q\in\mathcal P(\Pi)\) be a distribution over policies.
For a deterministic policy \(\pi\), let $v_{DR}(\pi) := v_{DR}(O;\pi,\eta_0)$ denote the random doubly robust value contribution, and let $\Delta(\pi) := \Delta(O;\pi)\in\mathbb R^K$
denote the random vector of identified constraint contributions. Let $\delta(\pi) := \delta(O;\pi,\eta)$ denote the corresponding observed-data estimating score. We overload notation and discuss randomized policies via a distribution over policies $Q$, e.g. $v_{DR}(Q) := \E_{\pi\sim Q}\!\left[v_{DR}(\pi)\mid O\right],$ where \(\E_{\pi\sim Q}[\cdot\mid O]\) denotes expectation over the policy randomization only, and use analogous variants. We seek an approximate saddle point so that the constrained solution is equivalent to the Lagrangian,
$$
\widehat{\mathcal{L}}(Q, {\lambda})=
\E_n[v_{DR} (O;\eta, Q) ]  -{\lambda}^{\top}(
\E_n[\delta (O; \eta, Q) ]
-{\hat{d}}
).
$$
We simultaneously solve for an approximate saddle point over the $B$-bounded domain of $\lambda$:
\begin{align*}
&
\textstyle
\underset{Q \in \mathcal{P}(\Pi)}{\max}
\;\;
\underset{{{\lambda} \in \mathbb{R}_{+}^{K},\|{\lambda}\|_1 \leq B}}{\min}
\widehat{\mathcal{L}}(Q, {\lambda}), \qquad
\underset{{\lambda} \in \mathbb{R}_{+}^{K},\|{\lambda}\|_1 \leq B}{\min} \;\; \underset{Q \in \mathcal{P}(\Pi)}{\max} \; \widehat{\mathcal{L}}(Q, {\lambda})
\end{align*}

If convexifying via randomization is not permitted, we discuss a simple generalization in the appendix that leverages well-known convex surrogate losses for optimizing optimal treatment regimes \citep{zhao2012estimating}.
We defer specific details to \Cref{apx-pol-opt}, including how to encode fairness constraints in standard convexified LP form.\footnote{The centered policy regret can be reparametrized via a policy parameter $\beta$ as: $  J(\beta) = J(\op{sgn}(f_\beta(\cdot))) = \E[ \op{sgn}(f_\beta(X)) \left\{ \psi \right\}].$ We can apply the standard reduction to cost-sensitive classification since $\psi_i \op{sgn}(f_\beta(X_i)) = \abs{\psi_i} (1-2 \indic{\op{sgn}(f_\beta(X_i)) \neq \op{sgn}(\psi_i)}$). Then, we can use surrogate losses for the zero-one loss. Although many functional forms for $\ell(\cdot)$ are Fisher consistent, one such choice of $\ell$ is the logistic (cross-entropy) loss $\E[\abs{\psi} \ell(f_\beta(X), \op{sgn}(\psi))], \qquad  l(g,s) = 2 \log(1+\exp(g)) - (s+1)$.}

We introduce the notation $\operatorname{REDFAIR}(\mathcal D,v_{DR},\delta,\hat d;\hat\eta)$
to refer to an invocation of the algorithm of \citep{agarwal2018reductions}.
\citet[Theorem 3]{agarwal2018reductions} gives generalization guarantees on the policy value and constraint violation achieved by the approximate saddle point output by the algorithm. The analysis is generic under rate assumptions on uniform convergence of policy and constraint values.

% \begin{assumption}[Rate assumption on policy and constraint values]\label{asn-polopt-rate}
% There exist $C, C^{\prime} \geq 0$ and $\alpha \leq 1 / 2$ such that $\sup_{Q \in \mathcal{P}(\Pi)}
% \{V(Q; \eta)- \hat V(Q; \hat \eta)\}
% \leq C n^{-\alpha}$ and $\varepsilon_k=C^{\prime} \sum_{j \in \mathcal{J}}\left|M_{k, j}\right| n_j^{-\alpha}$, where $n_j$ is the number of data points that fall in $\mathcal{E}_j$. \az{notation flag - need to define $\mathcal{E}_j$}
% \end{assumption}
\begin{assumption}[Uniform concentration of value and constraint estimators]
\label{asn-polopt-rate}
There exist a scalar sequence \(\varepsilon_{V,n}\) and a vector sequence
\(\varepsilon_{\Delta,n}\in \mathbb{R}_+^K\) such that, with probability at least
\(1-\zeta\), and, coordinatewise for each \(k\in[K]\),
\[
\sup_{Q\in \mathcal P(\Pi)}
\left\{
\mathbb{E}[V(Q)]-\mathbb{E}_n[v_{DR}(Q)]
\right\}
\le \varepsilon_{V,n}(\zeta),
\;\;
\sup_{Q\in \mathcal P(\Pi)}
\left\{
\mathbb{E}[\Delta_k(Q)]-\mathbb{E}_n[\delta_k(Q)]
\right\}
\le \bigl(\varepsilon_{\Delta,n}(\zeta)\bigr)_k.
\]
\end{assumption}
Under \Cref{asn-learnability} and orthogonal nuisance estimation,
\(\epsilon_n(\zeta)\) follows from standard empirical-process concentration. %and is
% \(\tilde O(n^{-1/2}+\chi_{n,\zeta}^2)\); Theorem~\ref{thm:...} gives the refined
% local-complexity form.
The optimization approach of \citet{agarwal2018reductions} follows the no-regret online learning paradigm of \citet{freund1997decision}, %which conducts online multiplicative-weights/exponentiated gradient dual updates on 
an online saddle point optimization alternating optimization over 
$\lambda$ and best-response updates on $Q$ (hence $\pi$). Further details are in \Cref{sec-saddle}.  We only make a small modification to use a refined no-regret \citep[second-order multiplicative weights;][]{cesa2007improved,steinhardt2014adaptivity} algorithm instead of the original hedge/exponentiated gradient algorithm \citep{freund1997decision} 
 for the optimization over $\lambda$. The second-order regret bounds surface dependence on the estimation variance and therefore the two-stage algorithm's variance reduction. %Full details are in \Cref{alg-saddle}. 
% When the best-response primal oracle cannot be solved exactly over the policy class of interest, we use a well-known convex surrogate-loss relaxation, analogous to outcome-weighted learning \citep{olearning12,zhao2012estimating}, described in \Cref{sec-saddle-weighted-class-reduction} of the appendix.
% \footnote{The centered policy regret can be reparametrized via a policy parameter $\beta$ as: $  J(\beta) = J(\op{sgn}(f_\beta(\cdot))) = \E[ \op{sgn}(f_\beta(X)) \left\{ \psi \right\}].$ We can apply the standard reduction to cost-sensitive classification since $\psi_i \op{sgn}(f_\beta(X_i)) = \abs{\psi_i} (1-2 \indic{\op{sgn}(f_\beta(X_i)) \neq \op{sgn}(\psi_i)}$). Then, we can use surrogate losses for the zero-one loss. Although many functional forms for $\ell(\cdot)$ are Fisher consistent, one such choice of $\ell$ is the logistic (cross-entropy) loss $\E[\abs{\psi} \ell(f_\beta(X), \op{sgn}(\psi))], \qquad  l(g,s) = 2 \log(1+\exp(g)) - (s+1)$.}

% \paragraph{Solving the best-response (primal oracle) problem over $\pi$.}

\textbf{Two-stage variance-constrained algorithm.}
% We seek to improve upon this procedure so that we may obtain regret bounds on policy value and fairness constraint violation that exhibit more favorable dependence on the maximal variance over small-variance \textit{slices} near the optimal policy, rather than worst-case constants over all policies \citep{chernozhukov2019semi,athey2021policy}. %
% Moreover, using the estimated variance to set constraint feasibility slacks can achieve tighter fairness control. 

Our two-stage procedure, described formally in \Cref{alg-metaalg2}, improves upon generic constrained optimization by introducing implicit variance regularization, so that regret bounds depend more favorably on the maximal variance over small-variance \textit{slices} near the optimal policy, rather than worst-case constants over all policies \citep{chernozhukov2019semi,athey2021policy}. As a result, such policies can achieve tighter fairness constraint control. 

% \az{fixing}
% $\op{REDFAIR}(\mathcal{D}_1, v_{DR}, \delta,  d; \hat\eta_1)$

% \az{fix}
% We therefore refer to an invocation of this constrained optimization solution as $\op{REDFAIR}(\mathcal{D}_1, h, {\mathcal{E}},  M,  d; \hat\eta_1)$; under \Cref{asn-polopt-rate}, \citet{agarwal2018reductions} gives a generalization guarantee for our constrained problem of $V(\hat Q) \geq V(Q^*)- \tilde{O}(n^{-\alpha})$, $Mh(\hat Q) \leq d_k + \sum_{j\in\mathcal{J}} |M_{k,j}| \tilde{O}(n^{-\alpha}_j),$ for all $k$, where $\tilde{O}(\cdot)$ suppresses constant factors with polynomial dependence on $\log(1/\delta)$.
\begin{algorithm}[t!]
\caption{Two-stage localized constrained policy optimization}
\label{alg-metaalg2}
\begin{algorithmic}[1]
\Statex \textbf{Input:} policy class \(\Pi\), value score \(v_{DR}\), vector constraint score \(\delta\), target vector \(d\), localization radius \(\epsilon_n\)

\State Randomly split the sample into \(\mathcal D_1\) and \(\mathcal D_2\)

\State Fit nuisance estimators \(\hat\eta_1\) on \(\mathcal D_1\)

\State Compute the first-stage solution $\hat Q_1 \gets \operatorname{REDFAIR}(\mathcal D_1,v_{DR},\delta,d;\hat\eta_1)$

\State Define the nearly binding set $\hat{\mathcal I}_1
\gets
\left\{
k\in[K]:
d_k-\mathbb E_{n_1}[\delta_k(\hat Q_1)] \le \epsilon_n
\right\}$
\State Compute confidence slacks. 

For each constraint $\hat\sigma_k^2 \gets \widehat{\mathrm{Var}}_{\mathcal D_1}\!\bigl(\delta_k(\hat Q_1)\bigr),$ and set $\hat d_k \gets d_k - z_\alpha \hat\sigma_k\, n_1^{-\alpha}$
\State Define the augmented moment and threshold vectors \(\tilde\delta, \tilde d\) by
\[
\tilde\delta(Q)
\leftarrow
\left(
\delta(Q)^\top,\,
\bigl(\delta_{\hat I_1}(Q)-\delta_{\hat I_1}(\hat Q_1)\bigr)^\top,\,
v_{\mathrm{DR}}(\hat Q_1)-v_{\mathrm{DR}}(Q)
\right)^\top,
\qquad
\tilde d
\leftarrow
\left(
\hat d^\top,\,
\epsilon_n{\bf 1}_{|\hat I_1|}^\top,\,
\epsilon_n
\right)^\top.
\]

% \State Define the augmented threshold vector \(\tilde d\) by
% \Statex \hspace{\algorithmicindent}
% \(
% \tilde d
% \gets
% \bigl[
% \hat d^\top,\,
% \epsilon_n \mathbf{1}_{|\hat{\mathcal I}_1|}^\top,\,
% \epsilon_n
% \bigr]^\top
% \)

\State Fit nuisance estimators \(\hat\eta_2\) on \(\mathcal D_2\). 

On \(\mathcal D_2\), compute the second-stage solution ${\hat Q_2 \gets \operatorname{REDFAIR}(\mathcal D_2,v_{DR},\tilde\delta,\tilde d;\hat\eta_2)}$
\State \textbf{return} \(\hat Q_2\)
\end{algorithmic}
\end{algorithm}

We adapt an out-of-sample regularization scheme developed in \citet{chernozhukov2019semi}, which recovers variance-sensitive regret bounds. To summarize the procedure at a high level: we split the data into two subsets, $\mathcal{D}_1$ and $\mathcal{D}_2$. On the first data split, we estimate the optimal policy distribution $\hat Q_1$, identify the first-stage binding constraints via the index set $\hat I_1$, and estimate the objective and constraint variances at $\hat Q_1$. 
Next, we \textit{augment} the constraint matrix to optimize over policy distributions $Q$ in the second stage that are localized within $\epsilon_n$-sized neighborhoods for policy value and constraint moment values, relative to $\hat Q_1$. For example, the new constraint vector slack $\hat{d}_k\leftarrow d_k - z_\alpha \hat{\sigma}_k n_1^{-\alpha}$ is updated via the binding constraint variance $\widehat{\operatorname{Var}}_{\mathcal{D}_1}(\delta_k(\hat{Q}_1)), i \in \hat I_1$, where $z_\alpha$ is the $\alpha$-level z-score to ensure constraints hold with $\alpha$-probability out-of-sample. %\az{add new formulation here? replace $\pi$ with $Q$}
% We split the data into two subsets, $\mathcal{D}_1$ and $\mathcal{D}_2$, and first learn nuisance estimators $\hat\eta_1$ from $\mathcal{D}_1$ (possibly with further sample splitting) for use in our policy value and constraint estimates. We run the algorithm of \citet{agarwal2018reductions}, $\operatorname{REDFAIR}\left(\mathcal{D}_1, h, D_1, v_{D R}, \delta, d ; \hat{\eta}_1\right)$, on data from $\mathcal{D}_1$ to estimate the optimal policy distribution $\hat Q_1$ and the objective and constraint variances at $\hat Q_1$. We identify the first-stage binding constraints via the index set $\hat I_1$, estimate the binding constraint variances $\leftarrow \widehat{\operatorname{Var}}_{\mathcal{D}_1}\left(\delta_k\left(\hat{Q}_1\right)\right), i \in \hat I_1$, and set the constraint slacks 
% $\hat{d}_k\leftarrow d_k+2 \hat{\sigma}_k n_1^{-\alpha}$. Next, we \textit{augment} the constraint matrix to optimize over policy distributions $Q$ within $\epsilon_n$-sized neighborhoods for policy value and constraint moment values, relative to $\hat Q_1$. %\az{add new formulation here? replace $\pi$ with $Q$}
\begin{align*}
    \max_{Q \in \mathcal{P}(\Pi)}&\mathbb{E}_{n_2}[v_{DR}(Q)]
\\
\text{ s.t. }& 
\mathbb{E}_{n_2}[\delta(Q)] \le \hat d,\ 
\mathbb{E}_{n_2}\!\left[
\delta_{\hat{\mathcal I}_1}(Q)-\delta_{\hat{\mathcal I}_1}(\hat Q_1)
\right]
\le
\epsilon_{n_2} \mathbf{1}_{|\hat{\mathcal I}_1|},\ 
\mathbb{E}_{n_2}\!\left[
v_{DR}(\hat Q_1)-v_{DR}(Q)
\right]
\le \epsilon_{n_2}
\end{align*}

% \begin{align*}
%  \max_{Q \in \mathcal{P}(\Pi)} & 
%  \E_{n_2}[v_{DR}()] 
% \\
% \text{ s.t. } & 
% \E_{n_2}[ \delta(Q) ] \leq d + ..., 
%     \E_{n_2}[ \delta_{\hat{\mathcal{I}}_1}(Q) - \delta_{\hat{\mathcal{I}}_1}(\hat Q_1^*)  ]  \leq \epsilon_n^{\hat{\mathcal{I}}_1} ,
%      \E_{n_2}[ v_{DR}(Q) - \delta_{\hat{\mathcal{I}}_1}(\hatQ_1^*)  ]  \leq \epsilon_n^{\hat{\mathcal{I}}_1}
%     % \max_{Q \in \mathcal{P}(\Pi)} 
%     % &\E_{n_2}[v_{DR}()] \\
%     % \text{ s.t. } & \E_{n_2}[ \delta(Q) ] \leq d + ...
%     % \\
%     % & \E_{n_2}[ \delta_{\hat{\mathcal{I}}_1}(Q) - \delta_{\hat{\mathcal{I}}_1}(\hat Q_1^*)  ]  \leq \epsilon_n^{\hat{\mathcal{I}}_1}  \\
%     %  & \E_{n_2}[ v_{DR}(Q) - \delta_{\hat{\mathcal{I}}_1}(\hatQ_1^*)  ]  \leq \epsilon_n^{\hat{\mathcal{I}}_1}  
% \end{align*}

% Since errors concentrate quickly, this can be viewed as variance regularization. This results in tighter control of the fairness constraints. %The second stage solves for an approximate saddle point of the augmented system, with objective function and constraints evaluated on $\mathcal{D}_2$ and returns $\hat Q_2.$

Next, we provide a generalization bound on the out-of-sample performance of the policy returned by the two-stage procedure. Importantly, because of our two-stage procedure, the regret of the policy depends on the worst-case variance of near-optimal policies (rather than all policies). Define the function classes
  $ \mathcal{F}_{\Pi}=\{ v_{DR}(\cdot, \pi; \eta)\colon \pi \in \Pi, \eta \in \mathcal{F}_\eta\}$,  $\mathcal{F}_{j}=\{ g_j(\cdot, \pi; \eta)\colon \pi \in \Pi, \eta \in \mathcal{F}_\eta\}$
  and the empirical entropy integral $\kappa(r, \mathcal{F})=
   {\inf _{\alpha \geq 0}\{4 \alpha+10 \int_\alpha^r \sqrt{\frac{\mathcal{H}_2(\epsilon, \mathcal{F}, n)}{n}} {d} \epsilon\}}$, where $H_2(\epsilon, \mathcal{F}, n)$ is the $L_2$ empirical entropy, i.e., log of the $\norm{\cdot}_2$ $\epsilon$-covering number.
We make a mild assumption of a learnable function class (bounded entropy integral) \citep{van1996weak}, which is satisfied by many standard function classes such as linear models, polynomials, kernel regression, and neural networks \citep{wainwright2019high}. 
\begin{assumption}
 The function classes $\mathcal{F}_{\Pi}, \{\mathcal{F}_j\}_{j\in \mathcal{J}}$ satisfy that, for any constant $r, \kappa(r, \mathcal{F}) \rightarrow 0$ as $n \rightarrow \infty$. The function classes $\{\mathcal{F}_j\}_{j\in \mathcal{J}}$ comprise $L_j$-Lipschitz contractions of $\pi.$ 
\end{assumption}
We assume that we are using doubly robust/orthogonalized estimation as in \cref{prop-doublerobustness} and, hence, state our results depending on the estimation error of nuisance vector $\eta$. The next theorem summarizes the out-of-sample performance of the two-stage algorithm of \Cref{alg-metaalg2}, $\hat Q_2$.
\begin{theorem}[Variance-based oracle policy regret]\label{thm-varregret}
Let $Q^\star
\in
\arg\max_{Q\in\mathcal P(\Pi)}
\{V(Q):\Delta(Q)\le d\}$.
Suppose that, with probability at least \(1-\delta/2\) over the nuisance function estimation sample, $\max_{\ell\in[L]}
\E\!\left[(\hat\eta_\ell-\eta_{0,\ell})^2\right]
\le
\chi^2_{n,\delta}$ (we have $\chi^2_{n,\delta}$ as a bound on nuisance MSE).
Denote $v^0_{\mathrm{DR}}(O;Q):=v_{\mathrm{DR}}(O;Q,\eta_0),
\qquad
\delta^0_k(O;Q):=\delta_k(O;Q,\eta_0)$,
and define the worst-case variance radius
$r
:=
\sup_{Q\in\mathcal P(\Pi)}
\sqrt{
\E\![
(v^0_{\mathrm{DR}}(O;Q))^2
]
}.$ For the second-stage sample size \(n_2:=|D_2|\), let $\epsilon_{n_2}
=
\Theta\!(
\kappa(r,\mathcal F_\Pi)
+
r\sqrt{\frac{\log(1/\delta)}{n_2}}
)$. Let \(\hat I_1\) denote the nearly binding set defined in Algorithm~\ref{alg-metaalg2}. Define the localized near-optimal slice
\[
\mathcal Q^\star(\epsilon;\hat I_1)
:=
\left\{
Q\in\mathcal P(\Pi):
V(Q^\star)-V(Q)\le \epsilon,\quad
\Delta_k(Q)\le d_k+\epsilon,\ \forall k\in[K],\quad
\Delta_k(Q)-\Delta_k(Q^\star)\le \epsilon,\ \forall k\in\hat I_1
\right\}.
\]
Let $\tilde\epsilon_{n_2}
=
O(\epsilon_{n_2}+\chi^2_{n,\delta})$.
Define the value-variance radius $\bar\sigma^2_{D_2}$ and the constraint-variance: $$\bar\sigma^2_{D_2}
:=
\sup_{Q,Q'\in\mathcal Q^\star(\tilde\epsilon_{n_2};\hat I_1)}
\op{Var}\!(
v^0_{\mathrm{DR}}(O;Q)-v^0_{\mathrm{DR}}(O;Q')
), \;\;
\bar\sigma^2_{k,D_2}
:=
\sup_{Q,Q'\in\mathcal Q^\star(\tilde\epsilon_{n_2};\hat I_1)}
\op{Var}\!(
\delta^0_k(O;Q)-\delta^0_k(O;Q')
)$$
For each \(k\), write
$
\mathcal F^\Delta_k
:=
\left\{
\delta_k(\cdot;\pi,\eta):
\pi\in\Pi,\ \eta\in\mathcal F_\eta
\right\}.
$
Then the policy distribution \(\hat Q_2\) returned by Algorithm~\ref{alg-metaalg2} satisfies,
with probability at least \(1-\delta\),
\[
V(Q^\star)-V(\hat Q_2)
=
O\!\left(
\kappa(\bar\sigma_{D_2},\operatorname{conv}(\mathcal F_\Pi))
+
\bar\sigma_{D_2}n_2^{-1/2}\sqrt{\log(3/\delta)}
+
\chi^2_{n,\delta}
\right),
\]
and, for each \(k\in[K]\),
\[
\bigl(\Delta_k(\hat Q_2)-d_k\bigr)_+
=
O\!\left(
\kappa(\bar\sigma_{k,D_2},\operatorname{conv}(\mathcal F^\Delta_k))
+
\bar\sigma_{k,D_2}n_2^{-1/2}\sqrt{\log(3/\delta)}
+
\chi^2_{n,\delta}
\right).
\]
\end{theorem} 
The specific benefits of the two-stage approach are that 1) the constants are improved from being absolute, structure-agnostic bounds to depending on the variance of low-regret policies, which also reflects the improved variance from the use of doubly robust estimation as in \cref{prop-doublerobustness}, and 2) it allows less conservative satisfaction of the fairness constraint out of sample.

\section{Case Studies}\label{sec-experiments}

\subsection{Why and where the two-stage constrained optimization helps}
In \Cref{sec-general-method}, we developed a general two-stage procedure for constrained formulations or Pareto improvement upon prior policies. We highlight the benefits of our two-stage approach in a simulated example, where it operates as effective variance regularization and improves out-of-sample fairness constraint feasibility and Lagrangian regret, especially in finite samples. 

\begin{figure}
    \centering
\includegraphics[width=\linewidth]{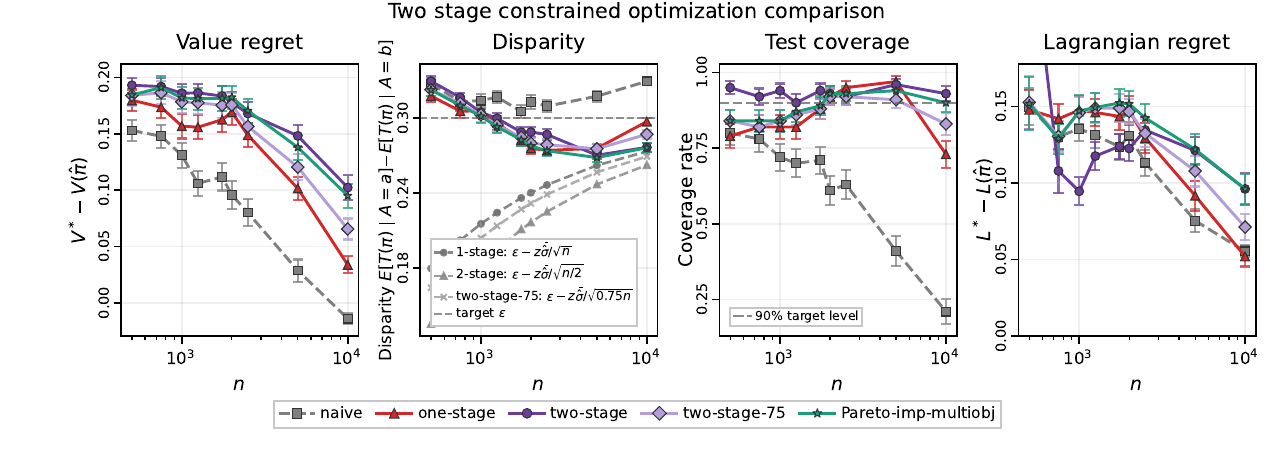}
\caption{Comparison of the two-stage method of Section~6.2 to ablations, as a function of sample size $n$. In the disparity panel, the horizontal line is the nominal fairness target $\epsilon=0.3$, while the dashed curves labeled \texttt{1-stage}, \texttt{2-stage}, and \texttt{two-stage-75} denote the corresponding slack-adjusted effective constraint levels: for the one-stage baseline, $\epsilon_{\mathrm{eff}}^{(1)}=\bigl(\epsilon-z_{0.9}\sigma_{\max}/\sqrt{n}\bigr)_+,
$
and for two-stage methods, obtained from first-stage variance estimates at the effective sample size. The last panel measures Lagrangian regret relative to each method's own effective constraint levels.}
    \label{fig:two-stage}
\end{figure}

In \Cref{fig:two-stage}, we compare several ways of solving the same constrained optimization problem. The \textit{naive} method solves the empirical constrained problem directly, with no extra slack. The \textit{one-stage} baseline adds a single worst-case variance slack, while \textit{two-stage} implements our procedure: it uses a first split to identify a candidate solution and nearly binding constraints, then re-solves a localized problem on a second split with variance-based slacks. The \textit{two-stage-75} variant uses the same localization idea with larger effective subsamples (75\% of the dataset in a split), and \textit{Pareto-imp-multiobj} denotes the local Pareto-improvement variant around a first-stage optimal policy rather than the global constrained problem. Implementation details are deferred to Appendix \Cref{apx-two-stage}. Figure 4 reports, from left-to-right, key metrics measured on an out-of-sample test set: value regret, treatment disparity, coverage of the target fairness constraint, and a variant of Lagrangian regret that accounts for different conservativities due to different effective sample complexities of the one- vs. two-stage methods. The main pattern is that the naive approach is least conservative but often violates the fairness constraint out of sample, whereas the regularized baselines improve feasibility. %The two-stage method is most useful at moderate sample sizes, where localization yields tighter disparity control at similar or better value; at larger sample sizes this advantage narrows.

Across the simulation settings, the naive approach is least conservative but frequently violates the fairness constraint out of sample, whereas the one-stage and two-stage procedures both improve constraint satisfaction. For the value regret, in general, fairness is imposed at some value loss; there is a trade-off in general since compliance factors into the encouragement score as well. The advantage of the two-stage method is most visible in the moderate-sample regime, where it attains similar or better policy value while delivering tighter control of disparity. At larger sample sizes, this advantage narrows as the cost of sample splitting becomes less worthwhile.

% So, while naive consistently achieves the least value regret, it is naively optimistic about the constraint error. Due to generalization error, the out-of-sample disparity is higher than $\epsilon$ and the constraint is often violated. In comparison, the one-stage method solves for a lower constraint value and therefore can maintain out-of-sample disparities below $\epsilon$ in general. The second panel of \Cref{fig:two-stage} also plots the value of $\epsilon=0.3$, as well as the effective $\epsilon_n$ series for each method; one-stage, 2-stage, and 2-stage-75\% in dashed gray lines. We see that the one-stage, two-stage and Pareto-improvement methods can all control the out-of-sample disparity below the constraint value. However, since the two-stage methods optimize over policies achieving low variance, we see that their policies achieve greater out-of-sample feasibility than the one-stage approach, especially for small dataset sizes. 

% Policies targeting disparity reduction will generally incur some trade-offs between our first three metrics, policy value, disparity reduction and out-of-sample feasibility. The last panel compares these different methods by scalarizing policy value and constraint violations with a fixed scalar factor. %scalar factor $\eta^*_{n_{eff}}$. 

\subsection{Case study: Decision-making framework for electronic monitoring.}\label{subsec-supervised-release}

We conduct a case study on a dataset of judicial decisions on \textit{supervised} release based on risk-score-informed recommendations for supervised release under an electronic-monitoring program \citep{cookcountybailreform}. The Public Safety Assessment Decision-Making Framework (PSA-DMF) uses a prediction of failure to appear for a future court date to inform pretrial decisions, including supervised release with electronic monitoring. An existing decision-making matrix recommends supervised release based on thresholds of failure-to-appear (FTA) and new-criminal-activity (NCA) predictive risk scores \citep{psa-dmf}. To our best understanding, such thresholds were not previously optimized on data.\footnote{We focus on supervised release with electronic monitoring, though the broad term  \textit{supervised release} encompasses substantially  different programs nationwide, including access to supportive services and caseworkers, which has been touted as a factor enabling bail reform and release more broadly \citep{bloomberg-bailreform}.} These thresholds on predictive risk scores need not align with optimal thresholds on encouragement scores.
There are current policy concerns about disparities in the increasing use of supervised release, given mixed evidence on its outcomes \citep{cookcountybailreform,communityrenewalsocietyLetterRegarding}; e.g., \citet{safetyandjusticechallenge} concludes that "targeted efforts to reduce racial disparities [in supervising release] are necessary". We focus on a publicly available dataset from Cook County, Illinois, with information about defendant characteristics, algorithmic recommendations for electronic monitoring, detention/release/supervised release decisions,  failure to appear and other outcomes \citep{cookcountybailreform-data}. %The data were initially used to assess bail reform \citep{cookcountybailreform}.%\footnote{While in this initial case study we work with publicly available data \citep{cookcountybailreform-data}, in future work, we will seek more granular data with additional robustness checks to support our substantive conclusions. We offer a more detailed discussion in the appendix, but to summarize, unconfoundedness is likely violated in this case (but can be addressed with standard sensitivity analysis), and some line-level data are aggregated for privacy.} 

 % We let $Z\in \{0,1\}$ denote release $(Z=1)$ (with or without conditions).  All of our analysis occurs in the $(XZ, AZ, RZ, YZ)$ group, i.e., among the released population only. 
 % For brevity, we drop $Z$ in describing the data below. 
 All of our analysis occurs in the released population only.
 We let $X$ denote covariates (age, top charge category, PSA failure to appear/new criminal arrest (FTA/NCA) score bucket and flag). The (binarized) protected attribute is $A$: race (nonwhite/white) or gender (female/male). The algorithmic recommendation is $R$, a recommendation from the PSA-DMF matrix for supervised release (at any intensity of supervision conditions). The treatment $T$ is whether the individual is released under supervision (at any intensity of supervision conditions). The outcome variable, $Y$, is failure to appear ($Y=1$).

In descriptive diagnostics included in \Cref{apx-psa}, we compare the heterogeneity in treatment effects and responsiveness. We find estimated heterogeneity in treatment efficacy and compliance, as well as group-level differences in distributions. We also observe disparities in how responsive decision-makers are to recommendation, conditional on
the same treatment effect efficacy. This is importantly not a claim of animus because decision-makers
didn’t have access to causal effect estimates. Nonetheless, disparities persist.

\begin{figure}[t!]
        \includegraphics[width=0.25\textwidth]{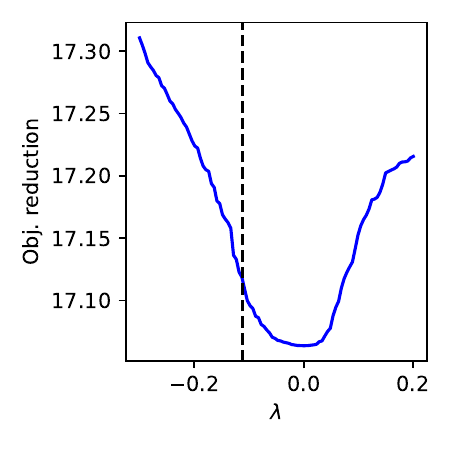}\includegraphics[width=0.25\textwidth]{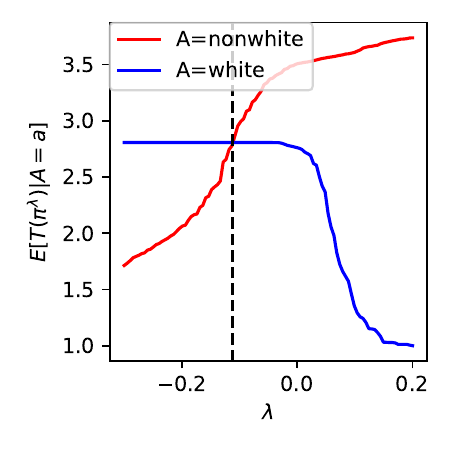}\includegraphics[width=0.25\textwidth]{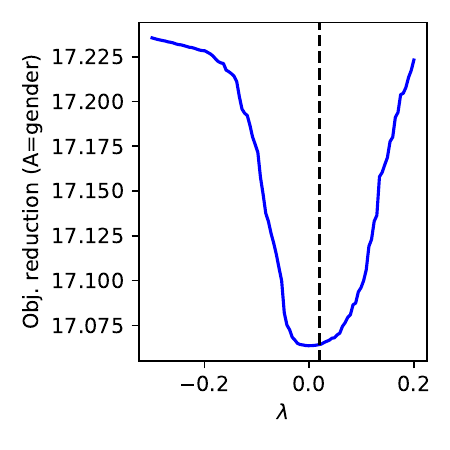}\includegraphics[width=0.25\textwidth]{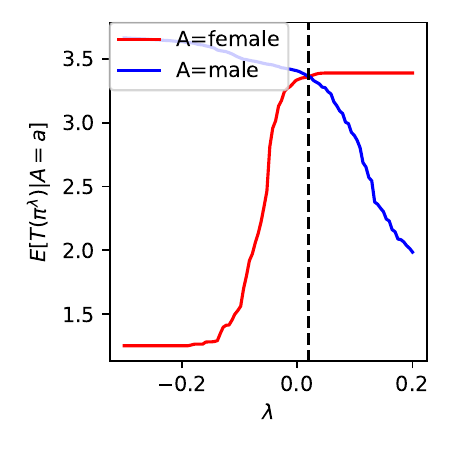}
        \vspace{-10pt}
    \caption{Pretrial supervised release case study: policy optimization via enumeration over $\lambda$ in penalty formulation. Local scalarized score $100\,\E[Y(\pi^\lambda)] + 20\,\E[T(\pi^\lambda)]$ and treatment rate $\E[T(\pi^\lambda)\mid A=a]$, for $A$ = race, gender.}
\label{fig:objconstr}
\end{figure}
% \paragraph{Fairness-constrained optimization over interpretable scorecards.}
% To preserve interpretability under limited overlap, we optimize monotone FTA--NCA scorecards, optionally stratified by age, under the overlap-robust formulation of Proposition~7; full uncertainty-set and optimization details are deferred to Appendix~F.4.1. 
In \Cref{fig:objconstr}, we highlight results from constrained policy optimization. The first two plots in each set illustrate the local scalarized score $100\,\E[Y(\pi)] + 20\,\E[T(\pi)]$ and the group-$A=a$ treatment rate for $A=$ race (nonwhite/white) and gender (female/male), respectively. This scalarization is specific to the supervised-release application. On the x-axis, we plot the penalty $\lambda$ that we use to assess the solutions of \Cref{prop-threshold-treatment-parity}. The vertical dashed line indicates the solution achieving $\epsilon = 0$, i.e., parity in treatment takeup. Near-optimal policies that reduce treatment disparity can be of interest given advocacy concerns about how the expansion of supervised release could increase the surveillance of already surveillance-burdened marginalized populations. We see that, indeed, for race, surveillance parity--constrained policies can substantially reduce disparities for nonwhite defendants while not increasing surveillance on white defendants that much: The red line decreases significantly with a low increase of the blue line (and low increases to the local scalarized score). On the other hand, for gender, the opportunity for improvement in the surveillance disparity is much smaller. 

\paragraph{Fairness-constrained optimization over interpretable scorecards.}
A key challenge in this setting is the lack of overlap in recommendations for supervised release. Prior work \citep{ben2021safe} addresses this by performing robust optimization over encouragements, effectively extrapolating outcome models. In contrast, our decomposition of encouragement into treatment take-up and treatment efficacy allows us to rely on treatment overlap while restricting robustness to the treatment response to recommendations. This is a milder extrapolation problem, as it can be guided by behavioral structure such as monotonicity.
% A crucial challenge remains the lack of \textit{overlap} in who receives the recommendation for supervised release. Nonetheless, our paper's mission to disentangle encouragement take-up from efficacy helps reduce conservativity of robust approach. Prior work \citep{ben2021safe} suggests robust optimization over encouragements, therefore extrapolating outcome response. Our decomposition into take-up and efficacy allows us to appeal to \textit{treatment overlap} while being robust to non-overlap in \textit{treatment response to recommendations}, which can be more plausibly extrapolated based on regularities in behavioral tendencies like monotonicity. 

%  
We optimize over \textit{local regions} of potential improvement upon the existing status quo policy. We use \Cref{prop-robustlp} to optimize over uncertainty sets based on Lipschitz parametric extrapolation of high-overlap regions to low-overlap regions. The status quo policy is given by a decision-making framework on combinations of FTA and NCA scores, and we model $R=1$ if the highest levels of supervision are recommended. For privacy, the predictive risk scores were discretized into score buckets 1-2, 3-4, 5-6 etc. We optimize over the discrete space of \textit{interpretable scorecards}, i.e. comprising of recommendation decisions for each (FTA, NCA) pair because these are nearly the same form as the current decision-making matrix. We also consider covariate-richer \textit{age-split} scorecards that expand policy expressivity while remaining interpretable. (Age has widely been empirically shown to be a major driver of discretion in pretrial decision-making).

\begin{figure}
    \centering
    \includegraphics[width=\linewidth]{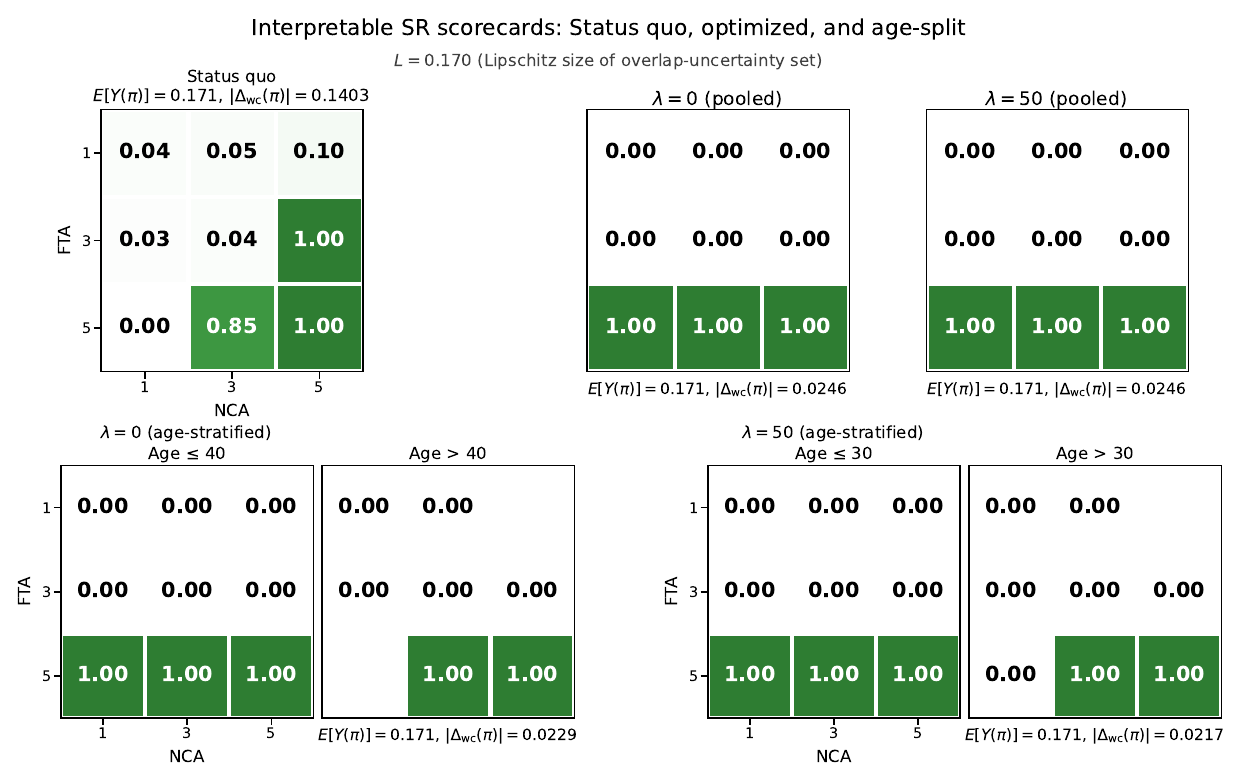}
    \caption{}
    \label{fig:scorecards-psa}
\end{figure}

To optimize over Lipschitz recommendation response models $p_{1\mid r}(x),$ we use a characterization of \citet{khan2024off} which reduces the Lipschitz uncertainty set on $p_{1\mid r}(x)$ to a per-datapoint interval uncertainty set, compatible with \Cref{prop-robustlp}. We calibrate the Lipschitz constant $L=0.17$ based on the largest effective Lipschitz constant observed on the good-overlap region. We also impose the additional compliance monotonicity constraint that $p_{1\mid 1}(x)-p_{1\mid 0}(x)\ge 0,\;\forall x$. To preserve interpretability, we optimize over \textit{monotone scorecards}, i.e. recommendation within one $(FTA, NCA)$ pair requires recommendation for larger $(FTA, NCA)$ pairs in a partial order. This partial order is pre-calculated and imposed as another linear constraint. Finally, under the status quo policy, about $18\%$ are recommended stricter supervised release: we assume this treatment budget holds and impose it as an additional constraint. 
These structural constraints can be collected via linear polytope constraints on \Cref{prop-robustlp}. We consider a scalarized outcome with FTA:treatment at a 100:20 ratio. See the appendix \Cref{apx-addldisc-scorecard} for more details and the full formulation.

\Cref{fig:scorecards-psa} displays the results, where each ``scorecard'' has FTA scores on the y-axis, and NCA scores on the x-axis. Even the pooled cost-optimal scorecard substantially reduces worst-case disparity relative to the status quo (0.0246 versus 0.0572). Allowing age-stratified scorecards improves worst-case disparity slightly further, to 0.0229 or 0.0217. Thus, local changes to simple decision matrices can reduce disparity while remaining interpretable. It is important that in this situation, \textit{even cost-minimizing scorecards without explicit fairness penalty can improve worst-case disparities relative to the current status quo, subject to our robust Lipschitz extrapolation partial identification intervals}. Although our local improvements over interpretable scorecards are more restricted compared to our full policy optimization earlier, we can therefore provide concrete recommendations for local changes to status-quo decision-making matrices that could improve disparities at little cost in FTA. Optimizing over covariate-rich data-driven recommendation policies can mitigate fairness-accuracy concerns by providing Pareto-improving local improvements upon the status quo. 

\begin{figure}
    \centering
\includegraphics[width=\linewidth]{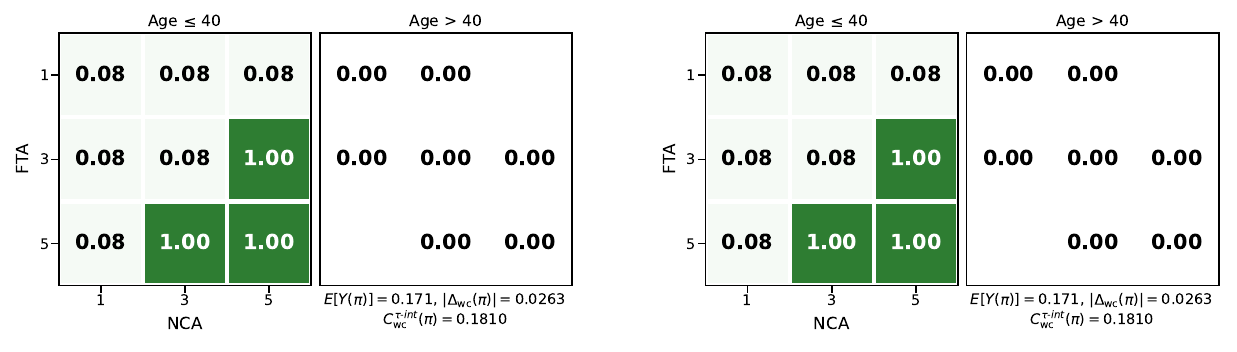}
    \caption{Interpretable supervised release scorecards optimized over uncertainty sets on $\tau$ alone with $\Gamma = 1.247$. Worst-case disparities assessed with $L=0.204$.}
    \label{fig:robust-scorecards-unconf}
\end{figure}

\begin{remark}[Robustness under violations of unconfoundedness]
One concern could be potential violations of \Cref{asn-unconfoundedness} (unconfoundedness), in that decisions about treatment relied on additional information that is unmeasured in our dataset. In \Cref{apx-sensanalysis-unconf}, we leverage the robust linear program \Cref{prop-robustlp} to instead optimize over bounds on heterogeneous treatment effects under bounded violations of treatment unconfoundedness. Further, we audit worst-case disparities of the confounding-robust-optimal policies over the same Lipschitz uncertainty set on overlap. In \Cref{fig:robust-scorecards-unconf} we find that our main findings hold directionally, in that similar \textit{confounding-robust} scorecards can robustly control disparities estimated at $~0.0263$, though the bounds result in more conservative assessment of who stands to benefit.
\end{remark}

\section{Conclusion}
Across these case studies, the main managerial implication is that organizations should explicitly model both treatment take-up and treatment efficacy, and audit disparities in each separately, because gaps in either mechanism suggest different managerial responses. In the SNAP setting, our analysis can quantify the benefits and limits of improvements from targeting. Low-touch interview reminders improve outcomes on average, but for some more than others --- operational redesign of outreach, access, defaults, or service navigation can ensure that those who stand to gain the most do. Our extension to algorithmic advice in the supervised-release case study illustrates that searching over covariate-rich personalized rules can uncover local Pareto improvements over simpler, non-data-driven status-quo rules, reducing disparity with little increase in objective costs. %More broadly, our framework is valuable not just for targeting, but for diagnosis: separating treatment efficacy from treatment take-up helps managers determine whether disparities call for better allocation rules or for redesign of the encouragement and service process.%Overall, the message is to model and audit both take-up and efficacy: constrained optimization over data-driven optimization can suffer win-win solutions improving upon the status quo.
% \begingroup \parindent 0pt \parskip 0.0ex \def\enotesize{\normalsize} \theendnotes \endgroup

% \paragraph{Acknowledgments}
% We thank the San Francisco Human Services Agency for access to the data and Peri Weisberg for data access and discussions. %We thank Alberto Abadie, Avi Feller, and Eli Ben-Michael for discussions. 

\bibliography{beyondfairness,encouragement,auto-semipara}
\bibliographystyle{abbrvnat}

\clearpage
\begin{APPENDICES}

\noindent\textbf{Appendix contents.}
\begin{itemize}[noitemsep,topsep=2pt]
  \item \Cref{sec-addldisc}: Additional Discussion (additional related work; alternative identification)
  \item \Cref{apx-addldisc-fairnessanalysis}: A simple model with low fairness-value tradeoff
  \item \Cref{apx-addldisc-method}: Additional discussion on method (extrapolation; overlap)
  \item \Cref{apx-disc-opt}: Additional discussion on general optimization method (constrained optimization; saddle-point algorithm)
  \item \Cref{sec-proofs}: Proofs (identification and characterization; generalization; robust characterization; general algorithm)
  \item \Cref{sec-addl-experiments}: Additional case studies and experimental details (SNAP; simulation; OHP; PSA-DMF)
\end{itemize}

\section[Additional Discussion]{Additional Discussion}\label{sec-addldisc}

\subsection{Additional Related work}

\textbf{Intention-to-treat analysis.} 
We appeal to intention-to-treat analysis with randomness that either arises from human decision-makers or individual nonadherence/noncompliance, but we generally assume the data does not include information about the \textit{identity} of \textit{different} decision-makers, which is common with publicly available data. Our conditional exclusion restriction also means that certain decomposed effects are zero, so mediation analysis is less relevant. 
A related literature studies principal stratification \citep{jiang2020multiply}, which is less interpretable since stratum membership is unknown. Similarly, even though encouragement effects are driven by compliers, complier-conditional analysis is less policy-relevant since complier identities are unknown. In general, our causal identification arguments are based on covariate-adjusted intention-to-treat analysis and covariate-adjusted as-treated analysis. We avoid estimation of stratum-specific effects, because if complier status is unknown, prescriptive decision rules cannot directly personalize by stratum membership.

\textbf{Fair off-policy learning.}
We highlight some most closely related works in off-policy learning (omitting works in the sequential setting). \citep{metevier2019offline} studies high-probability fairness constraint satisfaction.
\citep{kim2022doubly} studies doubly-robust causal fair classification, while others have imposed deterministic resource constraints on the optimal policy formulation \citep{chohlas2021learning}. Other works study causal or counterfactual risk assessments \citep{mishler2021fairness,coston2020counterfactual}. Our perspective is closer to that of off-policy learning, i.e. approximating direct control over the intervention by assuming stability in decision-maker treatment assignment probabilities.  \citep{kallus2019assessing} studies (robust) bounds for treatment responders in binary outcome settings; this desiderata is coupled to classification notions of direct treatment. Again, our focus is on modeling the fairness implications of non-adherence. Indeed, in order to provide general algorithms and methods, we do build on prior fair classification literature. A different line of work studies ``counterfactual" risk assessments which models a different concern. %

\textbf{Principal stratification and mediation analysis in causal inference. }\citep{liu2021efficient} studies an optimal test-and-treat regime under a no-direct-effect assumption, that assigning a diagnostic test has no effect on outcomes except via propensity to treat, and studies semiparametric efficiency using Structural Nested-Mean Models. Though our exclusion restriction is also a no-direct-effect assumption, our optimal treatment regime is in the space of recommendations only as we do not have control over the final decision-maker, and we consider generally nonparametric models. 

We briefly go into more detail about formal differences, due to our specific assumptions, that delineate the differences to mediation analysis. Namely, our conditional exclusion restriction implies that $Y_{1T_0} = Y_{T_0}$ and that $Y_{0T_1} = Y_{1T_1}$ (in mediation notation with $T_r = T(r)$ in our notation), so that so-called \textit{net direct effects} are identically zero and the \textit{net indirect effect} is the treatment effect (also called average encouragement effect here). 

\textbf{Other work on algorithmic fairness encouraging access in operations.} 
There is extensive work on algorithmic fairness in operations, measured in different ways. \citet{bertsimas} studies efficiency trade-offs versus Nash social welfare fairness. 
On the non-algorithmic advice, but encouragement side, \citet{freund2025fair} consider dynamic resource allocation under fairness constraints, when deciding monetary incentives for retention under stochastic participation. Other models of stochastic resource usage and demand are also relevant, though potentially more specialized than our non-adherence setting. 

\textbf{Human-in-the-loop in consequential domains.}
There is a great deal of interest in designing algorithms for the ``human in the loop" and studying expertise and discretion in human oversight in consequential domains \citep{de2020case}. On the algorithmic side, recent work focuses on frameworks for learning to defer or human-algorithm collaboration. Our focus is \textit{prior} to the design of these procedures for improved human-algorithm collaboration: we primarily hold fixed current human responsiveness to algorithmic recommendations. Therefore, our method can be helpful for optimizing local nudges. Incorporating these algorithmic design ideas would be interesting directions for future work.  

\textbf{Empirical literature on judicial discretion in the pretrial setting.}
Studying a slightly different substantive question, namely causal effects of pretrial decisions on later outcomes, a line of work uses individual judge decision-makers as a leniency IV for the treatment effect of (for example, EM) on pretrial outcomes \citep{arnold2022measuring,arnold2018racial,lum2017causal}. And, judge IVs rely on quasi-random assignment of individual judges. We focus on the prescriptive question of optimal recommendation rules in view of patterns of judicial discretion, rather than the descriptive question of causal impacts of detention on downstream outcomes. 

A number of works have emphasized the role of judicial discretion in pretrial risk assessments in particular \citep{green2021algorithmic,doleac2020algorithmic,ludwig2021fragile}. In contrast to these works, we focus on studying decisions about electronic monitoring, which is an intermediate degree of decision lever to prevent FTA that nonetheless imposes costs.  \citep{imai2020experimental} study a randomized experiment of provision of the PSA and estimate (the sign of) principal causal effects, including potential group-conditional disparities. They are interested in a causal effect on the principal stratum of those marginal defendants who would not commit a new crime if recommended for detention. \citep{ben2021safe} study policy learning in the absence of positivity (since the PSA is a deterministic function of covariates) and consider a case study on determining optimal recommendation/detention decisions; however their observed outcomes are downstream of judicial decision-making. Relative to their approach, we handle lack of overlap via an exclusion restriction so that we only require ambiguity on \textit{treatment responsivity models} rather than causal outcome models.

\subsection{Alternative identification}\label{apx-addldisc-identification}

\paragraph{Alternative IV-style interpretation following Wang and Tchetgen Tchetgen.}
% \az{to edit }
If one prefers to avoid our stronger treatment-unconfoundedness assumption and the corresponding heterogeneous-treatment-effect language, an alternative route available when the recommendation is truly randomized is to interpret the recommendation \(R\) as an instrumental variable for treatment \(T\). In our notation, let
\[
\Delta_Y(x,a)
:=
\E[Y \mid R=1,X=x,A=a]-\E[Y \mid R=0,X=x,A=a]
\]
The corresponding conditional Wald estimand is
\[
\delta_W(x,a):=\frac{\Delta_Y(x,a)}{\kappa(x,a)}.
\]
Under the Wang--Tchetgen Tchetgen IV setup, one assumes: (i) exclusion, \(Y(t,r)=Y(t)\) for all \(t,r\); (ii) IV independence from unmeasured confounding, \(R \indep U \mid X,A\); (iii) IV relevance, \(\E[T\mid R=1,X,A]\neq \E[T\mid R=0,X,A]\); and (iv) \(Y(t)\indep (T,R)\mid X,A,U\). They then show that the conditional ATE is identified by the conditional Wald estimand if one additionally assumes either
\[
\E[T\mid R=1,X,A,U]-\E[T\mid R=0,X,A,U]
=
\E[T\mid R=1,X,A]-\E[T\mid R=0,X,A],
\]
that is, no additive \(U\)--\(R\) interaction in the first stage, or
\[
\E[Y(1)-Y(0)\mid X,A,U]
=
\E[Y(1)-Y(0)\mid X,A],
\]
that is, no additive \(U\)--\(t\) interaction in the treatment effect. Under these assumptions,
\[
\delta_W(x,a)=\E[Y(1)-Y(0)\mid X=x,A=a].
\]

\paragraph{Interpretation relative to our main analysis.}
Under our main text assumptions, we write the encouragement score as
\[
s(x,a)=\Delta_Y(x,a)=\kappa(x,a)\tau(x,a),
\]
where \(\tau(x,a)\) is an as-treated heterogeneous treatment effect identified by treatment unconfoundedness. Under the Wang--Tchetgen Tchetgen alternative, the same reduced-form score is instead written as
\[
s(x,a)=\Delta_Y(x,a)=\kappa(x,a)\delta_W(x,a).
\]
Thus the policy-relevant reduced-form score \(s(x,a)\) and the first-stage term \(\kappa(x,a)\) are unchanged; what changes is the interpretation of the multiplier. In particular, one replaces the language of ``heterogeneous treatment effects'' with the language of conditional Wald effects. If one further strengthens the IV assumptions to a \emph{causal IV} assumption and adds monotonicity,
\begin{equation*}  \textstyle
R \indep \{T(1),T(0),Y(1),Y(0)\}\mid X,A,
\qquad
T(1)\ge T(0)\ \text{a.s.},
\end{equation*}
then the same \(\delta_W(x,a)\) also admits a conditional LATE (CLATE) interpretation, so that one may equivalently write $s(x,a)=\kappa(x,a)\tau_c(x,a),$ with \(\tau_c(x,a)\) denoting a conditional complier-local average treatment effect.Then 
\[
\tau_c(x,a)
:=
\E\!\left[Y(1)-Y(0)\mid T(1)>T(0),\,X=x,\,A=a\right].
\]
Thus, under monotonicity, the term multiplying the first stage is no longer an as-treated heterogeneous treatment effect for the full stratum \((x,a)\), but rather the average treatment effect for the latent subgroup of compliers within that stratum. In this case, $\kappa(x,a)=\Pr(T(1)>T(0)\mid X=x,A=a)$
is the conditional complier mass, and $s(x,a)=\kappa(x,a)\tau_c(x,a)$ decomposes the value of encouragement into an extensive-margin term (how many are induced into treatment) and an intensive-margin term (how much treatment helps those induced units). Therefore the policy ranking is unchanged, but the interpretation of treatment-effect heterogeneity becomes local to compliers rather than population-wide.

Under all such alternative specifications, the core message remains the same: naive ITT targeting depends on a product of a term describing first-stage selection into treatment (whether they are restricted to compliers, as in the standard causal IV case with monotonicity, or not, without monotonicity) and a term describing treatment effectiveness (whether conditional to compliers or not).

% \clearpage

\section{Additional discussion on \Cref{sec-fairness-analysis}}

\subsection{A simple model with low fairness-value tradeoff.}\label{apx-addldisc-fairnessanalysis}
\paragraph{The model}
To concretize, consider a simple model with covariates partitioned as $(X^s, X^\kappa, X^\tau)$ where $X$ is a shared factor affecting both compliance and heterogeneous treatment effect, $X^\kappa$ modifies compliance alone, and $X^\tau$ modifies the heterogeneous treatment effect alone. Consider the following model: 
\begin{align}
&X = (X^s, X^\tau, X^\kappa), \;\;\; X^s \sim N(0,1),\;\;\;  X^\tau \sim N(0,1), \;\;\; X^\kappa \mid A=g \sim N(\mu_g,1)%, \;\;\; X^\kappa \mid A=1 \sim N(\mu_1,1)
\label{eqn-simple-gaussian-model}\\
    & \tau(X,A) = \log(1+\exp(\beta_s X^s + \beta_\tau X^\tau )),\qquad 
    % \\
    p_{1\mid 0}(X,A) 
    % &
    = \bar{p}_0, \quad \kappa(X,A) = (1-\bar{p}_0) \Phi(\theta^\kappa X^\kappa + \theta^s X^s) %+ \epsilon_\kappa )%, \qquad\epsilon_\kappa \sim N(0,0.75^2) 
\end{align}
  Importantly, in the simple model above, the protected attribute $A$ has no direct effect on compliance or the heterogeneous treatment effect. However, $A$ impacts the distribution of a compliance-relevant covariate $X^\kappa$. For example, $X^\kappa$ could be zip code that affects location and therefore distance to medical services. With other health information about a person, zip code wouldn't plausibly biologically affect treatment effectiveness, but could affect adherence/compliance. 
  
The simple model is primarily diagnostic: it shows that large disparities under naive ITT targeting can arise even when treatment benefits are identical across groups. As a secondary implication, it also identifies a regime in which suppressing compliance-only proxies can achieve a favorable fairness–value tradeoff, thereby motivating the fairness-constrained methods developed later. However, it may not be possible to implement group-specific thresholds and it may be \textit{procedurally fairer} and preferable to optimize policies over $x$ alone rather than $x,a$. Let $\pi_{DP}^{blind}(X)$ denote a solution to \cref{eqn-demographic-parity-budget} but optimizing over policies that are measurable with respect to $(X^s,X^\tau)$. %Finally, our simple example above motivates another procedurally fair policy $\pi^{drop}(X,X^\tau)$ which simply drops the compliance-only information $X^\kappa.$

\begin{proposition}\label{prop-simple-model-results}
   Under the above model of \Cref{eqn-simple-gaussian-model}, $$\textstyle 
   % \E[\kappa(X,A)\mid A=g]  = (1-\bar{p}_0) \Phi \left( \frac{\lambda^\kappa \mu_g}{\sqrt{1+\left(\lambda^\kappa\right)^2+\left(\lambda^s\right)^2+\sigma_\kappa^2}} \right), \;\;
   \E[\kappa(X,A)\mid X^s, A=g]  = (1-\bar{p}_0) \Phi \left( \frac{\theta^\kappa \mu_g + \theta^s X^s}{\sqrt{1+\left(\theta^\kappa\right)^2
   }} \right), \;\; \E[\kappa(X,A)\mid  A=g]  = (1-\bar{p}_0) \Phi \left( \frac{\theta^\kappa \mu_g }{\sqrt{1+\left(\theta^\kappa\right)^2+\left(\theta^s\right)^2
   }} \right)$$

   We also have that $\E[ \pi_{DP}^{blind}(X^s,X^\tau)\mid A=a] - \E[ \pi_{DP}^{blind}(X^s,X^\tau)\mid A=b]=0$. 
The fairness-value tradeoff is bounded by:
$$ V(\pi^*) - V(\pi_{DP}^{blind}) \leq \abs{s(X,A) - s(X^s, X^\tau)}\leq  \frac{(1-\bar{p}_0)
       \abs{\theta^\kappa }
       \E[ \abs{\tau(X^s,X^\tau)}]\E[\abs{ (X^\kappa - \E[X^\kappa ])
    }}{\sqrt{2\pi}}.
$$
with the last inequality holding in the above model of \Cref{eqn-simple-gaussian-model}.
\end{proposition}

\begin{remark}
\Cref{prop-simple-model-results} characterizes fairness-value trade-offs in the simple proxy compliance model. The disparity $\Delta_\kappa
=
\left|\E[\kappa(X,A)\mid A=1]-\E[\kappa(X,A)\mid A=0]\right|$
captures the compliance disparity induced by $X^\kappa$, which we interpret as a proxy for the fairness benefit of dropping $X^\kappa$. (It is a proxy because the fairness benefit depends on the budget in general). In contrast, $\theta^\kappa\E|X^\kappa-\E[X^\kappa]|.$
Proxy-blindness achieves low fairness-welfare trade-off when $X^\kappa$ is mostly a marker of group difference, rather than mostly a source of individual-level compliance prediction within each group.\end{remark}

\paragraph{Interpretation}  We set the parameter vector to $\beta_\tau = 1.25$, $\lambda^s = 0.9$ (treatment), $\mu_a = 2$, $\mu_b = -2$, $\lambda^\kappa = 0.18$, $\bar p_0 = 0.2$, and $c_\kappa = 0.5$. The unconstrained policy $\pi^*$ attains value $V(\pi^*) = 0.2159$ at budget $b = 0.3$. Under the unconstrained policy, encouragement disparity is $\E[\pi^*(X,A)\mid A=a]- \E[\pi^*(X,A)\mid A=b] = 0.1232$ and treatment disparity is $|\mathbb{E}[T(\pi^*)\mid A{=}a] - \mathbb{E}[T(\pi^*)\mid A{=}b]| = 0.0652$. A compliance-proxy blind policy $\pi^{\mathrm{blind}}$ (ranking by $(X^s,X^\tau)$ alone) attains $V(\pi^{\mathrm{blind}}) = 0.2128$ with encouragement disparity $0.0088$ and treatment disparity $0.0328$. Therefore the value gap from blinding is $V(\pi^*) - V(\pi^{\mathrm{blind}}) = 0.0031$ (a relative 1.4\% drop) although it eliminates the $0.1232$ encouragement disparity. Therefore, in this simple model, covariate-rich personalized policies offer more degrees of freedom to find decision alternatives that could greatly reduce outreach disparities at modest loss of value. Of course, the extent of such trade-offs in general depends on the data and context in any particular application. % and from equalizing rates (at fixed budget) is $V(\pi^*) - V(\pi^{\mathrm{eq.}}) = 0.0024$.

\begin{table}[ht]
\centering
\caption{Budget--access--value tradeoffs at budget $b = 0.30$. Compliance disparity $\Delta_\kappa = 0.1071$.}
\label{tab:tradeoff}
\begin{tabular}{lcccc}
\toprule
Policy & $V(\cdot)$ & $V(\pi^*){-}V(\cdot)$ & 
Budget disp.%$|\bar\pi_a{-}\bar\pi_b|$
& Treatment disp. \\
\midrule
$\pi^*$ & 0.2159 & 0 & 0.1232 & 0.0652 \\
$\pi^{blind}$ & 0.2128 & 0.0031 & 0.0088 & 0.0328 \\
\bottomrule
\end{tabular}
\end{table}

\section{Additional discussion on method }\label{apx-addldisc-method}

\subsection{Extrapolation in \Cref{setting-algorec}}
A naive approach based on parametric extrapolation estimates $p_{1\mid 1}(X)$, treatment responsivity, on the observed data and simply uses the parametric form to extrapolate to the full dataset.
In the case study later on, the support of $X\mid R=1$ is a superset of the support of $X\mid R=0$ in the observational data. 
Given this, we derive the following alternative identification based on marginal control variates (where $p_t = P(T=t\mid X)$ marginalizes over the distribution of $R$ in the observational data): 
\begin{proposition}[Control variate for alternative identification ]\label{prop-nooverlap} 
Assume $Y(T(r)) \perp T(r) \mid R=r,X$.
  \begin{align*} \textstyle  V(\pi) = \sum_{t\in\tspace, r\in\{0,1\}} \E\left[ \left\{ Y(t)  \frac{\indic{T=t}}{p_{t}(X)} 
 + \left( 1-\frac{\indic{T=t}}{p_{t}(X)}\right) \mu_{t}(X)
 \right\}
\ptmidrx \right] 
 \end{align*} 
\end{proposition}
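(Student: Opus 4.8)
The plan is to reduce the claimed identity to the regression-adjustment identification of \Cref{prop-identification}: I will recognize the quantity in braces as a control variate whose conditional expectation given $X$ collapses to the outcome regression $\mu_t(X)$, so that the right-hand side telescopes back to $\sum_{t,r}\E[\pi_r(X)\,\mu_t(X)\,\ptmidrx]$. Write $\psi_{rt} \coloneqq c_{rt}(Y(t))\frac{\indic{T=t}}{p_t(X)} + \bigl(1 - \frac{\indic{T=t}}{p_t(X)}\bigr)\mu_t(X)$ for the bracketed term. I would first prove $\E[\psi_{rt}\mid X] = \mu_t(X)$ a.s.; then, since $\ptmidrx$ and the policy weight $\pi_r(X)$ (which I read as implicit in the bracket, as in \Cref{prop-doublerobustness}) are $\sigma(X)$-measurable, the tower property turns each summand into $\E[\pi_r(X)\,\mu_t(X)\,\ptmidrx]$, and summing over $t\in\tspace$, $r\in\{0,1\}$ returns $V(\pi)$ by \Cref{prop-identification}.

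The identity $\E[\psi_{rt}\mid X] = \mu_t(X)$ splits in two. For the additive correction, the definition of the \emph{marginal} ($R$-averaged) propensity gives $p_t(X) = P(T=t\mid X) = \E[\indic{T=t}\mid X]$, so, using $\sigma(X)$-measurability of $\mu_t(X)$, $\E[(1 - \indic{T=t}/p_t(X))\,\mu_t(X)\mid X] = \mu_t(X)(1 - p_t(X)/p_t(X)) = 0$; this mean-zero property is exactly what makes the augmentation a control variate, and it needs only $p_t(X) > 0$, which is the treatment-overlap half of \Cref{asn-overlap}. For the inverse-propensity term, consistency (\Cref{asn-consistency}) gives $\indic{T=t}\,c_{rt}(Y(t)) = \indic{T=t}\,c_{rt}(Y)$, so it remains to show $\E[\indic{T=t}\,c_{rt}(Y)\mid X] = p_t(X)\,\mu_t(X)$; here the within-arm unconfoundedness assumption $Y(T(r))\perp T(r)\mid R=r,X$ legitimizes pooling the two recommendation arms under the single weight $p_t(X) = \sum_r e_r(X)\,\ptmidrx$ (it is the no-recommendation-overlap counterpart of \Cref{asn-unconfoundedness}), and the conditional exclusion restriction (\Cref{asn-exclusionrestriction}) identifies $\E[c_{rt}(Y)\mid T=t,X]$ with $\mu_t(X)$ (which, by \Cref{asn-exclusionrestriction}, does not depend on which recommendation arm it is estimated in), closing the chain.

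The main obstacle — beyond spotting the control-variate structure — is the assumption bookkeeping in this second piece, precisely because $T=T(R)$ is \emph{selected} rather than randomized and the weight $p_t(X)$ pools both recommendation arms. One cannot appeal to a single unconfoundedness statement: the argument should condition further on $R=r'$, use $Y(T(r'))\perp T(r')\mid R=r',X$ arm by arm to detach the indicator $\indic{T(r')=t}$ from $c_{rt}(Y(t))$, re-aggregate over $r'$ with weights $e_{r'}(X)$ to recover $p_t(X)\,\mu_t(X)$, and check at each arm that the consistency substitution between $Y$, $Y(t)$, and $Y(T(R))$ is valid, not merely in aggregate. Tracking exactly which of \Cref{asn-consistency,asn-exclusionrestriction,asn-unconfoundedness,asn-overlap} and the newly added assumption is load-bearing at each line is the delicate part; once $\E[\psi_{rt}\mid X]=\mu_t(X)$ is in hand, the remainder is a routine application of iterated expectations and \Cref{prop-identification}.
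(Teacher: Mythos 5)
Your proposal is correct and takes essentially the same route as the paper's own proof: the paper derives the identity forward from $V(\pi)$ — first the regression-adjustment form, then unconfoundedness to insert the weight $\indic{T=t}/p_t(X)$, then the mean-zero control variate, then iterated expectations — while you run the identical chain in reverse, showing the bracketed term has conditional mean $\mu_t(X)$ given $X$ and then invoking \Cref{prop-identification}. Your reading of the omitted $\pi_r(X)$ as implicit matches the paper's displayed proof, and your arm-by-arm conditioning on $R=r'$ with weights $e_{r'}(X)$ is just a more explicit bookkeeping of the steps the paper labels ``unconf.''/``ER''.
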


On the other hand, parametric extrapolation is generally unsatisfactory because conclusions will be driven by model specification rather than observed data. Nonetheless, it can provide a starting point for robust extrapolation of structurally plausible treatment response probabilities.
\section{Additional discussion on general optimization method}\label{apx-disc-opt}
\subsection{Additional discussion on constrained optimization}

\paragraph{Feasibility program }
We can obtain upper/lower bounds on $\epsilon$ in order to obtain a feasible region for $\epsilon$ by solving the below optimization over maximal/minimal values of the constraint: 
\begin{align}
\overline{\epsilon},\underline{\epsilon} 
\in \max_{\pi} / \min_{\pi} \E[T(\pi)\mid A=a]- \E[T(\pi)\mid A=b] \label{eqn-feas-constr}
\end{align}
\begin{align}%
% V_\epsilon^* = 
\max_{\pi} \; 
\{V(\pi) \colon \E[T(\pi)\mid A=a]- \E[T(\pi)\mid A=b]  \leq \epsilon \}
\end{align}

\subsection{Additional discussion on \Cref{alg-metaalg2} (general algorithm)}\label{sec-saddle}

\subsubsection{Constrained policy optimization oracle}\label{apx-pol-opt}
% \citet{agarwal2018reductions} consider a generic formulation of fairness constraints as moment conditions on $\pi$. Their key algorithmic design choices include convexification of the space of policies, that is, they optimize over distributions over policies $Q\in \mathcal{P}(\Pi),$ where $\mathcal{P}(\Pi)$ is the space of distributions over policies. The $\vert \mathcal{K} \vert$ linear constraints and $J$ groups (values of protected attribute $A$) are summarized via a coefficient matrix ${M} \in \mathbb{R}^{K \times J}$, 
% which multiplies a vector of constraint moments $h_j(\pi), j \in [J]$ (with $J$ being the number of groups); $O=(X, A, R,T, Y)$ denotes our data observations and $d$ the constraint constant vector: 
% \begin{align*}& \textstyle h_j(\pi)=\mathbb{E}\left[g_j(O, \pi(X)) \mid \mathcal{E}_j\right] \;\; \text { for } j \in J,\;\; 
% {M} {h}(\pi) \leq {d}
% \end{align*}
% The elements of $h_j(\pi)$ are average functionals (for example, the average treatment takeup in group $j$). 
% Importantly, the moment function $g_j$ depends on $\pi$, while the conditioning event $\mathcal{E}_j$ cannot depend on $\pi$. Many important fairness constraints can nonetheless be written in this framework, such as burden/resource parity and parity in true positive rates, but not measures such as calibration whose conditioning event does depend on $\pi$. (See \Cref{sec-saddle} for examples omitted for brevity.) %
Although Agarwal et al.\ (2018) use the more general notation $Mh(\pi)\le d$, in our setting these constraints are encoded directly as coordinates of $\Delta(\pi)$, so we use $\Delta(\pi)\le d$ in the main text.
\citet{agarwal2018reductions} consider a generic formulation of fairness constraints as moment conditions on $\pi$. Their key algorithmic design choices include convexification of the space of policies, that is, they optimize over distributions over policies $Q\in \mathcal{P}(\Pi),$ where $\mathcal{P}(\Pi)$ is the space of distributions over policies. The linear constraints are encoded as coordinates $\Delta_k(\pi)\le d_k$ of the moment vector $\Delta(\pi)$, with observed-data scores $\delta_k(O;\pi,\eta)$ satisfying $\E[\delta_k(O;\pi,\eta_0)]=\Delta_k(\pi)$.
The elements of $\Delta_k(\pi)$ are average functionals (for example, the average treatment takeup in group $k$).
Importantly, the moment function $\delta_k$ depends on $\pi$, while the conditioning event cannot depend on $\pi$. Many important fairness constraints can nonetheless be written in this framework, such as burden/resource parity and parity in true positive rates, but not measures such as calibration whose conditioning event does depend on $\pi$. (See \Cref{sec-saddle} for examples omitted for brevity.) %

\paragraph{Implementing best-response oracle }
Given $\lambda_t,$ the algorithm computes a best response over $Q$ ($\operatorname{BEST}_\beta\left({\lambda}_t\right)$); since the worst-case distribution will place all its weight on one classifier, this step reduces to cost-sensitive/weighted classification \citep{beygelzimer2009offset,zhao2012estimating}, which we describe in further detail below. Computing the best response over $\operatorname{BEST}_{{\lambda}}(\hat{Q}_t))$ selects the most violated constraint. 
\subsubsection{Weighted classification reduction and off-policy estimation}\label{sec-saddle-weighted-class-reduction}

There is a well-known reduction of optimizing the zero-one loss for policy learning to weighted classification. Note that the reductions approach of \citep{agarwal2014taming} works with the Lagrangian relaxation which only further introduces datapoint-dependent additional weights. Notationally, in this section, for policy optimization, $\pi\in\{-1,+1\},T\in\{-1,+1\}$ (for notational convenience alone). We consider parameterized policy classes so that $\pi(x)=\pi(1\mid x) = \operatorname{sign}(f_\beta(x))$ for some index function $f$ depending on a parameter $\beta \in \mathbb{R}^d$.
Consider the centered regret $J(\pi) = \E[Y(\pi)] - \frac 12 \E[\E[ Y\mid R=1,X] + \E[ Y\mid R=0,X]].$ 
We summarize different estimation strategies via the score function $\psi_{(\cdot)}(O),$ where $(\cdot) \in \{DM, IPW, DR\}$: the necessary property is that ${\E[ \psi \mid X] = \E[ Y\mid R=1,X] - \E[ Y\mid R=0,X]}$. The specific functional forms of these different estimators are as follows,  
where $\mu^R_r(X) = \E[Y\mid R=r,X]:$  
$$\textstyle \psi_{DM} = (p_{1\mid 1}(X)-p_{1\mid 0}(X)) (\mu_1(X)-\mu_0(X)), \psi_{IPW} = \frac{RY}{e_R(X)}, \psi_{DR} = \psi_{DM} + \psi_{IPW} + \frac{R\mu^R(X)}{e_R(X)}.$$

\subsubsection{Additional fairness constraints and examples in this framework}
In this section we discuss additional fairness constraints and how to formulate them in the generic framework. Much of this discussion is quite similar to \citep{agarwal2018reductions} (including in notation) and is included in this appendix for completeness only. We additionally provide concrete discussion of the reduction to weighted classification, and concrete descriptions of the causal fairness constraints in the more general framework. 

We first discuss how to impose the treatment parity constraint. This is similar to the demographic parity example in \cite{agarwal2018reductions}, with different coefficients, but included for completeness. (Instead, recommendation parity in $\E[\pi\mid A=a]$ is indeed nearly identical to demographic parity.)
\begin{example}[Writing treatment parity in the general constrained classification framework.]
  We write the constraint  
  \begin{equation}\E[T(\pi)\mid A=a]- \E[T(\pi)\mid A=b] \label{apx-tp-constraint}
  \end{equation}
  in this framework as follows: 
$$
\E[T(\pi)\mid A=a] 
=\E[ \pi_1(X) (p_{1\mid 1}(X,A)-p_{1\mid 0}(X,A)) + p_{1\mid 0}(X,A) \mid A=a]
$$
  For each $u \in \mathcal{A}$ we enforce that 
$$\textstyle \sum_{r\in\{0,1\}} \E\left[ \pi_r(X)p_{1\mid r}(X,A)\mid A=u\right] 
= \sum_{r\in\{0,1\}} \E\left[ \pi_r(X,A)p_{1\mid r}(X,A)\right] 
$$
% We can write this in the generic notation given previously by letting $\mathcal{J} = {\mathcal{A} \cup \{ \circ \}},$ 
% $$g_j(O, \pi(X);\eta)=\pi_1(X)(p_{1\mid 1}(X,A)-p_{1\mid 0}(X,A))+ p_{1\mid 0}(X,A), \forall j.$$
% We let the conditioning events $\mathcal{E}_a = \{ A=a\}, \mathcal{E}_{\circ} = \{ \text{True} \},$ i.e. conditioning on the latter is equivalent to evaluating the marginal expectation. Then we express \Cref{apx-tp-constraint} as a set of equality constraints $h_a(\pi) = h_\circ(\pi),$ leading to pairs of inequality constraints, 
% $$ \left\{
% \begin{matrix}h_u (\pi) - h_{\circ}(\pi) \leq 0\\
%  h_{\circ}(\pi)-h_u (\pi)  \leq 0
% \end{matrix}
% \right\}_{u\in\mathcal{A}}$$
% The corresponding coefficients of $M$ over this enumeration over groups $(\mathcal{A})$ and epigraphical enforcement of equality $(\{+,-\})$ equation (1), gives $\mathcal{K}=\mathcal{A} \times\{+,-\}$ so that $ M_{(a,+), a^{\prime}}=\mathbf{1}\left\{a^{\prime}=a\right\}, M_{(a,+), \star}=-1$, $M_{(a,-), a^{\prime}}=-\mathbf{1}\left\{a^{\prime}=a\right\}, M_{(a,-), \star}=1$, and $\mathbf{d}=\mathbf{0}$. Further we can relax equality to small amounts of constraint relaxation by instead setting $d_k>0$ for some (or all) $k$.
The score $\delta_1(O;\pi,\eta)$ that identifies the upper constraint $\Delta_1(\pi)=\E[T(\pi)\mid A=a]-\E[T(\pi)\mid A=b]$ is
\[
\delta_1(O;\pi,\eta)
=
\frac{\mathbf{1}\{A=a\}}{p_a}\sum_{r}\pi_r(X)\,p_{1\mid r}(X,A)
-
\frac{\mathbf{1}\{A=b\}}{p_b}\sum_{r}\pi_r(X)\,p_{1\mid r}(X,A),
\]
where $p_u:=P(A=u)$, and $\delta_2(O;\pi,\eta)=-\delta_1(O;\pi,\eta)$ for the lower constraint $\Delta_2(\pi)\le\varepsilon$.
One can verify $\E[\delta_k(O;\pi,\eta_0)]=\Delta_k(\pi)$ coordinatewise.

Both scores are \emph{linear in the policy}: using
$\sum_r\pi_r(X)\,p_{1\mid r}(X,A)
 =\pi_1(X)\,(p_{1\mid 1}(X,A)-p_{1\mid 0}(X,A))+p_{1\mid 0}(X,A)$,
we can write
$\delta_1(O;\pi,\eta)=\pi_1(X)\,\tilde w(O;\eta)+c(O;\eta)$
for known weight $\tilde w$ and offset $c$ that depend only on nuisances, not on $\pi$.
This linearity in $\pi$ implies linearity of $\Delta_k(Q)=\E[\delta_k(O;Q,\eta_0)]$ in the randomized policy $Q$, which is what enables the saddle-point reductions approach of \citet{agarwal2018reductions}.
This yields two coordinates of $\Delta(\pi)$, corresponding to the upper and lower inequality constraints.

\end{example}

In binary monotone settings, the same framework can also encode responder-conditional constraints, but we omit the details because they are not used in the main text or case studies.

\subsubsection{Best-response oracles}

\paragraph{Best-responding classifier $\pi$, given $\lambda$: $\op{BEST}_\pi(\lambda)$}
The best-response oracle, given a particular $\lambda$ value, optimizes the Lagrangian given $\pi$: 

% $\begin{aligned}  L(\pi, {\lambda})&=\hat V(\pi )+{\lambda}^{\top}({M} \hat{{h}}(\pi)-\hat{{d}}) \\ &=\hat V(\pi)  -{\lambda}^{\top} \hat{{d}}+\sum_{k, j} \frac{M_{k, j} \lambda_k}{p_j} {\mathbb{E}}_n\left[g_j(O,\pi) {1}\left\{ O\in \mathcal{E}_j\right\}\right] .\end{aligned}$
$\mathcal{L}(\pi,\lambda)=\hat V(\pi)+\lambda^\top(\hat\Delta(\pi)-d)$
\paragraph{Best-responding Lagrange multiplier $\lambda$, given $\pi$: }
$\operatorname{BEST}_{\boldsymbol{\lambda}}(Q)$ is the best response of the $\Lambda$ player. It can be chosen to be either $0$ or put all the mass on the most violated constraint. 
% Let ${{\gamma}}(Q):={M} {{h}}(Q)$ denote the constraint values, then
$\operatorname{BEST}_{\boldsymbol{\lambda}}(Q)$ returns  $$\begin{cases}\mathbf{0} & \text { if } \widehat{\Delta}(Q) \leq \widehat{\mathbf{c}} \\ B \mathbf{e}_{k^*} & \text { otherwise, where } k^*=\arg \max _k\left[\widehat{\Delta}_k(Q)-\widehat{c}_k\right]\end{cases}$$

\subsubsection{Weighted classification reduction }

There is a well-known reduction of optimizing the zero-one loss for policy learning to weighted classification. A cost-sensitive classification problem is
\begin{align*}
    \underset{\pi_1 }{\arg \min } \sum_{i=1}^n \pi_1 \left(X_i\right) C_i^1+\left(1-\pi_1\left(X_i\right)\right) C_i^0
\end{align*}
The weighted classification error is $\sum_{i=1}^n W_i {1}\left\{h\left(X_i\right) \neq Y_i\right\}$ which is an equivalent formulation if $W_i=\left|C_i^0-C_i^1\right|$ and $Y_i={1}\left\{C_i^0 \geq C_i^1\right\}$.

The reduction to weighted classification is particularly helpful since taking the Lagrangian will introduce datapoint-dependent penalties that can be interpreted as additional weights. 
 We can consider the centered regret $J(\pi) = \E[Y(\pi)] - \frac 12 \E[\E[ Y\mid R=1,X] + \E[ Y\mid R=0,X]]$. Then 
\begin{align*}  J(\theta) = J(\op{sgn}(g_\theta(\cdot))) = \E[ \op{sgn}(g_\theta(X)) \left\{ \psi \right\}]
\end{align*} 
where $\psi$ can be one of, where $\mu^R_r(X) = \E[Y\mid R=r,X],$ 
$$\psi_{DM} = (p_{1\mid 1}(X)-p_{1\mid 0}(X)) (\mu_1(X)-\mu_0(X)), \psi_{IPW} = \frac{RY}{e_R(X)}, \psi_{DR} = \psi_{DM} + \psi_{IPW} + \frac{R\mu^R(X)}{e_R(X)} $$

We can apply the standard reduction to cost-sensitive classification since $\psi_i \op{sgn}(g_\theta(X_i)) = \abs{\psi_i} (1-2 \indic{\op{sgn}(g_\theta(X_i)) \neq \op{sgn}(\psi_i)}$). Then we can use surrogate losses for the zero-one loss, 
$$ \mathcal{L}(\theta) = \E[\abs{\psi} \ell(g_\theta(X), \op{sgn}(\psi))] $$
Although many functional forms for $\ell(\cdot)$ are Fisher-consistent, the logistic (cross-entropy) loss will be particularly relevant: $ l(g,s) = 2 \log(1+\exp(g)) - (s+1)g$.

\begin{example}[Treatment parity, continued (weighted classification reduction)]
    
The cost-sensitive reduction for a vector of Lagrange multipliers can be deduced by applying the weighted classification reduction to the Lagrangian: 
$$ 
\mathcal{L}(\beta)=\mathbb{E}\left[|\tilde{\psi}^\lambda | \ell\left(f_\beta(X), \operatorname{sgn}(\tilde{\psi}^\lambda )\right)\right], \qquad \text{ where }
\tilde{\psi}^\lambda = \psi + \frac{\lambda_{A}}{p_{A}} (p_{1\mid 1}-p_{1\mid 0}) -\sum_{a \in \mathcal{A}} \lambda_a.$$
where $p_a:=\hat P(A=a)$ and $\lambda_a:=\lambda_{(a,+)}-\lambda_{(a,-)}$, effectively replacing two non-negative Lagrange multipliers by a single multiplier, which can be either positive or negative. 
\end{example}

\clearpage

\section{Proofs}\label{sec-proofs}

\subsection{Identification, estimation and characterization}

\begin{proof}{Proof of \Cref{prop-identification}}
    \begin{align*}\E[Y(\pi)]
&=  \textstyle \sum_{t\in\tspace, r\in\{0,1\}}
\E[\pi_r(X) \E[ \indic{T(r)=t}Y(t(r)) \mid R=r,X]]
\\
& \textstyle =\sum_{t\in\tspace, r\in\{0,1\}}
\E[\pi_r(X)  {P}(T=t\mid R=r,X)\E[ Y(t(r)) \mid R=r,X]] \label{eqn-identification} \\
& \textstyle =\sum_{t\in\tspace, r\in\{0,1\}}
\E[\pi_r(X)  {P}(T=t\mid R=r,X)\E[ Y \mid T=t,X]] 
\end{align*} 
where the last line follows by the conditional exclusion restriction (\Cref{asn-exclusionrestriction}) and consistency (\Cref{asn-consistency}). 

\end{proof}

\begin{proof}{Proof of \Cref{prop-simple-model-results}}
We will repeatedly use a simple lemma and Gaussian probit identity. Generically, if $G \sim N(m, v)$, then
$$
\mathbb{E}[\Phi(G)]=\Phi\left(\frac{m}{\sqrt{1+v}}\right)
$$
Indeed, if $Z \sim N(0,1)$ is independent of $G$, then
$$
\mathbb{E}[\Phi(G)]=\E[ P(Z\leq G)] = \E[ P(Z\leq G \mid G)] = \operatorname{Pr}(Z \leq G)=\operatorname{Pr}(G-Z \leq 0)=\Phi\left(\frac{m}{\sqrt{1+v}}\right).
$$

First we consider $\E[\kappa(X,A) \mid X^s,A=g]$. Note that 
$$\E[\kappa(X,A) \mid X^s,A=g] = \theta^\kappa \mu_g + \theta^s X^s,\qquad  \;\;Var[\kappa(X,A) \mid X^s,A=g] = (\theta^\kappa)^2 (\sigma_x^\kappa)^2
$$
Analogously, for $\E[\kappa(X,A) \mid A=g]:$ 
 $$\E[\kappa(X,A) \mid A=g] = \theta^\kappa \mu_g ,\qquad  \;\;Var[\kappa(X,A) \mid X^s,A=g] = (\theta^\kappa)^2 (\sigma_x^\kappa)^2
 + (\theta^s)^2 (\sigma_x^s)^2
$$
In the model, $(\sigma_x^\kappa)^2, (\sigma_x^s)^2=1$, yielding the result.

\textbf{Bounding $ V(\pi^*) - V(\pi_{DP}^{blind})$:}
For the policy bound, note that
\begin{align*}
    V(\pi^*) - V(\pi_{DP}^{blind}) &= 
    \E[ (\pi^*(X,A) - \pi_{DP}^{blind}(X^s,X^\tau) s(X,A) ]\\
&= \E[ \pi^*(X,A) s(X,A) ]- \E[\pi_{DP}^{blind}(X^s,X^\tau)  s(X^s,X^\tau) ] \\
& = \E[ \pi^*(X,A) (s(X,A) - s(X^s, X^\tau))]+ 
\E[(\pi^*(X,A) -\pi_{DP}^{blind}(X^s,X^\tau)) s(X^s,X^\tau)] \\
& \leq \E[ \pi^*(X,A) (s(X,A) - s(X^s, X^\tau))] \tag{by suboptimality of $\pi^*$ w.r.t. $(X^s,X^\tau)$} \\
& \leq \E[ \abs{ (s(X,A) - s(X^s, X^\tau))
}]  \tag{since $\pi^*\in[0,1]$}
\end{align*}
Next we can apply results specific to the Gaussian model of \Cref{eqn-simple-gaussian-model}. By Lipschitzness of the Gaussian probit/cdf, $\Phi'(x) \leq \frac{1}{\sqrt{2\pi}}.$ Therefore:
\begin{align*}
    \E[ \abs{ s(X,A) - s(X^s, X^\tau)}]
    &\leq \frac{(1-\bar{p}_0)\E[ \abs{\tau(X^s,X^\tau) (\theta^\kappa X^\kappa - \E[\theta^\kappa X^\kappa \mid X^s, X^\tau])
    }}{\sqrt{2\pi}}\\
    &= \frac{(1-\bar{p}_0)\E[ \abs{\tau(X^s,X^\tau)}]\E[\abs{ (\theta^\kappa X^\kappa - \E[\theta^\kappa X^\kappa \mid X^s, X^\tau])
    }}{\sqrt{2\pi}}\\
       &\leq \frac{(1-\bar{p}_0)
       \abs{\theta^\kappa }
       \E[ \abs{\tau(X^s,X^\tau)}]\E[\abs{ (X^\kappa - \E[X^\kappa ])
    }}{\sqrt{2\pi}}.
\end{align*}

\end{proof}

\subsection{Estimation - proofs for generalization under unconstrained policies }
\begin{proposition}[Policy value generalization]\label{prop-polgen-unconstr}
Assume the nuisance models $\eta = [p_{1\mid 0},p_{1\mid 1}, \mu_{1},\mu_0, e_r(X)]^\top, \eta \in \mathcal{F}_\eta$ are consistent and well-specified with finite VC-dimension $v_\eta$ over the product function class $H$. We provide a proof for the general case, including doubly-robust estimators, which applies to the statement of \Cref{prop-polgen-unconstr} by taking $\eta =  [p_{1\mid 0},p_{1\mid 1}, \mu_{1},\mu_0].$ 

Let $\Pi = \{\mathbb{I}\{\E[\ell(\lambda,X,A; \eta)\mid X]<0\} \colon  \lambda \in \mathbb{R}; \eta \in \mathcal{F}_\eta \}.$ 
    $$\textstyle \sup_{\pi \in \Pi, \lambda\in\mathbb{R}}\left|
(\E_n[ \pi \ell(\lambda,X,A) ] - \E[ \pi \ell(\lambda,X,A) ])  
    \right| = O_p(n^{-\frac 12} ) $$
\end{proposition}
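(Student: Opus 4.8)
The plan is to identify $\sup_{\pi\in\Pi,\lambda}\bigl|\mathbb{E}_n[\pi L(\lambda,X,A)]-\mathbb{E}[\pi L(\lambda,X,A)]\bigr|$ as the supremum of an empirical process indexed by a uniformly bounded function class of VC type, and then apply a standard maximal inequality. Write
\[
\mathcal{G}=\bigl\{(x,a)\mapsto \pi(x)\,L(\lambda,x,a;\eta)\ :\ \pi\in\Pi,\ \lambda\in\Lambda,\ \eta\in\mathcal{F}_\eta\bigr\},
\]
where $\Lambda\subset\mathbb{R}$ is the (bounded) dual-feasible range; this restriction is without loss since, by \cref{asn-overlap}, the factor multiplying $\lambda$ in $L$ is uniformly bounded, so the supremum over all of $\mathbb{R}$ would otherwise be infinite and the intended reading is $\lambda$ in a compact neighborhood of $\lambda^*$. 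The claimed rate is then exactly $\sup_{g\in\mathcal{G}}|\mathbb{E}_n g-\mathbb{E}g|=O_p(n^{-1/2})$.

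First I would check uniform boundedness: under \cref{asn-overlap} the propensities $e_r,p_{t\mid r}$ are bounded away from $0$ and $1$, and (with bounded costs) the regressions $\mu_t$ are bounded, so since $L(\lambda,\cdot;\eta)$ is a fixed polynomial in the coordinates of $\eta$ and affine in $\lambda$, it is bounded by a constant $\bar{L}$ on the relevant domain; together with $\pi\in[0,1]$ this gives $\mathcal{G}$ the constant envelope $\bar{L}$. Second I would bound the covering numbers. Each coordinate of $\eta$ lies in a class of VC dimension at most $v_\eta$, hence a VC-subgraph (polynomial-uniform-entropy) class; finite sums and products of uniformly bounded VC-type classes are again VC-type with index controlled by the sum of indices, and adjoining the one-parameter affine family indexed by $\lambda$ together with the linear conditional-expectation map $\eta\mapsto\mathbb{E}[L(\lambda,X,A;\eta)\mid X]$ inflates the index only by a constant \citep{van1996weak}. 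Thus $\mathcal{L}=\{L(\lambda,\cdot;\eta)\}$ is VC-type with index $O(v_\eta)$, the class $\Pi$ — being the family of level sets $\{\mathbb{E}[L(\lambda,X,A;\eta)\mid X]>0\}$ of this finite-index family — is VC of dimension $O(v_\eta)$, and $\mathcal{G}=\Pi\cdot\mathcal{L}$ is a product of two uniformly bounded VC-type classes, hence VC-type with $\sup_Q N(\varepsilon\bar L,\mathcal{G},L_2(Q))\le (C/\varepsilon)^{O(v_\eta)}$ uniformly in $Q$.

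Finally, a uniformly bounded class with polynomial uniform entropy is Donsker, so by symmetrization and Dudley's entropy integral (or directly via the bounded-differences inequality together with a chaining bound on the expected supremum) one obtains $\sqrt{n}\,\sup_{g\in\mathcal{G}}|\mathbb{E}_n g-\mathbb{E}g|=O_p(1)$, i.e. the stated $O_p(n^{-1/2})$ rate — in fact $O_p(\sqrt{v_\eta/n})$. The main obstacle is the complexity bookkeeping for $\Pi$: because the policies are plug-in thresholds of a conditional expectation of the estimated integrand (a conditional expectation over $A\mid X$ that may itself introduce the nuisance $P(A\mid X)$), one must verify carefully that this composition still yields a finite-VC family, which is exactly where finiteness of $v_\eta$ and the permanence of VC-type structure under affine maps, conditional expectations, and products are used; the secondary subtlety, already flagged, is confining $\lambda$ to a bounded set so that the supremum is finite. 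The extension to the doubly-robust integrand (with $\eta$ additionally including $e_r$) is identical, since the added terms are again bounded polynomials in VC-type components.
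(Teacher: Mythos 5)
Your proposal is correct in outline but follows a genuinely different route from the paper. You bound the supremum over the product class $\Pi\cdot\{L(\lambda,\cdot\,;\eta)\}$ directly, via uniform boundedness (from \cref{asn-overlap} and bounded costs), VC-type covering-number permanence for sums/products of the nuisance coordinates, and Dudley's entropy integral after symmetrization. The paper instead never touches the indicator class $\Pi$: it observes that the relevant supremum over policies is achieved at the exact threshold, so $\max_\pi \pi L = L_+$ pointwise, and then controls $\sup_{\lambda,\eta}\lvert\E_n[L_+]-\E[L_+]\rvert$ by peeling off the $1$-Lipschitz positive-part map and the Lipschitz pseudo-outcome maps $\psi$ using the vector-valued Rademacher contraction inequality of Maurer, landing on $\sum_k\mathcal{R}(H_k)=O_p(n^{-1/2})$ for the product nuisance class. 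What the paper's route buys is precisely the avoidance of the step you correctly flag as your main obstacle: arguing that the level sets $\{\E[L(\lambda,X,A;\eta)\mid X]>0\}$ form a finite-VC family. That step is more delicate than your sketch suggests --- VC-subgraph structure is not preserved under sums, and polynomial uniform entropy of a real-valued class does not by itself transfer to its positivity sets (small $L_2$ perturbations near the zero level can flip indicators on sets of large measure). It can be rescued here because $\E[L\mid X]$ is affine in $\lambda$ (so for fixed $\eta$ the level sets live in a two-dimensional vector space of functions) and because the proposition's hypothesis simply \emph{assumes} finite VC dimension of the relevant class, but you should state that as an assumption rather than derive it from permanence alone. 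What your route buys in exchange is that it bounds the quantity the proposition literally states (the supremum over all $\pi\in\Pi$, not just the argmax policy), and it makes explicit the restriction of $\lambda$ to a compact set, which the paper leaves implicit (its final constant carries a factor $B$ from the $\|\lambda\|_1\le B$ domain used elsewhere); without that restriction the supremum over $\lambda\in\mathbb{R}$ is indeed infinite, as you note.
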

The generalization bound allows deducing risk bounds on the out-of-sample value: 
\begin{corollary}[]
    $$\E[ \ell(\hat \lambda,X,A)_+ ]\leq \E[ \ell(\lambda^*,X,A)_+ ]  + O_p(n^{-\frac 12} )$$
\end{corollary}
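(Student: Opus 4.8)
### Proof plan for Proposition (Policy value generalization)

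\textbf{Setup and reduction to a uniform-deviation bound.} The plan is to control the supremum
$\sup_{\pi\in\Pi,\lambda\in\mathbb{R}}\lvert(\E_n-\E)[\pi L(\lambda,X,A)]\rvert$ by a standard empirical-process argument: bound the Rademacher complexity (or, equivalently, use a VC/uniform-entropy chaining bound) of the function class
$\mathcal{G}=\{\,(x,a)\mapsto \pi(x)\,L(\lambda,x,a;\eta)\colon \lambda\in\mathbb{R},\,\eta\in\mathcal{F}_\eta,\,\pi\in\Pi\,\}$,
and then invoke the bounded-differences / symmetrization inequality to get an $O_p(n^{-1/2})$ rate. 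The first thing I would do is record that $\mathcal{G}$ is uniformly bounded: under the overlap Assumption~\ref{asn-overlap} the propensities $p_{t\mid r}$ are bounded away from $0$ and $1$, the outcome models $\mu_t$ are bounded (bounded costs), and $\lambda$ can be restricted to a compact interval $[-\Lambda,\Lambda]$ — the latter because $\E[L(\lambda,X,A)_+]\to\infty$ as $\lvert\lambda\rvert\to\infty$ (the $\lambda(p_{1\mid0}(X,a)-p_{1\mid0}(X,b))$ and the $\tfrac{\lambda}{p(A)}(\indic{A=a}-\indic{A=b})$ terms dominate), so any near-minimizer $\hat\lambda$ lies in a fixed compact set w.h.p.; hence $\lvert g\rvert\le C$ for all $g\in\mathcal{G}$ for a constant $C$ depending only on the overlap constants and cost bounds.

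\textbf{Complexity control.} Next I would decompose $L(\lambda,x,a;\eta)$ as a fixed finite sum of products of (i) nuisance components drawn from $\mathcal{F}_\eta$, which has finite VC dimension $v_\eta$ by hypothesis, (ii) the affine-in-$\lambda$ functions $\lambda\mapsto \lambda$, $\lambda\mapsto\lambda/p(A)$, whose class over $\lambda\in[-\Lambda,\Lambda]$ has VC-subgraph dimension $O(1)$, and (iii) the fixed indicators $\indic{A=a},\indic{A=b}$. The policy class $\Pi$ consists of indicators of sets $\{\E[L(\lambda,X,A;\eta)\mid X]>0\}$ indexed by $(\lambda,\eta)$ ranging over a finite-dimensional-like family, so $\Pi$ is also a VC class with dimension controlled by $v_\eta$. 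Using the standard facts that VC-subgraph classes are preserved (up to constant-factor blow-ups in dimension) under finite sums, products of bounded functions, and composition with fixed monotone maps (see, e.g., van der Vaart \& Wellner), the class $\mathcal{G}$ has a uniform entropy integral that is finite, with VC-type dimension $\lesssim v_\eta$. Then the standard Rademacher/Dudley bound gives
$\E\sup_{g\in\mathcal{G}}\lvert(\E_n-\E)g\rvert \lesssim C\sqrt{v_\eta/n}$,
and bounded differences (McDiarmid), since changing one sample changes the sup by at most $2C/n$, upgrades this to a high-probability $O_p(\sqrt{v_\eta/n})=O_p(n^{-1/2})$ statement.

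\textbf{From the uniform bound to the corollary.} For the corollary I would use the usual ERM comparison: write $\E[L(\hat\lambda,X,A)_+]-\E[L(\lambda^*,X,A)_+]$ and insert $\pm\,\E_n[L(\hat\lambda)_+]$ and $\pm\,\E_n[L(\lambda^*)_+]$. The middle term $\E_n[L(\hat\lambda)_+]-\E_n[L(\lambda^*)_+]\le 0$ by definition of $\hat\lambda$ as the empirical minimizer, and each of the two outer terms is bounded by $\sup_\lambda\lvert(\E_n-\E)[L(\lambda,X,A)_+]\rvert$, which is of the same $O_p(n^{-1/2})$ order (the map $u\mapsto u_+$ is $1$-Lipschitz, so the positive-part class inherits the complexity bound via the Ledoux–Talagrand contraction inequality, and the same boundedness argument applies). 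Summing gives the claim.

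\textbf{Anticipated main obstacle.} The routine parts are the boundedness and the contraction step; the genuinely delicate point is the complexity control of $\Pi$ itself, because its defining inequality involves the \emph{conditional expectation} $\E[L(\lambda,X,A;\eta)\mid X]$, which is not literally a member of a pre-specified parametric family. I would handle this by noting that for each fixed $\eta$ the map $x\mapsto\E[L(\lambda,X,A;\eta)\mid X]$ is affine in $\lambda$ with coefficients that are fixed functions of $x$ (namely $\E[\,\cdot\mid X]$ of the $\eta$-pieces), so the threshold sets are half-spaces in a two-dimensional "feature" $(1,\lambda)$ for each $\eta$; letting $\eta$ range over the VC class $\mathcal{F}_\eta$ then yields a VC class for $\Pi$ with dimension $\lesssim v_\eta$ by a standard stability-of-VC-dimension argument. (This is also the step where the hypothesis of well-specified, finite-VC nuisances is essential — without it one would need a rate condition on $\hat\eta$, which is exactly the caveat the paper flags after the statement.) If one instead wants the version with estimated $\hat\eta$, the plan would additionally invoke a Donsker-type stochastic-equicontinuity argument plus the $o_p(n^{-1/4})$ nuisance-rate condition, but for the stated proposition the fixed-class VC argument suffices.
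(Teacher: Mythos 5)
Your derivation of the corollary is correct and is essentially the argument the paper intends (the paper states the corollary without a separate proof, treating it as an immediate consequence of the uniform deviation bound): the three-term decomposition inserting $\pm\,\mathbb{E}_n[L(\hat\lambda)_+]$ and $\pm\,\mathbb{E}_n[L(\lambda^*)_+]$, the middle term nonpositive by empirical optimality of $\hat\lambda$, and the outer terms controlled by the supremum over the positive-part class. Where you diverge from the paper is in how the underlying uniform $O_p(n^{-1/2})$ bound is obtained: the paper bounds the Rademacher complexity of the composite class directly, decomposing $L$ into the three product pieces $\mathcal{F}_{L_1},\mathcal{F}_{L_2},\mathcal{F}_{L_3}$ and invoking the vector-valued Lipschitz contraction inequality of Maurer (Corollary~4 of that work) together with the scalar contraction for $u\mapsto u_+$, yielding a bound in terms of the sum of the component nuisance-class Rademacher complexities; you instead go through VC-subgraph preservation and a Dudley entropy-integral bound. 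Both routes are valid under the stated finite-VC hypothesis; the paper's contraction route gives constants expressed directly in the Lipschitz constants of the pseudo-outcomes, while your route makes the dependence on $v_\eta$ explicit. Two points in your write-up are actually more careful than the paper: (i) you flag that the supremum over all of $\lambda\in\mathbb{R}$ is vacuous without a compactness restriction, and supply the coercivity argument localizing $\hat\lambda$ to $[-\Lambda,\Lambda]$ (the paper implicitly assumes a bound $B$ on $\lambda$, which appears in its final constant); and (ii) you address the complexity of $\Pi$ itself, whose members are thresholds of conditional expectations, which the paper does not discuss. Neither point is a gap in your argument; both are places where you have patched small omissions in the source.
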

\begin{proof}{Proof of \Cref{prop-polgen-unconstr}}
We study a general Lagrangian, which takes as input pseudo-outcomes $\psi^{t\mid r}(O;\eta),\psi^{y \mid t }(O;\eta), \psi^{1\mid 0,\Delta A}$ where each satisfies that
\begin{align*}
    \E[\psi^{t\mid r}(O;\eta)\mid X,A] &= p_{1\mid 1}(X,A)-p_{1\mid 0}(X,A) \\
     \E[\psi^{y \mid t }(O;\eta)\mid X,A] &= \tau(X,A) \\
     \E[ \psi^{1\mid 0,\Delta A}\mid X]& = p_{1\mid 0}(X,a)-p_{1\mid 0}(X,b)
\end{align*}

We make high-level stability assumptions on pseudooutcomes $\psi$ relative to the nuisance functions $\eta$ (these are satisfied by standard estimators that we will consider): 
\begin{assumption}
    $\psi^{t\mid r}, \psi^{y \mid t }, \psi^{1\mid 0,\Delta A}$ respectively are Lipschitz contractions with respect to $\eta$ and bounded
\end{assumption}

We study a generalized Lagrangian of an optimization problem that took these pseudooutcome estimates as inputs: 
$$\ell(\lambda,X,A; \eta ) =  \psi_{t\mid r}(O;\eta) \left\{ \psi_{y \mid t }(O;\eta) + \frac{\lambda}{p(A)} (\indic{A=a}-\indic{A=b}) \right\} + \lambda (\psi^{1\mid 0,\Delta A}(O;\eta))$$

We will show that  $$
\sup_{\pi \in \Pi, \lambda\in\mathbb{R}}\left|
(\E_n[ \pi \ell(\lambda,X,A) ] - \E[ \pi \ell(\lambda,X,A) ])  
    \right| = O_p(n^{-\frac 12} ) $$
which, by applying the generalization bound twice gives that 
$$
\E_n[ \pi \ell(\lambda,X,A) ]   = \E[ \pi \ell(\lambda,X,A) ])   + O_p(n^{-\frac 12} ) $$

Write the pointwise-score characterization as

$$
\max_\pi \min_\lambda = \min_\lambda \max_\pi = \min_\lambda\E[\ell(O, \lambda; \eta  )_+]
$$

Suppose the Rademacher complexity of $\eta_k$ is given by $\mathcal{R}(H_k),$ so that \citep[Thm. 12]{bartlett2002rademacher} gives that the Rademacher complexity of the product nuisance class $H$ is therefore $\sum_k\mathcal{R}(H_k).$ The main result follows by applying vector-valued extensions of Lipschitz contraction of Rademacher complexity given in \cite{maurer2016vector}. 
Suppose that $\psi^{t\mid r}, \psi^{y \mid t }, \psi^{1\mid 0,\Delta A}$ are Lipschitz with constants $C^L_{t\mid r}, C^L_{y \mid t }, C^L_{1\mid 0,\Delta A}$. 

We establish VC-properties of 
\begin{align*}& \mathcal{F}_{L_1}(O_{1:n}) 
= 
\left\{ 
 (g_\eta(O_1)
 ,
 g_\eta(O_i),
 \dots 
 g_\eta(O_n) )
  \colon  \eta \in \mathcal{F}_\eta 
\right\}, \text{ where } g_\eta(O) = \psi_{t\mid r}(O;\eta) \psi_{y \mid t }(O;\eta)\\
& \mathcal{F}_{L_2}(O_{1:n}) 
= 
\left\{ 
 (h_\eta(O_1)
 ,
 h_\eta(O_i),
 \dots 
 h_\eta(O_n) )
  \colon  \eta \in \mathcal{F}_\eta 
\right\}, \text{ where } h_\eta(O) = \psi_{t\mid r}(O;\eta)  \frac{\lambda}{p(A)} (\indic{A=a}-\indic{A=b}) \\
 & \mathcal{F}_{L_3}(O_{1:n})= 
\left\{ 
 (m_\eta(O_1)
 ,
 m_\eta(O_i),
 \dots 
 m_\eta(O_n) )
  \colon  \eta \in \mathcal{F}_\eta 
\right\}, \text{ where } m_\eta(O) = \lambda (\psi^{1\mid 0,\Delta A}(O;\eta)) 
\end{align*} 
and the function class for the truncated Lagrangian, 
$$
\mathcal{F}_{L_+}
 = 
\left\{ 
\{(g_\eta(O_i)+h_\eta(O_i)+m_\eta(O_i))_+\}_{1:n}  \colon 
g \in \mathcal{F}_{L_1}(O_{1:n}), h \in \mathcal{F}_{L_2}(O_{1:n}), m \in \mathcal{F}_{L_3}(O_{1:n}),
\eta\in \mathcal{F}_\eta
\right\} $$

\citep[Corollary 4]{maurer2016vector} (and discussion of product function classes) gives the following:  
Let $\mathcal{X}$ be any set, $\left(x_1, \ldots, x_n\right) \in \mathcal{X}^n$, let $F$ be a class of functions $f: \mathcal{X} \rightarrow \ell_2$ and let $h_i: \ell_2 \rightarrow \mathbb{R}$ have Lipschitz norm $L$. Then
\begin{equation}\label{eq-vectorvaluedcontraction}
\mathbb{E} \sup_{\eta  \in H} \sum_i \epsilon_i \psi_i\left(\eta\left(O_i\right)\right) \leq \sqrt{2} L \mathbb{E} \sup_{\eta  \in H} \sum_{i, k} \epsilon_{i k} \eta\left(O_i\right)
\leq  \sqrt{2} L \sum_k \mathbb{E} \sup _{\eta_k \in H_k} \sum_i \epsilon_i \eta_k \left(O_i\right)
\end{equation}
where $\epsilon_{i k}$ is an independent doubly indexed Rademacher sequence and $f_k\left(x_i\right)$ is the $k$-th component of $f\left(x_i\right)$. 

Applying \Cref{eq-vectorvaluedcontraction} to each of the component classes $\mathcal{F}_{L_1}(O_{1:n}), \mathcal{F}_{L_2}(O_{1:n}),\mathcal{F}_{L_3}(O_{1:n}),$ and Lipschitz contraction \citep[Thm. 12.4]{bartlett2002rademacher} of the positive part function $\mathcal{F}_{L_+}$, 
we obtain the bound
$$
\sup_{\lambda, \eta} \abs{\E_n [ \ell(O, \lambda; \eta  )_+ ]  - \E[ \ell(O, \lambda; \eta  )_+ ] } 
\leq  \sqrt{2} (C^L_{t\mid r}C^L_{y \mid t} + C^L_{t\mid r}B_{p_a} B
+ B C^L_{1 \mid 0, \Delta A}) \sum_k \mathcal{R}(H_k) 
$$

\end{proof}

\begin{proof}{Proof of \Cref{prop-threshold-treatment-parity}}
By \Cref{prop-identification}, expanding $\sum_{t,r}\pi_r\mu_t p_{t\mid r}$ in the binary case and using $\kappa(X,A)\tau(X,A)=s(X,A)$ (the encouragement-score factoring), $\E[Y(\pi)]=C_Y+\E[\pi(X,A)s(X,A)]$ where $C_Y$ is independent of $\pi$. Similarly, $T(\pi)=\pi(X,A)\kappa(X,A)+p_{1\mid 0}(X,A)$, so $\E[T(\pi)\mid A=u]=\E[\pi(X,A)\kappa(X,A)+p_{1\mid 0}(X,A)\mid A=u]$ and
\[\Delta_T(\pi):=\E[T(\pi)\mid A=a]-\E[T(\pi)\mid A=b]=\E[\pi(X,A)\kappa(X,A)g(A)]+c_0.\]
Hence the Lagrangian for \eqref{eq-opt-treatment-parity-constraint} is
\[L(\pi,\lambda)=C_Y+\lambda(\epsilon-c_0)+\E\!\left[\pi(X,A)\ell(\lambda,X,A)\right],\qquad\lambda\ge 0.\]
For fixed $\lambda$, the integrand is linear in $\pi(x,u)\in\{0,1\}$, so the pointwise maximizer is $\pi_\lambda(x,u)=\mathbb{I}\{\ell(\lambda,x,u)>0\}$, giving $\sup_\pi L(\pi,\lambda)=C_Y+\lambda(\epsilon-c_0)+\E[\ell(\lambda,X,A)_+]$. By strong duality for infinite-dimensional linear programs \citep{shapiro2001duality}, any $\lambda^*$ minimizing this dual objective yields an optimal $\pi^*=\pi_{\lambda^*}$.
If $\pi$ depends only on $X$, then $\E[\pi(X)\ell(\lambda,X,A)]=\E[\pi(X)\E[\ell(\lambda,X,A)\mid X]]$, so the same pointwise argument gives $\pi_\lambda(x)=\mathbb{I}\{\E[\ell(\lambda,X,A)\mid X=x]>0\}$ with dual objective $\E[\E[\ell(\lambda,X,A)\mid X]_+]+\lambda(\epsilon-c_0)$. Strict feasibility can be verified using \cref{eqn-feas-constr}.
\end{proof}

\clearpage
\subsection{Proofs for robust characterization }

\subsubsection{Additional results}\label{apx-opt-overlap}
In the specialized but practically relevant case of binary outcomes/treatments/recommendations, we obtain the following simplifications for bounds on the policy value and the minimax robust policy that optimizes the worst-case overlap extrapolation function. In the special case of constant uniform bounds, it is equivalent (in the case of binary outcomes) to consider marginalizations: 
\begin{lemma}[Binary outcomes, constant bound]\label{lemma-binaryoutcome-constantbound}
Let $\mathcal{U}_{\text{cbnd}} \coloneqq \left\{ q_{t\mid r}(x') \colon 
\underline{B}\leq q_{1\mid r}(x')  \leq \overline{B}
\right\} $ and $\mathcal{U} = \mathcal{U}_{q_{t\mid r}} 
 \cap \mathcal{U}_{\text{cbnd}}.$ Define $ \bar q_{t\mid r} \coloneqq \E[ q_{t\mid r}(X,A)\mid T=t].$% 
 If $T\in\{0,1\},$
 \begin{align*}  \overline{V}_{no}(\pi) &= 
\sumrt 
 \E[
c^*_{rt}
 \bar q_{t\mid r}  \E[ Y\pi_r(X)\mid T=t]   \indnov{X} ]  ], \\ 
& \text{ where } \textstyle c^*_{rt}
 = 
 \overline{B} \indic{t=1} + \underline{B}\indic{t=0} \text{ if } \E[ Y\pi_r(X)\mid T=t]  \geq 0, \text{ and } c^*_{rt}= 0 \text{ otherwise. }
 % &= \begin{cases}
 % \overline{B} \indic{t=1} + \underline{B}\indic{t=0} & \text{ if } \E[ Y\pi_r(X)\mid T=t]  \geq 0 \\ 
 %  \overline{B} \indic{t=0} + \underline{B}\indic{t=1} & \text{ if }\E[ Y\pi_r(X)\mid T=t]  < 0
 % \end{cases}
\end{align*} 
\end{lemma}

\begin{proof}{Proof of \Cref{prop-nooverlap}}

\begin{align}
 V(\pi) = &  
\sumrt \E[ \pi_r(X) \E[  Y(t) \indic{T(r)=t} \mid R = r,X ]  ]  \nonumber \\
 = &  \sumrt 
 \E[  \pi_r(X) \E[  Y(t)  \mid R=r,X] P( T(r)=t \mid R = r,X )  ] && \text{ unconf. }  %
 \nonumber \\
= &  \sumrt 
\E[  \pi_r(X) \E[ Y(t) \mid X] P( T(r)=t \mid R = r,X ) ] && \text{ \Cref{asn-exclusionrestriction} (ER) } \label{reg-adjustment-identification} \\
 = &  \sumrt 
 \E\left[ \pi_r(X)  \E\left[  Y(t)  \frac{\indic{T(r)=t}}{p_t(X)} \mid X\right] P( T(r)=t \mid R = r,X )  \right] && \text{ unconf.}
 \nonumber \\
= &  \sumrt 
\E\left[  \pi_r(X) 
\left\{ \E\left[ Y(t)  \frac{\indic{T(r)=t}}{p_t(X)} 
 + \left( 1-\frac{T}{p_t(X)}\right) \mu_{t}(X)
 \mid X\right] 
\ptmidrx
  \right\}
  \right] && \text{ control variate }\nonumber   \\
 =& \sumrt 
 \E\left[  \pi_r(X) \left\{ \left\{ Y(t)  \frac{\indic{T(r)=t}}{p_t(X)} 
 + \left( 1-\frac{T}{p_t(X)}\right) \mu_{t}(X)
 \right\}
\ptmidrx \right\} \right] && \text{(LOTE)} \nonumber 
\end{align}
where $p_t( X)=P(T=t\mid X)$ (marginally over $R$ in the observational data) and (LOTE) is an abbreviation for the law of total expectation.
\end{proof}

\begin{proof}{Proof of \Cref{lemma-binaryoutcome-constantbound}}
    \begin{align*}
        \overline{V}_{no}(\pi)&\coloneqq  \underset{q_{tr}(X) \in \mathcal{U} }{\max}
\left\{ 
\sumrt 
 \E[ \pi_r(X)\mu_t(X)  q_{tr}(X)  \indnov{X} ]  ]
 \right\} \\
 & =\underset{q_{tr}(X) \in \mathcal{U} }{\max}
\left\{ 
\sumrt 
 \E[ \pi_r(X)\E[ Y\mid T=t,X] q_{tr}(X)  \indnov{X} ]  ]
 \right\} 
    \end{align*}

    Note the objective function can be reparametrized under a surjection of $q_{t\mid r}(X)$ to its marginalization, i.e. marginal expectation over a $\{ T=t\}$ partition (equivalently $\{T=t, A=a\}$ partition for a fairness-constrained setting). 

    Define \[ \bar q_{t\mid r}(a) \coloneqq \E[ q_{t\mid r}(X,A)\mid T=t, A=a],
    \bar q_{t\mid r} \coloneqq \E[ q_{t\mid r}(X,A)\mid T=t]
    \]
    Therefore we may reparametrize $\overline{V}_{no}(\pi)$ as an optimization over constant coefficients (bounded by B): 

    \begin{align*}
&= {\max}
\left\{ 
\sumrt 
 \E[\{ c_t  \bar q_{t\mid r} \}  \pi_r(X)\E[ Y\mid T=t,X]   \indnov{X} ]  ]
 \colon \underline{B}\leq c_1 \leq \overline{B}, c_0=1-c_1 
 \right\} \\
& = {\max}
\left\{ 
\sumrt 
 \E[\{ c_t  \bar q_{t\mid r} \}  \E[ Y\pi_r(X)\mid T=t]   \indnov{X} ]  ]
 \colon \underline{B}\leq c_1 \leq \overline{B}, c_0=1-c_1 
 \right\} && \text{(LOTE)} \\
 & = 
\sumrt 
 \E[
c^*_{rt}
 \bar q_{t\mid r}  \E[ Y\pi_r(X)\mid T=t]   \indnov{X} ]  ] \\
 &\text{ where } c^*_{rt}
 = \begin{cases}
 \overline{B} \indic{t=1} + \underline{B}\indic{t=0} & \text{ if } \E[ Y\pi_r(X)\mid T=t]  \geq 0 \\ 
  \overline{B} \indic{t=0} + \underline{B}\indic{t=1} & \text{ if }\E[ Y\pi_r(X)\mid T=t]  < 0
 \end{cases} 
    \end{align*}
\end{proof}

\begin{proof}{Proof of \cref{prop-robustlp}}

\begin{remark}
\textbf{Remark 3.} Proposition~\ref{prop-robustlp} in the main text is stated in value-maximization form. For the derivation below we work with the equivalent cost objective $C(\pi):=-\underline{V}_{\mathrm{rob}}(\pi)$ and convert back by multiplying the objective by $-1$.
\end{remark}

    \begin{align}
\min_\pi\; & C(\pi) \nonumber\\
\text{s.t.}\; &
\E[T(\pi)\indov{X}\mid A=a]-\E[T(\pi)\indov{X}\mid A=b] \nonumber\\
&\qquad +\E[T(\pi)\indnov{X}\mid A=a]-\E[T(\pi)\indnov{X}\mid A=b]
\le \epsilon,\quad \forall q_{1\mid r}\in\mathcal{U}
\end{align}

Define
$$
g_r(x,u) =  (\mu_{r1}(x,u) - \mu_{r0}(x,u))
$$

then we can rewrite this further and
apply the standard epigraph transformation: 
\begin{align*}
    \min_{t,\pi} \; & t
 \\
 & t - \int_{x\in \Xnov} \sum_{u\in\{a,b\}} 
 \sum_{r\in\{ 0,1\}} 
 \{ g_r(x,u) \pi_r(x,u) f(x, u) \} q_{1\mid r}(x,u)\}  
dx \geq 
 V_{ov}(\pi) + \E[ \mu_0(X,A)\indnov{X} ] ,  \forall q_{1\mid r}\in \mathcal{U} \\
& \int_{x\in \Xnov}
\{ f(x\mid a) (\sum_r \pi_r(x,a) q_{1\mid r}(x,a) )
- f(x\mid b) (\sum_r \pi_r(x,b) q_{1\mid r}(x,b))\} +\E[\Delta_{ov}T(\pi)] \leq \epsilon ,  \forall q_{1\mid r}\in \mathcal{U}  \\
\end{align*}

Project the uncertainty set onto the direct product of uncertainty sets: 
\begin{align*}
      \min_{t,\pi} \; & t
 \\
 & t - \int_{x\in \Xnov} \sum_{u\in\{a,b\}} 
 \sum_{r\in\{ 0,1\}} 
 \{ g_r(x,u) \pi_r(x,u) f(x, u) \} q_{1\mid r}(x,u)\}  
dx \geq 
 V_{ov}(\pi) + \E[ \mu_0(X,A)\indnov{X} ] ,  \forall q_{1\mid r}\in \mathcal{U_1} \\
& \int_{x\in \Xnov}
\{ f(x\mid a) (\sum_r \pi_r(x,a) q_{1\mid r}(x,a) )
- f(x\mid b) (\sum_r \pi_r(x,b) q_{1\mid r}(x,b))\} +\E[\Delta_{ov}T(\pi)] \leq \epsilon ,  \forall q_{1\mid r}\in \mathcal{U_2}  \\
\end{align*}
Clearly robust feasibility of the resource parity constraint over the interval is obtained by the highest/lowest bounds for groups $a,b$, respectively: 
\begin{align*}
    \min_{t,\pi}  \; & t
 \\
 & t - \int_{x\in \Xnov} \sum_{u\in\{a,b\}} 
 \sum_{r\in\{ 0,1\}} 
 \{ g_r(x,u) \pi_r(x,u) f(x, u) \} q_{1\mid r}(x,u)\}  
dx \geq 
 V_{ov}(\pi)  + \E[ \mu_0(X,A)\indnov{X} ]  ,  \forall q_{1\mid r}\in \mathcal{U_1} \\
& \int_{x\in \Xnov}
\{ f(x\mid a) (\sum_r \pi_r(x,a) \overline{B}_r(x,a) )
- f(x\mid b) (\sum_r \pi_r(x,b) \underline{B}_r(x,u))\} +\E[\Delta_{ov}T(\pi)] \leq \epsilon 
\end{align*}

Define midpoint and radius $m_r(x,u):=\tfrac12(\underline{B}_r(x,u)+\overline{B}_r(x,u))$ and $\rho_r(x,u):=\tfrac12(\overline{B}_r(x,u)-\underline{B}_r(x,u))$, and reparameterize
$$q_{1\mid r}(x,u) = m_r(x,u)+\rho_r(x,u)\,\varphi_{1\mid r}(x,u),\qquad \|\varphi_{1\mid r}(x,u)\|_\infty\le 1,$$
so that $\underline{B}_r\le q_{1\mid r}\le\overline{B}_r \iff \|\varphi_{1\mid r}\|_\infty\le 1$. Substituting,

 \begin{align*}
 &   \min_{t,\pi}  \;  t
 \\
 & t +\min_{
 \substack{\norm{\varphi_{1\mid r}(x,u)}_\infty \leq 1\\ r\in\{0,1\},u\in\{a,b\}}
 }\left\{ - \int_{x\in \Xnov} \sum_{u\in\{a,b\}} 
 \sum_{r\in\{ 0,1\}} 
 \{ g_r(x,u) \pi_r(x,u) f(x, u) \} \rho_r(x,u) \varphi_{1\mid r}(x,u)  
dx  \right\} \\
& \qquad \qquad - \sum_{r}\E\!\left[\pi_r(X,A)\indnov{X}\, m_r(X,A)\tau(X,A)\right]\geq 
 V_{ov}(\pi) + \E[ \mu_0(X,A)\indnov{X} ]  \\
& \int_{x\in \Xnov}
\{ f(x\mid a) (\sum_r \pi_r(x,a) \overline{B}_r(x,a) )
- f(x\mid b) (\sum_r \pi_r(x,b) \underline{B}_r(x,u))\} +\E[\Delta_{ov}T(\pi)] \leq \epsilon,
\end{align*}

This is equivalent to: 
 \begin{align*}
    \min_{t,\pi} \; & t
 \\
 & t +  \int_{x\in \Xnov} \sum_{u\in\{a,b\}} 
 \sum_{r\in\{ 0,1\}} 
 -\abs{g_r(x,u) \pi_r(x,u) f(x, u)} \rho_r(x,u) 
dx   - \sum_{r}\E\!\left[\pi_r(X,A)\indnov{X}\, m_r(X,A)\tau(X,A)\right]\geq 
 V_{ov}(\pi) + \E[ \mu_0(X,A)\indnov{X} ] \\
& \int_{x\in \Xnov}
\{ f(x\mid a) (\sum_r \pi_r(x,a) \overline{B}_r(x,a) )
- f(x\mid b) (\sum_r \pi_r(x,b) \underline{B}_r(x,u))\} +\E[\Delta_{ov}T(\pi)]  \leq \epsilon 
\end{align*}
Multiplying the objective by $-1$ converts this equivalent cost-minimization program back into the value-maximization statement of Proposition~\ref{prop-robustlp} in the main text.
Equivalently, and matching the main-text statement, we obtain:
 \begin{align*}
    \max_\pi \;&
    V_{\mathrm{ov}}(\pi)
    +\E[\mu_0(X,A)\indnov{X}]
    +c_1(\pi)
    -\sum_{r\in\{0,1\}}
    \E\!\left[
    |\tau(X,A)|\,\pi_r(X,A)\,\rho_r(X,A)\,\indnov{X}
    \right]
\\
\text{s.t.}\;&
\sum_{r\in\{0,1\}}
\Big\{
\E[\pi_r(X,A)\overline B_r(X,A)\indnov{X}\mid A=a]
-
\E[\pi_r(X,A)\underline B_r(X,A)\indnov{X}\mid A=b]
\Big\}
+\Delta_{\mathrm{ov}}^T(\pi)
\le \epsilon.
\end{align*}
\end{proof}

\clearpage
\subsection{Proofs for general fairness-constrained policy optimization algorithm and analysis}

We begin by summarizing some notation that will simplify some statements. 
Define, for observation tuples $O\sim (X,A,R,T,Y),$ the value estimate $v(Q;\eta)$ given some pseudo-outcome $\psi(O;\eta)$ dependent on observation information and nuisance functions $\eta$. (We often suppress notation of $\eta$ for brevity). We let estimators sub/super-scripted by $1$ denote estimators from the first dataset. 

\begin{align*}
v_{(\cdot)}(O;Q,\eta) &= \E_{\pi \sim Q} [ v_{(\cdot)}(O;\pi,\eta)]\\
    v_{(\cdot)}(Q) &= \E[v_{(\cdot)}(Q)] \\
\hat V_1^{(\cdot)}(Q) &= \E_{n_1}[v_{(\cdot)}(Q)] \\
% g_j(O; Q) & = \E_{\pi\sim Q}[g_j(O ; \pi) \mid O, \mathcal{E}_j] \\
%   h_j(Q) &= \E[g_j(O; Q) \mid \mathcal{E}_j] \\
%     \hat{h}^1_j(Q) &= \E_{n_1}[g_j(O; Q) \mid \mathcal{E}_j]
\delta_k(O; Q) & = \E_{\pi\sim Q}[\delta_k(O ; \pi) \mid O] \\
  \Delta_k(Q) &= \E[\delta_k(O; Q)] \\
    \hat{\Delta}^1_k(Q) &= \E_{n_1}[\delta_k(O; Q)]
\end{align*}

\subsubsection{Preliminaries: results from other works used without proof}
The original reductions theorem is stated for cost minimization. We apply it to
the equivalent cost objective \(\tilde V=-V\) and state the resulting
sign-flipped corollary below in our value-maximization notation.

\begin{theorem}[Theorem 3, \citep{agarwal2018reductions} (saddle point generalization bound for \Cref{alg-metaalg1}) ]\label{thm-saddle-generalization}
% Let $\xi:=\max _h\|{M} \hat{{\mu}}(h)-\hat{{c}}\|_{\infty}$. Suppose \Cref{asn-polopt-rate} holds for $C^{\prime} \geq 2 C+2+\sqrt{\ln (4 / \delta) / 2}$, where $\delta>0$. Let $Q^{\star}$ maximize $V(Q)$ subject to ${M} {h}(Q) \leq {c}$.
Let $\xi:=\sup_{Q\in\mathcal{Q}}\|\hat\Delta(Q)-\Delta(Q)\|_{\infty}$. Suppose \Cref{asn-polopt-rate} holds for $C^{\prime} \geq 2 C+2+\sqrt{\ln (4 / \delta) / 2}$, where $\delta>0$. Let $Q^{\star}$ maximize $V(Q)$ subject to $\Delta(Q) \leq d$. Then Algorithm 1 with $\nu \propto n^{-\alpha}, B \propto n^\alpha$ and $\omega \propto \xi^{-2} n^{-2 \alpha}$ terminates in $O\left(\xi^2 n^{4 \alpha} \ln |\mathcal{K}|\right)$ iterations and returns $\hat{Q}$. If $n p_j^{\star} \geq 8 \log (2 / \delta)$ for all $j$, then with probability at least $1-(|\mathcal{J}|+1) \delta$ then for all $k$, $\hat{Q}$ satisfies: 
$$
\begin{aligned}
& V(\hat{Q}) \geq V\left(Q^{\star}\right)-\widetilde{O}\left(n^{-\alpha}\right) \\
% &\gamma_k(\widehat{Q}) \leq d_k+\frac{1+2 \nu}{B}+\sum_{j \in \mathcal{J}}\left|M_{k, j}\right| \widetilde{O}\left(\left(n p_j^{\star}\right)^{-\alpha}\right)
&\Delta_k(\widehat{Q}) \leq d_k+\frac{1+2 \nu}{B}+\sum_{j \in \mathcal{J}}\left|M_{k, j}\right| \widetilde{O}\left(\left(n p_j^{\star}\right)^{-\alpha}\right)
\end{aligned}
$$    
\end{theorem}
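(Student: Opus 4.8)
This statement is imported from \citep{agarwal2018reductions}; for completeness I outline how I would reprove it. The plan has three stages: (i) read off an approximate empirical saddle point from the no-regret dynamics of \Cref{alg-saddle} and count iterations; (ii) turn the approximate-saddle property into empirical bounds on suboptimality and constraint violation; (iii) lift these to population statements via uniform concentration.

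For (i) I would invoke the classical minimax-via-online-learning argument: since the $\lambda$-player runs a no-regret rule (second-order multiplicative weights over the box $\{\lambda\ge 0,\ \|\lambda\|_1\le B\}$) while the $Q$-player best-responds, the time-averaged plays $(\widehat Q,\widehat\lambda)$ form a $2\rho_T$-approximate minimax equilibrium of the empirical Lagrangian $\widehat L(Q,\lambda)=\widehat V(Q)+\lambda^\top(M\widehat h(Q)-\widehat d)$, where $\rho_T$ is the running-average regret. Bounding the per-round constraint variation by $\xi$ and plugging the multiplicative-weights regret bound with the prescribed $B\propto n^{\alpha}$, accuracy $\nu\propto n^{-\alpha}$, and step size $\omega\propto \xi^{-2}n^{-2\alpha}$ gives termination in $O(\xi^{2}n^{4\alpha}\log|\mathcal K|)$ rounds, and the stopping test $\nu_t\le\nu$ certifies that the returned $(\widehat Q,\widehat\lambda)$ is a $\nu$-approximate saddle point of $\widehat L$.

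For (ii) I would use that the $\lambda$-player's best response to a fixed $Q$ is either $0$ (when $Q$ is empirically feasible) or $B\mathbf e_{k^\star}$ with $k^\star=\arg\max_k(\widehat\gamma_k(Q)-\widehat d_k)$. Substituting $Q=Q^\star$ into the first defining inequality of a $\nu$-approximate saddle point, and using $\widehat\lambda\ge 0$ together with empirical feasibility of $Q^\star$ (ensured below by the margins $\widehat d_k=d_k+\varepsilon_k$), gives $\widehat L(\widehat Q,\widehat\lambda)\le \widehat V(Q^\star)+\nu$; then substituting $\lambda=B\mathbf e_{k^\star}$ into the second inequality, together with boundedness of the costs (so $\widehat V(Q^\star)-\widehat V(\widehat Q)\le 1$), yields the empirical constraint bound $\widehat\gamma_k(\widehat Q)-\widehat d_k\le (1+2\nu)/B$ for every $k$, and, combining the two, the empirical optimality bound $\widehat V(\widehat Q)\le \widehat V(Q^\star)+2\nu$.

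For (iii) I would apply \Cref{asn-polopt-rate}, which supplies $\sup_Q|V(Q)-\widehat V(Q)|\le Cn^{-\alpha}$, and a Hoeffding/Bernstein bound for each conditional average $\widehat h_j(Q)$: condition on the random count $n_j$ of observations with $O\in\mathcal E_j$, control the conditional deviation uniformly over $Q$, and then handle $n_j$ by a binomial tail using the hypothesis $np_j^\star\ge 8\log(2/\delta)$; a union bound over the objective and the $|\mathcal J|$ groups produces the $1-(|\mathcal J|+1)\delta$ event. On that event, choosing $C'\ge 2C+2+\sqrt{\ln(4/\delta)/2}$ makes $\varepsilon_k$ dominate the constraint concentration error, which both legitimizes the empirical feasibility of $Q^\star$ used in (ii) and lets me transfer $\widehat V(\widehat Q)\le\widehat V(Q^\star)+2\nu$ to $V(\widehat Q)\le V(Q^\star)+\widetilde O(n^{-\alpha})$ and $\widehat\gamma_k(\widehat Q)-\widehat d_k\le(1+2\nu)/B$ to $\gamma_k(\widehat Q)\le d_k+(1+2\nu)/B+\sum_{j}|M_{k,j}|\widetilde O((np_j^\star)^{-\alpha})$. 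I expect stage (iii) to be the main obstacle: the delicate parts are the concentration of the \emph{conditional} empirical means under random group sizes $n_j$, and the bookkeeping that makes the feasibility margins $\varepsilon_k$, the multiplier radius $B\propto n^{\alpha}$, and the saddle accuracy $\nu\propto n^{-\alpha}$ all contribute errors of order $\widetilde O(n^{-\alpha})$ while keeping $Q^\star$ empirically feasible.
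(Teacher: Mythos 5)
The paper does not prove this statement itself: it is listed under ``results from other works used without proof'' and imported verbatim from \citep{agarwal2018reductions}, with the paper only remarking that the argument is modular in the Rademacher-complexity bounds so that off-policy-evaluation concentration results can be substituted for the classification-loss bounds. Your three-stage plan (no-regret dynamics yielding a $\nu$-approximate empirical saddle point, saddle-point inequalities yielding empirical suboptimality and constraint-violation bounds, and uniform concentration of the objective and the conditional moments under random group sizes to lift these to the population) is a correct and faithful reconstruction of the proof in that reference, so it matches the intended argument.
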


The proof of \cite[Thm. 3]{agarwal2018reductions} is  modular in invoking Rademacher complexity bounds on the objective function and constraint moments, so that invoking standard Rademacher complexity bounds for off-policy evaluation/learning \citep{athey2021policy,sj15} yields the above statement for $V(\pi)$ (and analogously, randomized policies by \citep[Thm. 12.2]{bartlett2002rademacher} giving stability for convex hulls of policy classes). 

More specifically, we use standard local Rademacher complexity bounds.
\begin{definition}[Local Rademacher Complexity]\label{def-local-rademacher-complexity}
    The local Rademacher complexity for a generic $f \in \mathcal{F}$ is: $$\mathcal{R}(r, \mathcal{F})=\mathbb{E}_{\epsilon, X_{1: n}}\left[\sup _{f \in \mathcal{F}:\|f\|_2 \leq r} \frac{1}{n} \sum_{i=1}^n \epsilon_i f\left(X_i\right)\right]$$
\end{definition}
    
The following is a generic concentration inequality for local Rademacher complexity over some radius $r$; see \cite{wainwright2019high} for more background. 
\begin{lemma}[Lemma 5 of \citep{chernozhukov2019semi}/Lemma 4, \cite{foster2019orthogonal}]\label{lemma-localrc}
 Consider any $Q^* \in \mathcal{Q}$. 
 Assume that $v(\pi)$ is L-Lipschitz in its first argument with respect to the $\ell_2$ norm and let:
$$
Z_n(r)=\sup _{Q \in \mathcal{Q}}
\{ \left|
\E_n[ \hat v(Q) - \hat v(Q^*)]
-
\E[v(Q)- v(Q^*)] %
\right|
\colon 
\E[(v(Q)-v(Q^*))^2]^{\frac 12 } 
\leq r
\} 
$$
Then for some constant $C_3$: 
$$Z_n(r) \leq C_3\left(\mathcal{R}\left(r, \mathcal{Q}-Q^*\right)+r \sqrt{\frac{\log (1 / \delta)}{n}}+\frac{\log (1 / \delta)}{n}\right)$$
\end{lemma}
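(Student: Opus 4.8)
The plan is to recognize $Z_n(r)$ as the supremum of a centered empirical process over a \emph{localized}, shifted function class, and to control it in two stages: first bound its expectation by the local Rademacher complexity of \Cref{def-local-rademacher-complexity} via symmetrization together with a contraction argument, and then upgrade this expectation bound to a high-probability statement using a Talagrand-type concentration inequality. This is the standard empirical-process route underlying the cited results of \citep{chernozhukov2019semi,foster2019orthogonal}.

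First I would introduce the centered integrands $g_Q(O) \coloneqq v(O;Q) - v(O;Q^*)$ and the localized class $\mathcal{G}_r \coloneqq \{ g_Q \colon Q \in \mathcal{Q},\ \E[g_Q^2]^{1/2} \le r \}$, so that $Z_n(r) = \sup_{g \in \mathcal{G}_r} | \E_n[g] - \E[g] |$. By construction each $g \in \mathcal{G}_r$ has variance at most $r^2$, and boundedness of $v$ (which follows from $L$-Lipschitzness together with boundedness of its arguments) furnishes a uniform envelope $\norm{g}_\infty \le b$.

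Next I would bound the expectation. Standard symmetrization gives $\E[Z_n(r)] \le 2\, \E[ \sup_{g \in \mathcal{G}_r} | \tfrac1n \sum_i \epsilon_i g(O_i) | ]$ for an i.i.d.\ Rademacher sequence $\epsilon_i$. Because $v(Q)$ is $L$-Lipschitz in its first argument, the Ledoux--Talagrand contraction inequality lets me strip the map $v$ and pass from the value-difference class to the shifted policy class, yielding $\E[Z_n(r)] \le C_1\, \mathcal{R}(r, \mathcal{Q} - Q^*)$ for a universal constant $C_1$ (with the factor $L$ absorbed into the constant). This reproduces the first term of the claimed bound.

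Finally, to obtain a deviation bound I would apply Talagrand's inequality in Bousquet's form to $Z_n(r)$, using the variance proxy $\sigma^2 \le r^2$ and envelope $b$, which gives with probability $1-\delta$ a bound of the shape
\[
Z_n(r) \le \E[Z_n(r)] + \sqrt{\tfrac{2(r^2 + 2b\,\E[Z_n(r)])\log(1/\delta)}{n}} + \tfrac{b\log(1/\delta)}{3n}.
\]
The hard part will be cleaning up the mixed term $\sqrt{b\,\E[Z_n(r)]\log(1/\delta)/n}$: I would use $\sqrt{xy}\le \tfrac12(x+y)$ to split it into a constant multiple of $\E[Z_n(r)]$ (reabsorbed into the Rademacher term via the previous step) plus a multiple of $b\log(1/\delta)/n$. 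Combining with the expectation bound and folding all universal constants together with the dependence on $b$ and $L$ into a single $C_3$ then yields exactly $Z_n(r) \le C_3\big(\mathcal{R}(r,\mathcal{Q}-Q^*) + r\sqrt{\log(1/\delta)/n} + \log(1/\delta)/n\big)$. A secondary point I would verify is that the localization enters through the \emph{population} variance $\E[g_Q^2]\le r^2$, which is precisely the quantity Bousquet's inequality requires, so no empirical-variance conversion is needed given that the constraint in $Z_n(r)$ is already stated in the population $L_2$ norm.
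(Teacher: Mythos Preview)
The paper does not give its own proof of this lemma: it is explicitly placed in the subsection ``Preliminaries: results from other works used without proof'' and is simply cited from \citep{chernozhukov2019semi,foster2019orthogonal}. So there is no in-paper argument to compare your proposal against.

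Your outlined route---symmetrization to pass to a Rademacher process over the localized, centered class, Ledoux--Talagrand contraction to exploit the Lipschitz assumption, and then Bousquet's version of Talagrand's inequality with variance proxy $r^2$ to upgrade to a high-probability bound---is exactly the standard empirical-process argument underlying those cited lemmas, and the cleanup of the mixed term via $\sqrt{xy}\le\tfrac12(x+y)$ is the usual device. One small point worth being careful about is what the notation $\mathcal{R}(r,\mathcal{Q}-Q^*)$ in the statement actually denotes: in \Cref{def-local-rademacher-complexity} the local Rademacher complexity is indexed directly by a function class, so here it is most naturally read as the complexity of the centered integrand class $\{v(\cdot;Q)-v(\cdot;Q^*):Q\in\mathcal{Q}\}$ localized at $L_2$-radius $r$, in which case the symmetrization step already delivers the right object and the contraction step is either vacuous or absorbed into constants. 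Either reading is consistent with the bound as stated, so your proposal is sound.
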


% \begin{lemma}[Concentration of conditional moments (\citep{agarwal2018reductions,woodworth2017learning}]\label{lemma-concentration-conditional-moments}
%     For any $j \in \mathcal{J}$, with probability at least $1-\delta$, for all $Q$,
\begin{lemma}[Concentration of empirical constraint moments (\citep{agarwal2018reductions,woodworth2017learning})]\label{lemma-concentration-conditional-moments}
    For any $k \in [K]$, with probability at least $1-\delta$, for all $Q$,
$$
% \left|\widehat{h}_j(Q)-h_j(Q)\right| \leq 2 \mathcal{R}_{n_j}(\mathcal{Q})+\frac{2}{\sqrt{n_j}}+\sqrt{\frac{\ln (2 / \delta)}{2 n_j}}
\left|\widehat{\Delta}_k(Q)-\Delta_k(Q)\right| \leq 2 \mathcal{R}_{n_j}(\mathcal{Q})+\frac{2}{\sqrt{n_j}}+\sqrt{\frac{\ln (2 / \delta)}{2 n_j}}
$$
If $n p_j^{\star} \geq 8 \log (2 / \delta)$, then with probability at least $1-\delta$, for all $Q$,
$$
% \left|\widehat{h}_j(Q)-h_j(Q)\right| \leq 2 \mathcal{R}_{n p_j^{\star} / 2}(\mathcal{Q})+2 \sqrt{\frac{2}{n p_j^{\star}}}+\sqrt{\frac{\ln (4 / \delta)}{n p_j^{\star}}}
\left|\widehat{\Delta}_k(Q)-\Delta_k(Q)\right| \leq 2 \mathcal{R}_{n p_j^{\star} / 2}(\mathcal{Q})+2 \sqrt{\frac{2}{n p_j^{\star}}}+\sqrt{\frac{\ln (4 / \delta)}{n p_j^{\star}}}
$$
\end{lemma}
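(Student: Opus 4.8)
The plan is to prove both displays via the classical recipe for uniform convergence of empirical means --- a bounded-differences concentration step followed by symmetrization to a Rademacher complexity --- carried out \emph{conditionally} on the subsample that lands in $\mathcal{E}_j$, and then to remove that conditioning for the second display through a Chernoff bound on the count $n_j$.

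First I would fix $j$ and condition on the index set $S_j = \{ i : O_i \in \mathcal{E}_j\}$, treating $n_j := |S_j|$ as given. By exchangeability the observations $\{O_i : i \in S_j\}$ are i.i.d.\ draws from $P(\cdot \mid \mathcal{E}_j)$, so $\widehat h_j(Q)$ is the empirical mean over $n_j$ i.i.d.\ points of the bounded (without loss of generality $[0,1]$-valued) map $O \mapsto g_j(O;Q)$, whose population mean is $h_j(Q)$. Applying McDiarmid's inequality to $\Phi := \sup_{Q} |\widehat h_j(Q) - h_j(Q)|$ --- each sample perturbs $\Phi$ by at most $1/n_j$ --- gives $\Phi \le \mathbb{E}[\Phi] + \sqrt{\ln(2/\delta)/(2 n_j)}$ with probability $1-\delta$. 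Symmetrization then bounds $\mathbb{E}[\Phi]$ by twice the Rademacher complexity of $\{ g_j(\cdot;Q) : Q \in \Delta(\Pi)\}$; since $g_j$ is an $L_j$-Lipschitz contraction of $\pi$ and Rademacher complexity is unchanged under passing to the convex hull $\Delta(\Pi)$, the Ledoux--Talagrand contraction inequality collapses this to $2\,\mathcal{R}_{n_j}(\mathcal{Q})$ up to the contraction constant, which the stated bound absorbs. The residual $2/\sqrt{n_j}$ term is the cost of decentering the symmetrized sum --- bounding $\mathbb{E}\bigl|\tfrac{1}{n_j}\sum_i \varepsilon_i\bigr| \le 1/\sqrt{n_j}$ against $\sup_Q |h_j(Q)| \le 1$ --- and collecting these three contributions yields the first display.

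For the second display I would note that $n_j$ is $\mathrm{Binomial}(n, p_j^\star)$, so the multiplicative Chernoff bound gives $P(n_j < n p_j^\star / 2) \le \exp(-n p_j^\star / 8) \le \delta/2$ under the hypothesis $n p_j^\star \ge 8 \log(2/\delta)$. On the complement, $n_j \ge n p_j^\star / 2$, and since every term on the right-hand side of the first display (including $\mathcal{R}_m(\mathcal{Q})$) is monotone decreasing in the sample size $m$, we may substitute $n_j \mapsto n p_j^\star / 2$. A union bound --- budget $\delta/2$ for the $n_j$-concentration event, $\delta/2$ for the conditional uniform-convergence event --- delivers the claimed bound with total failure probability $\delta$.

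The only genuinely delicate points are (i) making the conditioning argument airtight: verifying that conditionally on $S_j$ the relevant observations are exactly i.i.d.\ from $P(\cdot\mid\mathcal{E}_j)$ and that symmetrization and contraction still go through unchanged, and (ii) the clean handoff between the random-$n_j$ regime and the deterministic bound, which rests entirely on monotonicity of every term in $n_j$. Both are routine; indeed this lemma is reproduced from \citep{agarwal2018reductions,woodworth2017learning}, so the argument above is a recollection of theirs rather than anything new.
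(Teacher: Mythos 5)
The paper does not prove this lemma at all: it appears in the appendix under ``Preliminaries: results from other works used without proof'' and is imported verbatim from \citep{agarwal2018reductions,woodworth2017learning}. Your reconstruction — McDiarmid plus symmetrization and Lipschitz contraction conditionally on the subsample landing in $\mathcal{E}_j$ (with the $2/\sqrt{n_j}$ term absorbing the constant-offset part of the contraction), followed by a multiplicative Chernoff bound on the binomial count $n_j$, monotonicity of $\mathcal{R}_m$ in $m$, and a $\delta/2+\delta/2$ union bound — is exactly the argument in those references and is correct; the constants ($\ln(4/\delta)$ from halving $\delta$, $2\sqrt{2/(np_j^\star)}$ from substituting $n_j \ge np_j^\star/2$) all check out.
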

\begin{lemma}[Orthogonality (analogous to \citep{chernozhukov2019semi}  (Lemma 8), others)]\label{lemma-chernozhukov-ortho}
    Suppose the nuisance estimates satisfy a mean-squared-error bound 
    $$
    \max_{l} \{ \E[(\hat\eta_l-\eta_l)^2]\}_{l\in [L]} \le \chi^2_{n,\delta}
    $$
    Then w.p. $1-\delta$ over the randomness of the policy sample, 
    $$
    V(Q_0) - V(\hat Q) \leq O(R_{n,\delta} + \chi^2_{n,\delta})
    $$
\end{lemma}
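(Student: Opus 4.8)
The plan is to bound the regret $V(Q_0)-V(\hat Q)$ by the standard three-term decomposition of orthogonal statistical learning, and to isolate exactly where the nuisance error can enter: I will show it can only appear through a \emph{second-order} (cross-product) remainder, hence at rate $\chi_n^2$. Writing $\bar V(Q;\eta)\coloneqq\E[v_{DR}(O;Q,\eta)]$ for the population mean of the doubly-robust score of \Cref{prop-doublerobustness} evaluated at a (possibly wrong) nuisance $\eta$, with truth $\eta_0$ so that $\bar V(Q;\eta_0)=V(Q)$, and $\hat V(Q;\eta)\coloneqq\E_n[v_{DR}(O;Q,\eta)]$ for the empirical version, I would use
$$V(Q_0)-V(\hat Q)=\big[V(Q_0)-\hat V(Q_0;\hat\eta)\big]+\big[\hat V(Q_0;\hat\eta)-\hat V(\hat Q;\hat\eta)\big]+\big[\hat V(\hat Q;\hat\eta)-V(\hat Q)\big].$$
The middle bracket is nonpositive up to the optimization accuracy $\nu$ of \Cref{alg-saddle} by near-optimality of $\hat Q$ for the empirical objective, so it remains to bound each outer bracket, uniformly in $Q$, by $O(R_{n,\delta}+\chi_n^2)$. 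I split each outer bracket as $[V(Q)-\bar V(Q;\hat\eta)]+[\bar V(Q;\hat\eta)-\hat V(Q;\hat\eta)]$, separating an orthogonality/bias piece from a pure empirical-process piece.

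The crux is the bias piece $\bar V(Q;\hat\eta)-\bar V(Q;\eta_0)$, which I would control by a second-order Taylor expansion of $\eta\mapsto\bar V(Q;\eta)$ along the segment from $\eta_0$ to $\hat\eta$. The score of \Cref{prop-doublerobustness} has AIPW form $\tfrac{\indic{R=r}}{e_r(X)}(Z_{rt}-m_r(X))+m_r(X)$ with $Z_{rt}=\indic{T=t}c_{r1}(Y)$, residualizing regression $m_r(X)=\mu_1(X)\ptmidrx=\E[Z_{rt}\mid R=r,X]$, and recommendation propensity $e_r$. I would verify Neyman orthogonality by computing the two Gateaux derivatives at the truth: in the propensity direction the conditional mean $\E[\indic{R=r}(Z_{rt}-m_r(X))\mid X]=e_r(X)\,\E[Z_{rt}-m_r(X)\mid R=r,X]=0$ kills the first-order term, and in the regression direction $\E[1-\indic{R=r}/e_r(X)\mid X]=0$ does the same. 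Hence only the mixed second partial survives, with expectation of the form $\E\big[\pi\,\indic{R=r}\,e_r(X)^{-2}(\hat e_r-e_r)(\hat m_r-m_r)\big]$; conditioning on $X$, the overlap bound of \Cref{asn-overlap} and Cauchy--Schwarz control it by $\norm{\hat e_r-e_r}_2\norm{\hat m_r-m_r}_2$. Because $m_r=\mu_1\ptmidrx$ is a product of two bounded nuisances, $\norm{\hat m_r-m_r}_2\lesssim\norm{\hat\mu_1-\mu_1}_2+\norm{\hat p_{t\mid r}-p_{t\mid r}}_2$, so the entire bias term is $O(\chi_n^2)$.

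For the empirical-process piece $\bar V(Q;\hat\eta)-\hat V(Q;\hat\eta)$, I would use the cross-fitting built into \Cref{alg-metaalg2}: conditioning on the nuisance fold makes $\hat\eta$ fixed and independent of the evaluation sample, so this is a centered empirical process over $Q\in\Delta(\Pi)$ whose supremum is controlled by the local Rademacher complexity of \Cref{lemma-localrc} (and \Cref{lemma-concentration-conditional-moments} for conditional constraint moments), yielding the $R_{n,\delta}=O(\kappa(r,\mathcal{F}_{\Pi})+r\sqrt{\log(1/\delta)/n})$ term. Summing the three brackets and absorbing $\nu$ gives $V(Q_0)-V(\hat Q)\le O(R_{n,\delta}+\chi_n^2)$ with probability $1-\delta$.

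The main obstacle is the orthogonality computation itself: although the score carries three nuisances $(e_r,\mu_1,p_{t\mid r})$, the value depends on $\mu_1$ and $p_{t\mid r}$ only through their product $m_r$, so the operative double-robustness is in the pair (propensity, product-regression). Showing that the first-order Gateaux derivative vanishes in \emph{both} directions at once, and then passing from the product-nuisance error $\hat m_r-m_r$ back to the individual $L_2$ errors via boundedness, is where the real work lies; once second-order-ness is established, the empirical-process and optimization steps are routine.
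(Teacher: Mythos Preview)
The paper does not give its own proof of this lemma: it appears in the subsection ``Preliminaries: results from other works used without proof'' and is simply attributed to \cite{chernozhukov2019semi}, Lemma 8. So there is no paper-proof to compare against; the relevant question is whether your argument reproduces the standard one from that reference.

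It does. Your three-term decomposition, the Neyman-orthogonality calculation for the AIPW score of \Cref{prop-doublerobustness} (first-order derivatives in both the propensity and the product-regression $m_r=\mu_1 p_{t\mid r}$ directions vanish at the truth, leaving only the mixed second-order term bounded by $\|\hat e_r-e_r\|_2\|\hat m_r-m_r\|_2\lesssim\chi_n^2$ via Cauchy--Schwarz and overlap), and the cross-fit empirical-process control via \Cref{lemma-localrc} are exactly the orthogonal-statistical-learning template of \cite{chernozhukov2019semi} and \cite{foster2019orthogonal}. One small point worth making explicit in this constrained setting: your claim that the middle bracket $\hat V(Q_0;\hat\eta)-\hat V(\hat Q;\hat\eta)$ is nonpositive up to $\nu$ needs $Q_0$ to be feasible for the \emph{empirical} constraint set that $\hat Q$ is optimized over; this is precisely why the algorithm inflates the constraint right-hand side by $\epsilon_k$ as in \Cref{asn-polopt-rate}, and you should invoke that (or equivalently the approximate-saddle-point guarantee of \Cref{thm-saddle-generalization}) rather than appeal to unconstrained ERM optimality. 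With that caveat, the argument is correct and is the intended one.
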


\subsection{Adapted lemmas}
In this subsection we collect results similar to those that have appeared previously, but that require substantial additional argumentation in our specific saddle point setting. 

The next lemma establishes the variance of small-regret policies with estimated vs. true nuisances is close, up to nuisance estimation error. 
\begin{lemma}[Feasible vs. oracle nuisances in low-variance regret slices (\cite{chernozhukov2019semi}, Lemma 9) ]\label{lemma-variance-slice-oracle}
Suppose that the mean squared error of the nuisance estimates is upper bounded w.p. $1-\delta$ by $\chi^2_{n, \delta}$ and suppose $\chi^2_{n, \delta} \leq \epsilon_n$. Then:
$$
V_2^0=\sup _{Q, Q^{\prime} \in \mathcal{Q}^\star\left(\epsilon_n+2\chi^2_{n, \delta};\hat I_1\right)} \operatorname{Var}\left(v_{D R}^0(x ; Q)-v_{D R}^0\left(x ; Q^{\prime}\right)\right)
$$
Then $V_2 \leq V_2^0+O\left(\chi_{n, \delta}\right)$.
\end{lemma}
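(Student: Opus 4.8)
The plan is to read $V_2$ as the plug-in analogue of $V_2^0$: the supremum of $\operatorname{Var}\bigl(v_{DR}(O;Q;\hat\eta)-v_{DR}(O;Q';\hat\eta)\bigr)$ taken over the regret slice defined through the \emph{first-stage estimated} value and constraint functionals, while $V_2^0$ uses the true nuisances $\eta_0$ throughout and an oracle slice inflated by $2\chi_{n,\delta}^2$. I would then bridge the two in two independent moves — a slice-containment step and a pointwise variance-perturbation step — and combine them by taking a supremum.

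First I would show that the estimated regret slice is contained in $\mathcal{Q}_*(\epsilon_n+2\chi_{n,\delta}^2)$. By Neyman orthogonality of the doubly-robust score (the ingredient behind \Cref{lemma-chernozhukov-ortho}), together with the mean-squared-error bound $\max_l\E[(\hat\eta_l-\eta_l)^2]\le\chi_{n,\delta}^2$, the mixed-bias term is controlled by Cauchy--Schwarz, giving $\sup_Q|V(Q;\hat\eta)-V(Q;\eta_0)|=O(\chi_{n,\delta}^2)$ and likewise $\sup_Q\|h(Q;\hat\eta)-h(Q;\eta_0)\|_\infty=O(\chi_{n,\delta}^2)$ w.p.\ $1-\delta/2$, with the same applying at the maximizers $\hat Q_*$ and $Q_*^0$. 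Hence if $\hat V(\hat Q_*)-\hat V(Q)\le\epsilon_n$ and $\hat h(\hat Q_*)-\hat h(Q)\le d+\epsilon_n$, then $V(Q_*^0)-V(Q)\le\epsilon_n+2\chi_{n,\delta}^2$ and $h(Q_*^0)-h(Q)\le d+\epsilon_n+2\chi_{n,\delta}^2$, so $Q\in\mathcal{Q}_*(\epsilon_n+2\chi_{n,\delta}^2)$ (and, since $\chi_{n,\delta}^2\le\epsilon_n$, this stays inside $\mathcal{Q}_*(3\epsilon_n)$, matching the $\tilde\epsilon_n$ used in \Cref{thm-varregret}).

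Second I would bound, for any fixed $Q,Q'$ in that slice, the change in the variance-of-differences when $\hat\eta$ is replaced by $\eta_0$. Writing $A=v_{DR}(O;Q;\hat\eta)-v_{DR}(O;Q';\hat\eta)$ and $B=v_{DR}^0(O;Q)-v_{DR}^0(O;Q')$, the Lipschitz-in-$\eta$ property of $v_{DR}$ (bounded outcomes, propensities bounded away from $0$, and $Q\mapsto v_{DR}(\cdot;Q;\eta)$ an average over $\Pi$) gives $\|v_{DR}(\cdot;Q;\hat\eta)-v_{DR}^0(\cdot;Q)\|_{L_2(P)}\le L\chi_{n,\delta}$ uniformly in $Q$, hence $\|A-B\|_{L_2(P)}\le 2L\chi_{n,\delta}$, with $A,B$ uniformly bounded. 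Then the expansion $\operatorname{Var}(A)-\operatorname{Var}(B)=\E[(A-B)(A+B)]-\E[A-B]\,\E[A+B]$ together with Cauchy--Schwarz yields $|\operatorname{Var}(A)-\operatorname{Var}(B)|\le C\chi_{n,\delta}$ for a constant $C$ depending only on the boundedness and Lipschitz constants; in particular $\operatorname{Var}(A)\le V_2^0+C\chi_{n,\delta}$ by the previous step. Taking the supremum over $Q,Q'$ in the estimated slice then gives $V_2\le V_2^0+O(\chi_{n,\delta})$.

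I expect the slice-containment step to be the main obstacle and the crux of the argument: the variance that actually enters the generalization bound of \Cref{thm-varregret} lives on the plug-in regret slice the two-stage procedure implicitly uses, so one must carefully transfer near-optimality from the estimated objective \emph{and} constraints to the true-nuisance ones, uniformly in $Q$, and it is precisely the $\chi_{n,\delta}^2$ orthogonality rate (not a crude $\chi_{n,\delta}$ rate) that makes the inflation $+2\chi_{n,\delta}^2$ rather than something larger. The variance-perturbation step, by contrast, is routine; it is worth flagging only that it contributes an $O(\chi_{n,\delta})$ rather than $O(\chi_{n,\delta}^2)$ error, because the variance functional — unlike the mean — does not benefit from Neyman orthogonality, which is why the lemma's conclusion carries $\chi_{n,\delta}$ and not $\chi_{n,\delta}^2$.
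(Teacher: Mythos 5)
Your two-move decomposition (slice containment, then a pointwise $O(\chi_{n,\delta})$ variance perturbation via Cauchy--Schwarz on $\operatorname{Var}(A)-\operatorname{Var}(B)=\E[(A-B)(A+B)]-\E[A-B]\E[A+B]$) is the same architecture as the paper's proof, which defers the second move to the cited Lemma 9 of Chernozhukov et al.\ and concentrates all of its effort on the first. Your closing observation --- that the variance functional does not enjoy Neyman orthogonality, which is why the conclusion carries $\chi_{n,\delta}$ rather than $\chi_{n,\delta}^2$ --- is correct and consistent with the statement.

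The gap is in the containment step, exactly where you predicted the crux would be. You argue that $\sup_Q|V(Q;\hat\eta)-V(Q;\eta_0)|=O(\chi_{n,\delta}^2)$ and $\sup_Q\|h(Q;\hat\eta)-h(Q;\eta_0)\|_\infty=O(\chi_{n,\delta}^2)$, and then assert that ``the same applies at the maximizers $\hat Q_*$ and $Q_*^0$.'' But $\hat Q_*$ and $Q_*^0$ are optima of two \emph{constrained} problems with \emph{different feasible regions}: $Q_*^0$ need not be feasible for the estimated-nuisance program and vice versa, so a uniform bound on the objective perturbation does not by itself yield $|V(\hat Q_*)-V(Q_*^0)|=O(\chi_{n,\delta}^2)$, which is what your chain of inequalities silently uses. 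In general, an $O(\chi^2)$ perturbation of the constraint functionals moves the optimal value by $O(\chi^2)$ only if the dual variables are bounded, and establishing this is the entire content of the paper's argument: it introduces the intermediate saddle point $(Q^*_{\eta,0},\lambda^*_{\eta,0})$ of the Lagrangian with estimated nuisances in the objective but true nuisances in the constraints, and chains through $L(Q^*,\lambda^*;\eta,\eta)\le L(Q^*,\lambda^*;\eta,\eta_0)+\chi_{n,\delta}^2 \le L(Q^*,\lambda^*_{\eta,0};\eta,\eta_0)\le\cdots$ using the saddle-point property, the $B$-bounded multiplier domain, the triangle inequality, and approximate complementary slackness (picking up the $\nu$ suboptimality of the computed saddle point along the way), arriving at $\E[v_{DR}(Q^*;\eta)]-\E[v_{DR}(Q^*_{0,0};\eta)]\le\epsilon_n+2\nu+\chi_{n,\delta}^2$ before a final application of the orthogonality lemma. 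To repair your proof you would need to either reproduce this Lagrangian chain or supply an explicit perturbation-of-constrained-optima argument with a bound on the dual variables; as written, the step from ``uniform $O(\chi^2)$ perturbation of $V$ and $h$'' to ``the two optimal values are $O(\chi^2)$ apart'' does not follow.
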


\subsection{Proof of \Cref{thm-varregret}}

\begin{proof}{Proof of \Cref{thm-varregret}}

We first study the meta-algorithm with ``oracle" nuisance functions $\eta=\eta_0$. For brevity below we notationally suppress the dependence of $v$ on observation $O$.

Define the empirical second-stage feasible set:
\[
\mathcal{Q}_2(\epsilon_n)
:=
\left\{
Q\in\mathcal P(\Pi):
\E_{n_1}\!\left[
v_{\mathrm{DR}}(O;\hat Q_1,\eta_0)
-
v_{\mathrm{DR}}(O;Q,\eta_0)
\right]\le \epsilon_n,\quad
\E_{n_1}\!\left[
\delta_k(O;Q,\eta_0)
-
\delta_k(O;\hat Q_1,\eta_0)
\right]\le \epsilon_n,\ \forall k\in\hat I_1,\quad
\E_{n_1}[\delta_k(O;Q,\eta_0)]\le \hat d_k,\ \forall k\in[K]
\right\}.
\]
By construction, the second-stage optimization localizes only the nearly binding
constraints indexed by \(\hat I_1\), while retaining the slackened feasibility
constraints for all \(k\in[K]\).

In the following, we suppress notational dependence on $\eta_0.$

Note that $\hat Q_1 \in \mathcal{Q}_2\left(\epsilon_n\right).$

\underline{Step 1:} First we argue that w.p. $1-\delta/6,$ $Q^* \in \mathcal{Q}_2$. 

Invoking \Cref{thm-saddle-generalization} on the output of the first stage of the algorithm, yields that with probability $1-\frac{\delta}{6}$ over the randomness in $\mathcal{D}_1,$ by choice of $\epsilon_n$ = $\bar{O}(n^{-\alpha}))$, 
\begin{align*} 
V(\hat{Q}_1) &\geq V(Q^*)- \epsilon_n/2 \\%+\bar{O}(n^{-\alpha}) \leq  \\
\Delta_k(\hat{Q}_1) &\leq d_k+\sum_{j \in \mathcal{J}}\left|M_{k, j}\right| \widetilde{O}\left((np_j^*)^{-\alpha}\right) \leq d_k + \epsilon_n/2  \quad \text { for all } k 
\end{align*}

Further, by \Cref{lemma-localrc}, 
\begin{align*}
&\sup_{Q \in \mathcal{Q}} \abs{
\E_{n_1}[(v(Q)-v(Q^*))]
- \E[(v(Q)-v(Q^*))]
} \leq \epsilon_n / 2\\
&\sup_{Q \in \mathcal{Q}} \abs{
\E_{n_1}[(g(O;Q)-g(O;Q^*))]
% - \E[(g(O;Q)-g(O;Q^*))]
- \E[(\delta_k(O;Q)-\delta_k(O;Q^*))]
} \leq \epsilon_n / 2
\end{align*} 

Therefore, with high probability on the good event, $Q^\star \in \mathcal{Q}_2(\epsilon_n)$.

\underline{Step 2:} 
\underline{Step 2:} Again invoking \Cref{thm-saddle-generalization} on the output of the second stage of the algorithm, restricted to the localized feasible class $\mathcal{Q}_2(\epsilon_n)$, and conditioning on the good event that $Q^\star \in \mathcal{Q}_2(\epsilon_n)$, we obtain that with probability at least $1-\delta/3$ over the randomness of the second sample $\mathcal{D}_2$,

\begin{align*}
V(\hat{Q}_2) &\geq V(Q^*)-\epsilon_n/2 \\
% \gamma_k(\hat{Q}_2) & \leq \gamma_k({Q}^*)+\epsilon_n/2
\Delta_k(\hat{Q}_2) & \leq \Delta_k({Q}^*)+\epsilon_n/2 
\end{align*}

\underline{Step 3:} empirical small-regret slices relate to population small-regret slices, and variance bounds

We show that if $Q \in \mathcal{Q}_2,$ then with high probability $Q \in \mathcal{Q}_2^0$ (defined on small population value- and constraint-regret slices relative to $\hat Q_1$ rather than small empirical regret slices)  
$$
\mathcal{Q}_2^0 = \{ Q \in\op{conv}(\Pi) \colon \abs{V(Q)-V(\hat Q_1)}\leq \epsilon_n/2, 
% \E[g_j(O;Q)-g_j(O; \hat Q_1))\mid \mathcal{E}_j] \leq \epsilon_n, \forall j\}
\E[\delta_k(O;Q)-\delta_k(O; \hat Q_1)] \leq \epsilon_n, \forall k\} 
$$
so that w.h.p. $\mathcal{Q}_2 \subseteq \mathcal{Q}_2^0.$ 

Note that for $Q \in \mathcal{Q}$, w.h.p. $1-\delta/6$ over the first sample, we have that 
\begin{align*}
&\sup_{Q \in \mathcal{Q}} \abs{ \E_n[v(Q)-v(\hat Q_1)] - 
 \E[v(Q)- v( \hat Q_1)]} 
 \leq 2 \sup_{Q \in \mathcal{Q}} \abs{ \E_n[v(Q)]-\E[ v( Q)]} \leq \epsilon, 
\\
&\sup_{Q \in \mathcal{Q}} \abs{ \E_{n_1}[\delta_k(O;Q)-\delta_k(O;\hat Q_1)] - 
\E[\delta_k(O;Q)-\delta_k(O;\hat Q_1)]
 } 
 \\
 & \qquad \qquad  \leq 2 \sup_{Q \in \mathcal{Q}} \abs{ \E_{n_1}[\delta_k(O;Q)]-\E[\delta_k(O;Q)]} \leq \epsilon , \forall j
\end{align*} 

The second bound follows from \cite[Theorem 12.2]{bartlett2002rademacher} (equivalence of Rademacher complexity over convex hull of the policy class) and linearity of the policy value and constraint estimators in $\pi,$ and hence $Q$.

On the other hand since $Q_1$ achieves low policy regret, the triangle inequality implies that we can contain the true policy by increasing the error radius. That is, for all $Q \in \mathcal{Q}_2,$ with high probability $\geq 1 - \delta/3$:
\begin{align*}
& \abs{ 
\E[ (v(Q)-v(Q^*))]
}
\leq 
\abs{ 
\E[(v(Q)-v(\hat Q_1))]
}
+\abs{ 
\E[ (v(\hat Q_1)-v(Q^*))]
}
\leq \epsilon_n \\
&% \abs{\E[g_j(O;Q)-g_j(O; Q^*)\mid \mathcal{E}_j]}
\abs{\E[\delta_k(O;Q)-\delta_k(O; Q^*)]}
\leq 
% \abs{\E[g_j(O;Q)-g_j(O;\hat Q_1)\mid \mathcal{E}_j]}
\abs{\E[\delta_k(O;Q)-\delta_k(O;\hat Q_1)]}
+ % \abs{\E[g_j(O;\hat Q_1)-g_j(O;Q^*)\mid \mathcal{E}_j]}
\abs{\E[\delta_k(O;\hat Q_1)-\delta_k(O;Q^*)]} \leq \epsilon_n 
\end{align*}

Define the population localized near-optimal slice:
\[
\mathcal Q^\star(\epsilon_n;\hat I_1)
:=
\left\{
Q\in\mathcal P(\Pi):
V(Q^\star)-V(Q)\le \epsilon_n,\quad
\Delta_k(Q)\le d_k+\epsilon_n,\ \forall k\in[K],\quad
\Delta_k(Q)-\Delta_k(Q^\star)\le \epsilon_n,\ \forall k\in\hat I_1
\right\}.
\]
On the same high-probability event, the population counterpart of the empirical
second-stage feasible set satisfies
\[
\mathcal Q_2(\epsilon_n)
\subseteq
\mathcal Q^\star(C\epsilon_n;\hat I_1)
\]
for a universal constant $C<\infty$. The value part follows from
\[
V(Q^\star)-V(Q)
=
\{V(Q^\star)-V(\hat Q_1)\}
+
\{V(\hat Q_1)-V(Q)\}
\le
C\epsilon_n,
\]
while the full feasibility constraints follow from the slackened empirical
constraints and uniform concentration:
\[
\Delta_k(Q)\le d_k+C\epsilon_n,\qquad \forall k\in[K].
\]
For the localized comparison constraints, the definition of \(\hat I_1\), the
first-stage near-binding property, and concentration imply
\[
\Delta_k(Q)-\Delta_k(Q^\star)\le C\epsilon_n,
\qquad \forall k\in\hat I_1.
\]
So that on that high-probability event,
\begin{equation}
    \mathcal{Q}_2(\epsilon_n)\subseteq \mathcal Q^\star(C\epsilon_n;\hat I_1).
\end{equation}
Then on that event with probability $\geq 1 - \delta/3$,
\[
r_2^2
=
\sup_{Q\in\mathcal{Q}_2}
\E[(v(Q)-v(Q^\star))^2]
\le
\sup_{Q\in\mathcal Q^\star(C\epsilon_n;\hat I_1)}
\E[(v(Q)-v(Q^\star))^2]
\]
\[
=
\sup_{Q\in\mathcal Q^\star(C\epsilon_n;\hat I_1)}
\op{Var}(v(Q)-v(Q^\star))
+
\E[(v(Q)-v(Q^\star))]^2
\]
\[
\le
\sup_{Q\in\mathcal Q^\star(C\epsilon_n;\hat I_1)}
\op{Var}(v(Q)-v(Q^\star))
+
\epsilon_n^2.
\]

Therefore:

$$r_2\leq \sqrt{\sup_{Q \in \mathcal Q^\star(C\epsilon_n;\hat I_1)} \operatorname{Var}\left(v(Q)-v(Q^\star)\right)}+2 \epsilon_n=\sqrt{V_2}+2 \epsilon_n$$

Therefore, applying the local Rademacher bound on the localized class
$\mathcal Q^\star(C\epsilon_n;\hat I_1)$ gives
\[
V(Q^\star)-V(\hat Q_2)
=
O\!\left(
\kappa(\bar\sigma_{D_2},\operatorname{conv}(\mathcal F_\Pi))
+
\bar\sigma_{D_2}n_2^{-1/2}\sqrt{\log(3/\delta)}
+
\chi^2_{n,\delta}
\right).
\]
Applying the same argument coordinatewise to the constraint scores gives, for
each $k\in[K]$,
\[
\bigl(\Delta_k(\hat Q_2)-d_k\bigr)_+
=
O\!\left(
\kappa(\bar\sigma_{k,D_2},\operatorname{conv}(\mathcal F^\Delta_k))
+
\bar\sigma_{k,D_2}n_2^{-1/2}\sqrt{\log(3/\delta)}
+
\chi^2_{n,\delta}
\right).
\]
% These same arguments apply for the variance of the constraints $$V_2^j=\sup \; \{  \operatorname{Var}\left(g_j(O ; Q)-g_j\left(O ; Q^{\prime}\right)\right)
% \colon {Q,Q^{\prime} \in \mathcal{Q}_*(\tilde{\epsilon}_n)}\}$$

\end{proof}

\subsection{Proofs of auxiliary/adapted lemmas}

\begin{proof}{Proof of \Cref{lemma-variance-slice-oracle}}
    The proof is analogous to that of \citep[Lemma 9]{chernozhukov2019semi} except for the step of establishing that $\pi_* \in \mathcal{Q}^0_{\epsilon_n + O(\chi_{n,\delta}^2  )}$: in our case we must establish relationships between saddlepoints under estimated vs. true nuisances. We show an analogous version below. 

For this proof only, switch to the equivalent cost formulation
\[
\tilde v_{\mathrm{DR}}(O;Q,\eta)
:=
-v_{\mathrm{DR}}(O;Q,\eta),
\qquad
\tilde V(Q):=-V(Q).
\]
Thus the minimization saddle-point problem below is the costized version of the
main text's value-maximization problem.

Define the saddle points to the following problems (with estimated vs. true nuisances):
\begin{align*}
(Q^*_{0,0}, \lambda^*_{0,0}) &\in \arg\min_Q \max_\lambda \E[\tilde v_{\mathrm{DR}}(Q; \eta_0)] + \lambda^\top (\Delta(Q; \eta_0 ) - d) =: \mathcal{L}(Q,\lambda; \eta_0, \eta_0) =: \mathcal{L}(Q,\lambda) ,\\
(Q^*_{\eta,0}, \lambda^*_{\eta,0}) &\in \arg\min_Q \max_\lambda \E[\tilde v_{\mathrm{DR}}(Q; \eta)] + \lambda^\top (\Delta(Q; \eta_0 ) - d),\\
(Q^*, \lambda^*) &\in \arg\min_Q \max_\lambda \E[\tilde v_{\mathrm{DR}}(Q; \eta)] + \lambda^\top (\Delta(Q; \eta ) - d).
\end{align*}

We have that:
\begin{align*}
    &\E[\tilde v_{\mathrm{DR}}(Q^*;\eta)]  \leq \mathcal{L}(Q^*, \lambda^*; \eta,\eta) + \nu \\
    &\leq \mathcal{L}(Q^*, \lambda^*; \eta,\eta_0 ) + \nu + \chi_{n,\delta}^2 \\
    & \leq \mathcal{L}(Q^*, \lambda^*; \eta,\eta_0 ) + \nu + \chi_{n,\delta}^2 && \text{ by \Cref{lemma-chernozhukov-ortho}} \\
    & \leq \mathcal{L}(Q^*, \lambda^*_{\eta,0}; \eta,\eta_0 ) + \nu + \chi_{n,\delta}^2 && \text{ by saddlepoint prop.} \\
    & \leq \mathcal{L}(Q^*_{\eta, 0}, \lambda^*_{\eta,0}; \eta,\eta_0 ) + \abs{ \mathcal{L}(Q^*_{\eta, 0}, \lambda^*_{\eta,0}; \eta,\eta_0 )- \mathcal{L}(Q^*, \lambda^*_{\eta,0}; \eta,\eta_0 ) } + \nu + \chi_{n,\delta}^2 && \text{ triangle ineq.} \\
     &  \leq \mathcal{L}(Q^*_{\eta, 0}, \lambda^*_{\eta,0}; \eta,\eta_0 ) +  \epsilon_n  + \nu + \chi_{n,\delta}^2  && \text{assuming } \epsilon_n \geq \chi_{n,\delta}^2\\
     &  \leq \E[\tilde v_{\mathrm{DR}}(Q^*_{\eta, 0}; \eta)] +  \epsilon_n  + 2\nu + \chi_{n,\delta}^2  && \text{apx. complementary slackness }\\
     &  \leq \E[\tilde v_{\mathrm{DR}}(Q^*_{0, 0}; \eta)] +  \epsilon_n  + 2\nu + \chi_{n,\delta}^2  && \text{suboptimality} 
\end{align*}
Hence
$$
\tilde V(Q^*) - \tilde V(Q^*_{0,0})
\leq   \epsilon_n  + 2\nu + \chi_{n,\delta}^2.
$$
Equivalently, since $\tilde V = -V$,
$$
V(Q^*_{0,0}) - V(Q^*)
\leq   \epsilon_n  + 2\nu + \chi_{n,\delta}^2.
$$
We generally assume that the saddlepoint suboptimality $\nu$ is of lower order than $\epsilon_n$ (since it is under our computational control). 

Applying \Cref{lemma-chernozhukov-ortho} gives:
$$
V(Q^*_{0, 0}) - V(Q^*)
\leq   \epsilon_n  + 2\nu + 2\chi_{n,\delta}^2.
$$

Define policy classes with respect to localized small-population regret slices
(with a nuisance-estimation enlarged radius):
\[
\mathcal{Q}^0(\epsilon;\hat I_1)
=
\left\{
Q\in\mathcal{P}(\Pi):
V(Q^{\star}_{0,0})-V(Q)\le \epsilon,\quad
\Delta_k(Q)\le d_k+\epsilon,\ \forall k\in[K],\quad
\Delta_k(Q)-\Delta_k(Q^{\star}_{0,0})\le \epsilon,\ \forall k\in\hat I_1
\right\}.
\]

Then we have that 

$$V_2^{obj} \leq \sup_{Q \in \mathcal{Q}^0(\epsilon_n;\hat I_1) } \op{Var}(v_{DR}(O;\pi) - v_{DR}(O;\pi^*)),$$
where we have shown that $\pi^* \in\mathcal{Q}^0(\epsilon + 2 \nu + 2 \chi_{n,\delta}^2;\hat I_1).$

Following the rest of the argumentation in \cite[Lemma 9]{chernozhukov2019semi} from here onwards gives the result, i.e. studying the case of estimated nuisances with our \Cref{lemma-variance-slice-oracle} and \Cref{lemma-chernozhukov-ortho}. 
\end{proof}

\clearpage
\section{Additional case studies and additional details on experiments}\label{sec-addl-experiments}

\subsection{SFHSA - SNAP case study}

\subsubsection{Additional descriptive statistics}
\begin{table}[htbp]
\centering
\caption{Regression Results, $E[R\mid T_0, X]$}
\label{tbl-regressionRX}
\begin{tabular}{lcccc}
\toprule
& Coefficient & Std. Error & t-statistic & p-value \\
\midrule
Intercept & 0.000 & & & \\
HH Size & 0.598*** & (0.055) & 10.826 & 0.000 \\
Phone interview & 0.730*** & (0.049) & 14.952 & 0.000 \\
Female & 0.252*** & (0.044) & 5.667 & 0.000 \\
Nonwhite & 0.236*** & (0.047) & 5.014 & 0.000 \\
Age & 0.032*** & (0.002) & 17.943 & 0.000 \\
Citizen Status & -0.290 & (1.001) & -0.290 & 0.772 \\
First SNAP year & -0.001 & (0.024) & -0.022 & 0.983 \\
Any kids & 0.429*** & (0.096) & 4.470 & 0.000 \\
ESL & 0.384 & (1.001) & 0.384 & 0.701 \\
HH receives max amt & -0.185*** & (0.050) & -3.718 & 0.000 \\
No earnings prev quarter & 0.001** & (0.000) & 2.118 & 0.034 \\
Years since first SNAP & 0.012 & (0.022) & 0.563 & 0.573 \\
Interview week in month & -0.290 & (1.001) & -0.290 & 0.772 \\
Interview Day & -0.020*** & (0.003) & -7.244 & 0.000 \\
English Lang. Int. & -0.384 & (1.001) & -0.383 & 0.701 \\
Spanish Lang. Int. & 0.151 & (0.119) & 1.262 & 0.207 \\
reminder & 0.034 & (0.047) & 0.720 & 0.472 \\
\bottomrule
\multicolumn{5}{l}{\textit{Note:} *** p<0.01, ** p<0.05, * p<0.1}
\end{tabular}
\end{table}

\begin{table}[htbp]
\centering
\footnotesize
\renewcommand{\arraystretch}{0.85}
\caption{Regression Results, Interacted logistic regression $P(T\mid X, R, R \times X)$}
\label{tbl-regressionTRX}
\begin{tabular}{lcccc}
\toprule
& Coefficient & Std. Error & t-statistic & p-value \\
\midrule
Intercept & 0.000 & & & \\
HH Size & 0.128*** & (0.037) & 3.516 & 0.000 \\
Phone interview & 0.141** & (0.059) & 2.380 & 0.017 \\
Female & 0.051 & (0.048) & 1.063 & 0.288 \\
Nonwhite & 0.048 & (0.057) & 0.845 & 0.398 \\
Age & 0.045*** & (0.002) & 22.244 & 0.000 \\
Citizen Status & -0.059 & (1.001) & -0.059 & 0.953 \\
First SNAP year & -0.000 & (0.031) & -0.016 & 0.987 \\
Any kids & 0.089 & (0.082) & 1.086 & 0.277 \\
ESL & 0.078 & (1.001) & 0.078 & 0.938 \\
HH receives max amt & -0.039 & (0.052) & -0.756 & 0.450 \\
No earnings prev quarter & -0.005*** & (0.001) & -8.802 & 0.000 \\
Years since first SNAP & 0.088*** & (0.029) & 3.014 & 0.003 \\
Interview week in month & -0.059 & (1.001) & -0.059 & 0.953 \\
Interview Day & -0.009*** & (0.003) & -3.137 & 0.002 \\
English Lang. Int. & -0.078 & (1.001) & -0.078 & 0.938 \\
Spanish Lang. Int. & 0.031 & (0.089) & 0.348 & 0.728 \\
reminder & 0.000 & (1.414) & 0.000 & 1.000 \\
R $\times$ HH Size & 0.033 & (0.079) & 0.411 & 0.681 \\
R $\times$ Phone interview & 0.031 & (0.118) & 0.261 & 0.794 \\
R $\times$ Female & 0.012 & (0.093) & 0.125 & 0.901 \\
R $\times$ Nonwhite & 0.010 & (0.109) & 0.090 & 0.928 \\
R $\times$ Age & -0.067*** & (0.004) & -16.384 & 0.000 \\
R $\times$ Citizen Status & -0.015 & (1.003) & -0.015 & 0.988 \\
R $\times$ First SNAP year & 0.001 & (0.001) & 1.058 & 0.290 \\
R $\times$ Any kids & 0.022 & (0.186) & 0.117 & 0.907 \\
R $\times$ ESL & 0.017 & (1.006) & 0.017 & 0.986 \\
R $\times$ HH receives max amt & -0.007 & (0.106) & -0.068 & 0.946 \\
R $\times$ No earnings prev quarter & 0.009*** & (0.001) & 8.733 & 0.000 \\
R $\times$ Years since first SNAP & 0.045*** & (0.014) & 3.237 & 0.001 \\
R $\times$ Interview week in month & -0.015 & (1.003) & -0.015 & 0.988 \\
R $\times$ Interview Day & 0.003 & (0.006) & 0.516 & 0.606 \\
R $\times$ English Lang. Int. & -0.017 & (1.006) & -0.017 & 0.986 \\
R $\times$ Spanish Lang. Int. & 0.014 & (0.231) & 0.060 & 0.952 \\
\bottomrule
\multicolumn{5}{l}{\textit{Note:} *** p<0.01, ** p<0.05, * p<0.1}
\end{tabular}
\end{table}

\begin{table}[]
    \centering
\begin{tabular}{lllll}
\hline
      & $p_{1\mid 1}(x,a)-p_{1\mid 0}(x,a)$ & $\tau(x)$ & $P(T=1\mid X,A)$ & $E[Y\mid X,A]$ \\
\hline
 Correlation     & -0.08                           & -0.28  & -0.27          & 0.04         \\
 Correlation A=0 & 0.23                            & -0.27  & -0.21          & 0.21         \\
 Correlation A=1 & -0.15                           & -0.28  & -0.27          & 0.04         \\
\hline
\end{tabular}
    \caption{Self-selection and targeting efficiency: Spearman's rank correlation of predicted enrollment in reminder probabilities ($P(R=1\mid X,A)$) with heterogeneous and marginal compliance and treatment effects. }
    \label{tab:selfselection-correlations}
\end{table}

\paragraph{Is self-selection sufficient for targeting?}
An important question is whether individuals self-select into the reminder on the basis of their expected value of benefits or the particular effect of the reminder on compliance. We leverage the estimated effect heterogeneity and predictive models to unpack this relationship. We consider the Spearman correlation between the predicted probability of enrollment in the reminder, $P(R=1\mid X,A)$, and $p_{1\mid 1}(x,a)-p_{1\mid 0}(x,a)$ (the heterogeneous effect of compliance); $\tau(X)$ (the heterogeneous effect of treatment), $P(T=1\mid X,A)$ (the marginal treatment-enrollment probabilities); and $E[Y\mid X,A]$ (the marginal expected benefits). The first two reflect the effect heterogeneity: A strong positive rank correlation of selection into encouragement (the reminder) with the compliance or heterogeneous treatment effects (or the product thereof) would indicate that self-selection is efficient for recommendation/treatment efficacy. Since this may be difficult to justify a priori, as enrolling individuals do not know this estimated heterogeneity in the effects, we also assess rank correlations of $P(R=1\mid X,A)$ with the marginal treatment and outcomes. We find that, overall, self-selection is weakly \textit{negatively} correlated with heterogeneous compliance, treatment effects, and marginal compliance. However, these relationships are heterogeneous by group membership: We find a weakly positive correlation for the white subgroup, where self-selection is weakly correlated with heterogeneous compliance effects and marginal outcome levels. However, generally no such relationship exists for nonwhite beneficiaries. This indicates that current self-selection patterns are overall not efficient for the targeting of reminders to impact outcomes and that, in fact, self-selection is usually negatively correlated with the heterogeneity in the treatment effect: Better-resourced households with lower treatment effects (eligible for lower benefits amounts) are somewhat more likely to enroll in recertification reminders. At a high level, this suggests that there is room for improvement upon self-selection into deadline reminders, whether by changing the \textit{default option} to a reminder \textit{opt-out} or by means of covariate-conditional targeting under a budget constraint, as we explore next. %A recent Federal Communications Commission (FCC) ruling \citep{FCC2021} clarifies that ``state governments can communicate information to beneficiaries via text messaging without first obtaining their `opt-in' and without facing liability under the TCPA [Telephone Consumer Protection Act of 1991]''. %

\subsubsection{Additional detail on implementations in \Cref{sec-fairness-analysis}}

\paragraph{Additional details on \Cref{fig:sfhsa-density-race,fig:sfhsa-cdfplot-race}}
We split the SFHSA sample into training and test sets with a fixed random seed and use
  the training sample for all nuisance-model fitting and CATE estimation. For the main
  heterogeneous-effect analysis, we estimate $\mathbb{E}[Y\mid X=x]$ with a cross-
  validated regression learner and $\mathbb{P}(T=1\mid X=x)$ with a cross-validated
  classification learner, where the candidate models are selected by $5$-fold CV from
  logistic regression, random forests, and gradient boosting; these fitted nuisances are
  then passed into a \texttt{CausalForestDML} estimator to recover the treatment effect $
  \tau(x)=\mathbb{E}[Y(1)-Y(0)\mid X=x]$. We analogously estimate a compliance effect by
  treating $R$ as the binary treatment and $T$ as the outcome, yielding $
  \kappa(x)=\mathbb{P}(T=1\mid R=1,X=x)-\mathbb{P}(T=1\mid R=0,X=x)$, and rank
  individuals using the score $\kappa(x)\tau(x)$. For policy evaluation, the notebook
  also fits separate propensity and outcome models, using logistic regression for
  treatment propensities and boosted outcome regressions on $\log(Y+1)$, and combines
  them in a doubly robust plug-in estimator to trace expected outcomes under budget-
  constrained targeting rules.
  
\paragraph{Additional details on \Cref{fig:sfhsa-budgeted-allocation-comparison}}
We optimize the policies on a 70\% training split. The off-policy values are obtained via doubly robust estimation, with logistic regression as the propensity score and gradient-boosted regression on log-transformed outcomes for outcome regression. However, because of the zero-inflated/heavy-tailed outcomes, we find some instability in the off-policy estimates computed on the test set alone. Thus, we pool the data and obtain the plotted estimates of $E[Y(\pi)\mid A=1]$ from the entire dataset. %

\subsubsection{Group-fair budget allocations vs. global budget}
  With these findings in mind, we turn to a final investigation: Suppose that we could consider allocations with different budgets per group rather than a global budget. Given that different compliance responses lead to unequal improvements, how much more outreach toward nonwhite populations would be needed under the allocation to equalize the amount of \textit{improvement} over the no-allocation baseline? We analyze this question in \Cref{fig:sfhsa-groupwise-equalimprovement}.

  \begin{minipage}{\textwidth}
  \begin{minipage}[b]{0.5\textwidth}
  \begin{figure}[H]
 \includegraphics[width=\linewidth]{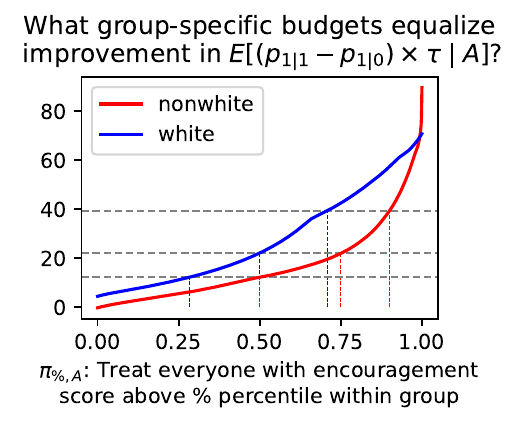}
    \caption{Groupwise separate budgets: Which budget allocations would achieve equal improvements? } 
    \label{fig:sfhsa-groupwise-equalimprovement}
  \end{figure}
  
  \end{minipage}
  \hfill
  \begin{minipage}[b]{0.5\textwidth}
    \begin{table}[H]
    \vspace{-10pt}
          \caption{Equivalent budgets for $A=1$ to equalize improvement at a budget for $A=0$}

\begin{tabular}{lll}
\toprule
 Budget A=0   & Improvement A=0   & Equiv. Budget A=1   \\
\midrule
 50.0\%        & \$12.22            & 71.7\%               \\
 25.0\%        & \$21.97            & 50.2\%               \\
 10.0\%        & \$39.37            & 29.2\%               \\
\bottomrule
\end{tabular}
\label{tbl:sfhsa-budget-equivalence}
      \end{table}
    \end{minipage}
  \end{minipage}
  \vspace{10pt}

 We now examine policies that consider different thresholds for different groups, thereby resulting in different budgets, $\pi_{\%,A} = \mathbb{I}[(p_{1\mid 1}(X,A)-p_{1\mid 0}(X,A))\tau(X,A) > q_A].$ Typically, $q_A$ is a quantile of the score distribution and therefore corresponds to some budget percentile. We let $F_A(p)=P(p_{1\mid 1}(X,A)-p_{1\mid 0}(X,A))\tau(X,A)\leq p) $ denote the cumulative distribution function (CDF) of the heterogeneous encouragement score, and therefore, the groupwise budget at some threshold $q_a$ is $F^{-1}_A(q_A)$. We now range over these thresholds separately for the different groups on the $x$-axis; the $y$-axis plots the average improvement in the allocated subgroup, $E[p_{1\mid 1}(X,A)-p_{1\mid 0}(X,A))\tau(X,A)\mid A,\pi_{\%,A}=1]$. In \Cref{tbl:sfhsa-budget-equivalence}, we illustrate for a few potential groupwise budget percentages of $A=0$ what improvement is achieved for $A=0$ and what percentage of the budget would be required for $A=1$ to achieve equal improvement. The equivalent budget allocations required to target the same percentage of the group population can differ by 20 to 25 percentage points. At thresholds $q=F_0^{-1}(p)$ corresponding to percentiles $p=50\%, 75\%$ and $90\%$ of the $A=0$ group, and therefore budgets of $1-p=50\%, 25\%$ and $10\%$, we observe average improvements of $\$12, \$21.97$ and $\$39.37$ in the $A=0$ group, and it would require an equivalent budget of $~72\%, 50\%$ and $29.2\%$ of the $A=1$ population to achieve equal improvements, respectively.  Given that 80\% of individuals are nonwhite (i.e., the $A=1$ group is four times the size of the $A=0$ group), substantially different groupwise budgets (in terms of number of people covered by the budget allocation) would be required to equalize improvements. 

\subsubsection{Summary and Conclusion} In this case study, we focus on assessing the heterogeneous compliance and treatment effects of text message reminders about recertification deadlines on 1) completion of any interview and 2) yearly benefits. %First, we focus on insights on the self-selection process and heterogeneity in the encouragement and treatment effects that help us understand potential drivers of inequity in targeted allocations. Although we find that the heterogeneity in the effects of the encouragement (text message reminders about the recertification deadline) on treatment (successful completion of an interview) is positively correlated with the heterogeneity in the treatment effects on downstream benefits collected, the strength of this correlation varies by race: It is stronger for white than for nonwhite beneficiaries. This finding is relevant since the naive heterogeneous encouragement effect can be related to the product of heterogeneous encouragement and treatment effects and so the relative strength of this correlation indicates that an approach based purely on ranking the heterogeneous encouragement effects under the efficient budget allocation would result in some white beneficiaries ranking more highly than nonwhite ones.

\Cref{fig:sfhsa-density-race} and \Cref{fig:sfhsa-cdfplot-race} illustrate that this inequity arises because of the differential effectiveness of the reminder in improving the probability of interview attendance, suggesting either that a design with further communication could be helpful or that other, structural compliance barriers could persist. Proceeding with the analysis of reminder efficacy, we find that while racial disparities in current patterns of self-selection into encouragement are small, self-selection is still inefficient for targeting: Those who would benefit the most from receiving reminders do not sign up for them. We assess budget-optimal targeted allocations that can improve upon the outcome under self-selection with as little as 10\% of the budget. The heterogeneous effects of the reminder on interview completion, i.e., the differences that the nudge makes, are smaller for nonwhite beneficiaries, but this group generally receives higher benefits overall. %Text message reminders are broadly helpful. We find small racial disparities in the efficacy of current self-selection patterns, but self-selection into reminders is broadly inefficient for ensuring that those who benefit the most from reminders receive them. 
Overall, any efficiency-targeted allocation rule improves groupwise outcomes (i.e., average 12-month benefits for white or nonwhite beneficiaries) relative to those in the no-reminder baseline. 

Our overall findings highlight the need for further research into the roots of the gaps in interview completion and suggest that investing in encouragement could reduce racial gaps and may be overall more effective than constraining allocations for fairness. We consider a hypothetical fairness adjustment in which we vary the encouragement budgets for groups or the thresholds at which groups receive the encouragement, finding that substantially different budgets across groups would be required to equalize improvements---although this could also be perceived as unfair. Overall, we find that naively encouragement-optimal allocations balance fair treatment (less inequality in budget spending) with equity in outcomes.

\subsection{Additional details on two-stage optimization simulation}\label{apx-two-stage}

Our method proceeds as follows: first, we solve the constrained optimization via grid search over Lagrange multipliers, as also suggested and implemented by \citep{agarwal2018reductions} in some of their experiments. We fix a grid of $\lambda$ multipliers beforehand (the grid can be expanded if constrained solutions occur at the edges).

We compare to a number of natural baselines for constrained optimization. We seek to solve, for a fixed $\epsilon >0$: 
$$ \max_\pi \{ V(\pi) \colon \abs{\E[T(\pi) \mid A=a] - \E[T(\pi) \mid A=b]} \leq \epsilon $$
We can obtain the range of feasible $\epsilon$ by solving two unconstrained optimization problems, one that maximizes policy value alone and another that minimizes disparity alone. 

The \textbf{naive} method optimizes the above without any further modification of the constraint, and appears in gray with square markers. As an ablation, the \textbf{one-stage} method solves the constraint with an upper bound of $(\epsilon - \frac{z_{0.9} \sigma_{\max}}{\sqrt{n}} )_+,$ where $z_{0.9}$ is a 90\% two-sided z-score for a standard normal random variable, $\sigma_{\max} = \sqrt{\max_\pi \{ \hat{Var}( \E[T(\pi) \mid A=a]-\E[T(\pi) \mid A=b]  )\}}$ is the worst-case standard deviation of treatment disparity values. It appears in red with triangle markers. Our two-stage methods follow the procedure described in \Cref{alg-metaalg2}. The \textbf{two-stage} method appears in dark purple with a circle marker. %However, in very small samples, the two-stage method would naturally suffer from finite-sample issues due to requiring two data splits for unbiasedness. 
The sample splitting is required primarily for unbiased estimation of policy value, but in-sample debiasing methods could be further applied to increase effective sample complexity. We introduce a \textbf{two-stage-75\%} method which samples data splits of 75\% of the original dataset size, trading off some potential bias for variance reduction, appearing in light purple with diamond markers. Finally, another constrained formulation of interest is rather that seeking \textit{Pareto improvement} upon a policy, such as a prior deployed policy, or a first-stage optimal unconstrained policy. 
% \az{check notation for neighborhood bounds.}

The panels in \Cref{fig:two-stage} indicate the out-of-sample policy regret (compared to a constrained policy optimized on an oracle large dataset), out-of-sample treatment disparity, coverage levels of out-of-sample treatment disparity $\leq \epsilon + z_{0.9} \sigma_{\pi^*} / \sqrt{n_{eff}}$, and the Lagrangian regret (measured relative to the effective $\epsilon_n$ from each method). The second panel of \Cref{fig:two-stage} also plots the value of $\epsilon=0.3$, as well as the effective $\epsilon_n$ series for each method; one-stage, 2-stage, and 2-stage-75\% in dashed gray lines. 
The last panel compares these different methods by scalarizing policy value and constraint violations with a fixed scalar factor. It measures the gap of out-of-sample scalarized policy value and disparity control at the desired $\epsilon$-level, while accounting for different fixed-sample conservative $\epsilon_n$ of different methods. See \Cref{apx-two-stage} for more details on this measure.
%The fourth panel plots $L^*(\epsilon_{n_{eff}}) - \hat L(\eta^*_{n_{eff}})$ for each method, which therefore measures the gap of out-of-sample scalarized policy value and disparity control at the desired $\epsilon$-level, while accounting for best-possible performance given the fixed-sample conservativity at different effective sample complexities via the fixed $\eta^*_{n_{eff}}$ weighting factor. 
We find that the two-stage method is particularly helpful at moderate sample sizes $n=750-2000$ with enough samples to learn nuisances well on half the data, and room to improve via variance regularization.
\paragraph{Multi-objective regret measure.}

First for each different method, potentially with a different effective sample complexity, we define the best-possible policy obtained with population estimates of value and disparity, while taking the data-driven slack as the desired constraint level: 
\begin{align*}
 L(\epsilon_{n_{eff}}) &\coloneqq  \{ V(\pi^*(\lambda)) - \eta^*_{n_{eff}}(\abs{\E[T(\pi^*(\lambda)) \mid A=a]-\mathbb{E}[T(\pi^*(\lambda)) \mid A=b]} - \epsilon)_+ \} \\
       L^*(\epsilon_{n_{eff}}) &\coloneqq \max L(\epsilon_{n_{eff}}),\qquad \eta^*_{n_{eff}}\in \arg\max L(\epsilon_{n_{eff}}) \\
       \hat L(\eta^*_{n_{eff}}) &\coloneqq  V(\hat\pi^*) - \eta^*_{n_{eff}}(\abs{\E[T(\hat\pi^*) \mid A=a]-\mathbb{E}[T(\hat\pi^*) \mid A=b]} - \epsilon)_+.
\end{align*}
The fourth panel plots $    L^*(\epsilon_{n_{eff}}) - \hat L(\eta^*_{n_{eff}})$ for each method, which therefore measures the gap of out-of-sample scalarized policy value and disparity control at the desired $\epsilon$-level, while accounting for best-possible performance given the fixed-sample conservativity at different effective sample complexities. (Note that the oracle benchmark therefore changes in $n$, leading to non-monotonic behavior). We find that the two-stage method is particularly helpful at moderate sample sizes $n=750-2000$ with enough samples to learn nuisances well on half the data, and room to improve via variance regularization. At larger sample sizes, the sample splitting required for two-stage optimization's implicit variance regularization leads to slower convergence in general when one-stage optimization is sufficient. 

\subsection{Oregon Health Insurance Study}

The Oregon Health Insurance Study \citep{finkelstein2012oregon} is an important study on the causal effect of expanding public health insurance on healthcare utilization, outcomes, and other outcomes. It is based on a randomized controlled trial made possible by resource limitations, which enabled the use of a randomized lottery to expand Medicaid eligibility for low-income uninsured adults. Outcomes of interest included health care utilization, financial hardship, health, and labor market outcomes and political participation. 

Because the Oregon Health Insurance Study expanded access to \textit{enroll} in Medicaid, a social safety net program, the effective treatment policy is in the space of \textit{encouragement} to enroll in insurance (via access to Medicaid) rather than direct enrollment. This encouragement structure is shared by many other interventions in social services that may invest in nudges to individuals to enroll, tailored assistance, outreach, etc., but typically do not automatically enroll or automatically initiate transfers. Indeed this so-called \textit{administrative burden} of requiring eligible individuals to undergo a costly enrollment process, rather than automatically enrolling all eligible individuals, is a common policy design lever in social safety net programs. Therefore we expect many beneficial interventions in this consequential domain to have this encouragement structure.

We preprocess the data by partially running the Stata replication file, obtaining a processed data file as input, and then selecting a subset of covariates that could be relevant for personalization. These covariates include household information that affected stratified lottery probabilities, socioeconomic demographics, medical status and other health information. 

In the notation of our framework, the setup of the optimal/fair encouragement policy design question is as follows: 

\begin{itemize}
    \item $X$: covariates (baseline household information, socioeconomic demographics, health information) 
    \item $A$: race (non-white/white), or gender (female/male)

    These protected attributes were binarized. 
    \item $R$: encouragement: lottery status of expanded eligibility (i.e. invitation to enroll when individual was previously ineligible to enroll) 
    \item $T$: whether the individual is enrolled in insurance ever

    Note that for $R=1$ this can be either Medicaid or private insurance while for $R=0$ this is still well-defined as this can be private insurance. 
    \item $Y$: number of doctor visits

    This outcome was used as a measure of healthcare utilization. Overall, the study found statistically significant effects on healthcare utilization. An implicit assumption is that increased healthcare utilization leads to better health outcomes. 
\end{itemize}

We subsetted the data to include complete cases only (i.e. without missing covariates). We learned propensity and treatment propensity models via logistic regression for each group, and used gradient-boosted regression for the outcome model. We first include results for regression adjustment identification. One potential concern is the continued use of the healthcare utilization variable as an outcome measure. From a methodological angle, it displays heterogeneity in treatment effects. From the substantive angle, healthcare utilization remains a proxy outcome measure for other health measures, and interpreting increases in healthcare utilization as beneficial is justified primarily by assuming that individuals were constrained by the costs of uninsured healthcare previously, so that increases in healthcare utilization reflect that access to insurance increases in access to care.

\begin{figure}[t!]
    \centering
    \includegraphics[width=0.25\textwidth]{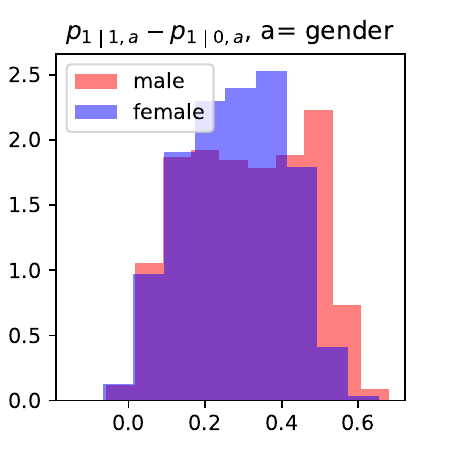}\includegraphics[width=0.25\textwidth]{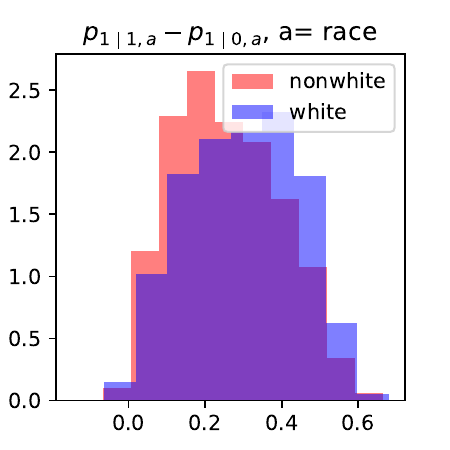}\includegraphics[width=0.25\textwidth]{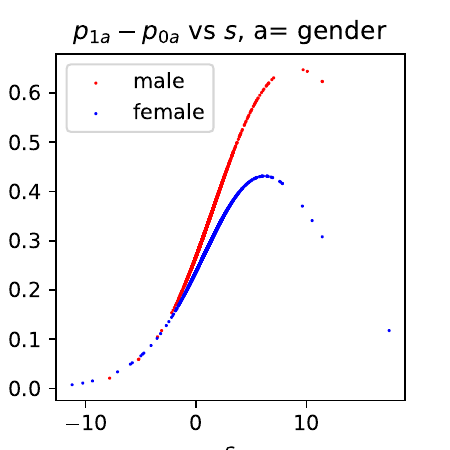}\includegraphics[width=0.25\textwidth]{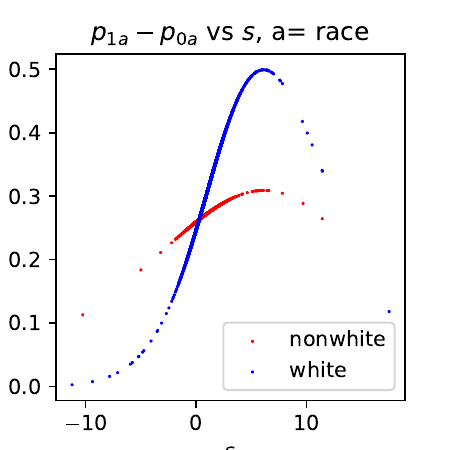}
    \caption{Distribution of lift in treatment probabilities $p_{1 \mid 1,a}-p_{1 \mid 0, a} =P(T=1\mid R=1,A=a,X)-P(T=1\mid R=0,A=a,X)$, and plot of $p_{1\mid 1,a}-p_{1\mid 0,a}$ vs. $\tau.$}
    \label{fig:oregon-desc}
        \includegraphics[width=0.25\textwidth]{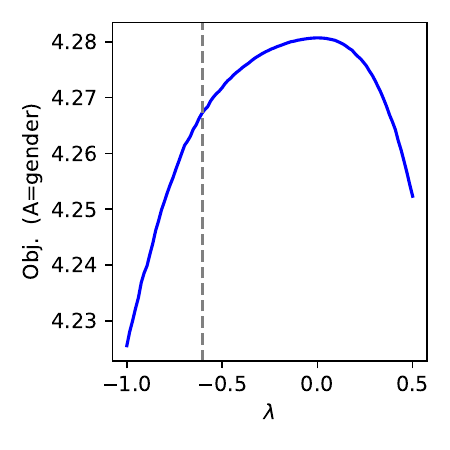}\includegraphics[width=0.25\textwidth]{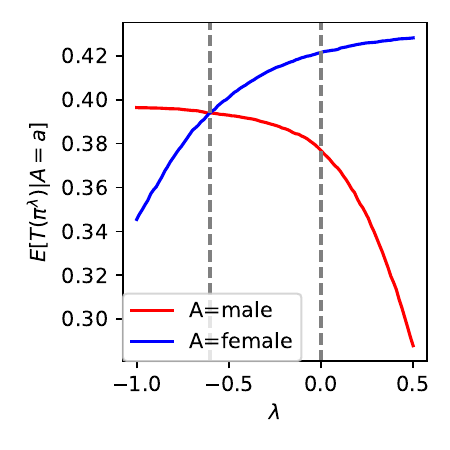}\includegraphics[width=0.25\textwidth]{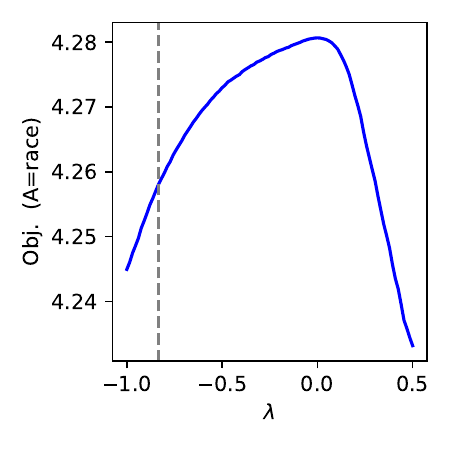}\includegraphics[width=0.25\textwidth]{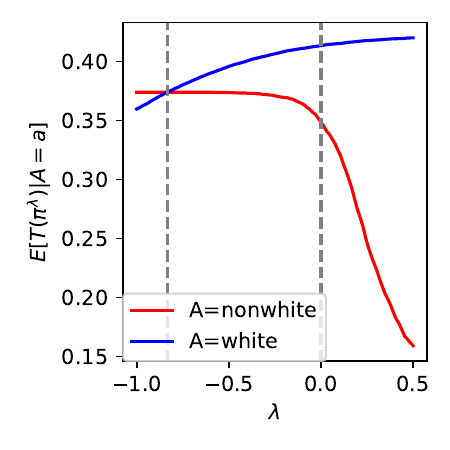}
        \vspace{-10pt}
    \caption{Local scalarized score $2\,\E[Y(\pi^\lambda)] + \E[T(\pi^\lambda)]$ (illustrative weights) and enrollment rate $\E[T(\pi^\lambda)\mid A=a]$, for $A$ = race, gender.}
\label{fig:oregon-objconstr}
\end{figure}
In \Cref{fig:oregon-desc} we plot descriptive statistics. We include histograms of the treatment responsivity lifts $p_{1 \mid 1a}(x,a)-p_{1 \mid 0 a}(x,a).$ We see some differences in distributions of responsivity by gender and race. We then regress treatment responsivity on the outcome-model estimate of $\tau$. We find substantially more heterogeneity in treatment responsivity by race than by gender: whites are substantially more likely to take up insurance when made eligible, conditional on the same expected treatment effect heterogeneity in increase in healthcare utilization. (This is broadly consistent with health policy discussions regarding mistrust of the healthcare system). 

Next we consider imposing treatment parity constraints on an unconstrained optimal policy (defined on these estimates). In \Cref{fig:oregon-objconstr} we plot a \emph{local} scalarized score that combines doctor visits and insurance enrollment,
\[
2\,\E[Y(\pi)] + \E[T(\pi)],
\]
where $Y$ is the number of doctor visits and $T$ indicates ever enrolling in insurance (Medicaid or private when encouraged); the weights $(2,1)$ are for illustration only and are not part of the paper-wide definition $V(\pi)=\E[Y(\pi)]$ used in the formal theory. We also plot $\E[T(\pi)\mid A=a]$ for gender and race, respectively, as $\lambda$ varies. Policies that improve group-conditional enrollment can be found with relatively little change in this scalarized score: disparities in $\E[T(\pi)\mid A=a]$ can be reduced by about $4$ percentage points ($0.04$) for gender and about $6$ percentage points $(0.06)$ for race while lowering $2\,\E[Y(\pi)] + \E[T(\pi)]$ by only about $0.01$--$0.02$ (in the same units, i.e., roughly one--two visits at the visit weighting of $2$). On the other hand, relative improvements and compromises in enrollment for the ``advantaged group'' show different tradeoffs. Plotting the tradeoff curve for race shows that, consistent with the large differences in treatment responsivity we see for whites, improving access for blacks. Looking at this disparity curve given $\lambda$ however, we can also see that small values of $\lambda$ can have relatively large improvements in access for blacks before these improvements saturate, and larger $\lambda$ values lead to smaller increases in access for blacks vs. larger decreases in access for whites. 

\subsection{Additional discussion, PSA-DMF case study}\label{apx-psa}
\begin{figure}
    \centering
    \includegraphics[width=0.25\textwidth]{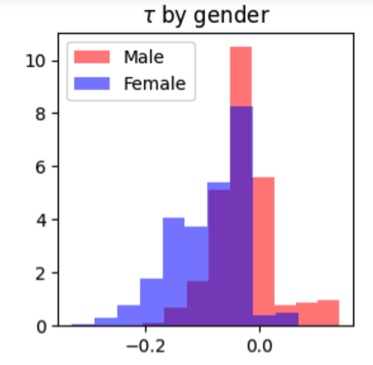}\includegraphics[width=0.25\textwidth]{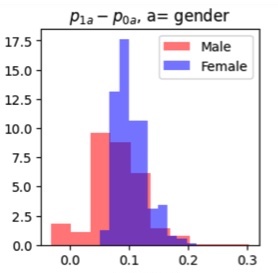}\includegraphics[width=0.25\textwidth]{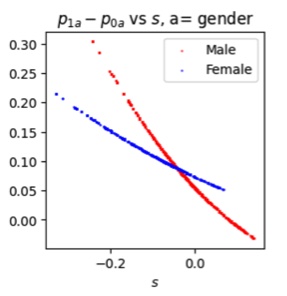}
    \caption{Distribution of treatment effect by gender, lift in treatment probabilities $p_{11a}-p_{01a} =P(T=1\mid R=1,A=a,X)-P(T=1\mid R=0,A=a,X)$, and plot of $p_{11a}-p_{01a}$ vs. $\tau.$}
    \label{fig:desc}

\end{figure}

First, before describing the analysis, we acknowledge important data issues (such as those that commonly arise from the criminal justice system \citep{bao2021s}), in addition to particularities of this data set, so that this analysis should be viewed as exploratory. 

\paragraph{Data setup}
Our analysis proceeds conditional on the non-detained population. This could make sense in a setting where decision-making frameworks for supervised release are unlikely to change judicial decisions to detain or release: our results apply to marginal defendants. Covariate levels (including PSA scores) were discretized for privacy. Moreover, the recorded final supervision decision does not include intensity, but different intensities are recommended in the data, which we collapse into a single level. The PSA-DMF is an algorithmic recommendation so here we are appealing to overlap in treatment recommendations, but using parametric extrapolation in responsivity. So, we are assuming randomness in treatment assignment that arises either from quasi-random assignment to judges or noise/variability in judicial decisions. We strongly appeal to this interpretation of randomness in treatment assignment in the conceptualization of a causal effect of treatment with supervised release. Other accounts and conceptualizations of judicial decision-making could instead argue that judicial decisions such as conditional release are by their very nature discretionary, and do not admit valid counterfactuals. We instead appeal to a hypothetical randomized experiment (if unethical) where individuals could conceivably be randomized into supervised release or not. Finally, unconfoundedness is likely untrue, but sensitivity analysis could address this in ways quite similar to those studied previously in the literature \citep{kallus2019interval,kallus2021minimax,kallus2018confounding}. 
\paragraph{Descriptives}

In \Cref{fig:desc}, we provide descriptive information illustrating heterogeneity (including by protected attribute) in adherence and effectiveness. We observe wide variation in judges' assignment of supervised release beyond the recommendation. We use logistic regression to estimate outcome and treatment response models. The first figure shows our estimates of the causal effect by gender (with similar heterogeneity by race). The outcome is failure to appear, so negative scores are beneficial. The second figure illustrates the difference in responsiveness: how much more likely decision-makers are to assign treatment when there is than when there is not an algorithmic recommendation to do so. The last figure plots a logistic regression of the lift in responsiveness on the causal effect $\tau(x,a) = \mu_1(x,a)-\mu_0(x,a)$. We observe disparities in how responsive decision-makers are conditional on the same treatment effect efficacy. %Though decision-makers did not have access to causal effect estimates, these descriptives illustrate that underlying efficacy doesn't explain away persistent disparities in . %This is, importantly, not a claim of animus because decision-makers did not have access to causal effect estimates. Nonetheless, disparities persist. %

\begin{figure}[t!]
    \centering
\includegraphics[width=0.35\textwidth]{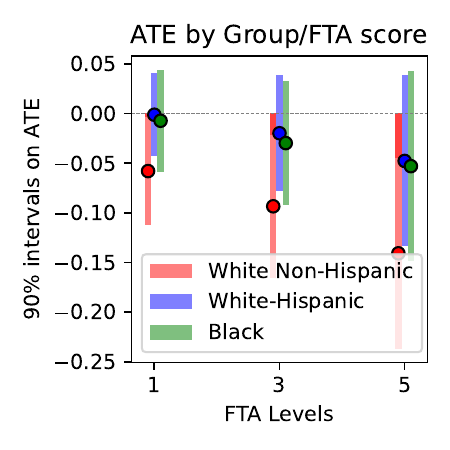}\includegraphics[width=0.55\textwidth]{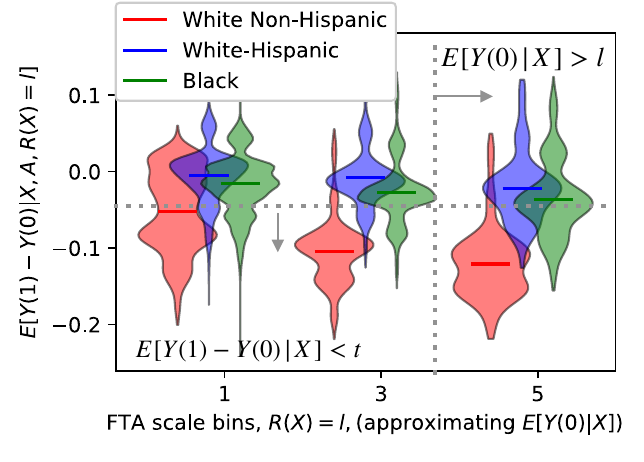}
    \caption{Estimated treatment effects of supervised release by race and FTA risk. (a) Average treatment effect (with 90\% confidence intervals) for different groups, stratified by FTA risk. (b) Distribution of heterogeneous treatment effects by group, stratified by FTA risk.}\label{fig:supervisedrelease-hte}
\end{figure}

In \Cref{fig:superviserelease-ate} we show estimates of the ATE \footnote{Obtained by augmented balancing weights with a logistic regression outcome model and ridge balancing weights.} and 90\% confidence intervals, per race group (excluding an ``Other" category due to smaller sample size) and stratified by underlying FTA risk level. Although the average treatment effect is weakly decreasing (more effective at reducing FTA) for riskier individuals, there is substantial discrepancy in the absolute magnitude of this effect by race. There is also potential heterogeneity in treatment effects. The figure on the right of \Cref{fig:supervisedrelease-ate} superimposes the distributions of estimated heterogeneous treatment effects (obtained by pseudo-outcome regression on the previous AIPW scores), again disaggregated by race and stratified on FTA risk level. Treatment effects are on average larger for white non-Hispanic than others (Black and white-Hispanic). There are many reasons this could be the case, such as different covariate distributions. 

The decision-making matrix thresholded its recommendations based on riskiness, as assessed via the PSA (\Cref{ex:justice}), in the absence of supervision. Although this can recall the riskiest individuals within races, and recall individuals with the largest treatment effects, this is at a targeting efficiency cost due to differences in absolute magnitude of treatment effects. The horizontal gray line in \Cref{fig:supervisedrelease-hte}
% \lnote{which one} 
illustrates targeting based on heterogeneous treatment effects: choosing a threshold based on the difference that supervised release makes in reducing FTA.

\subsubsection{Additional details on interpretable scorecard optimization}\label{apx-addldisc-scorecard}

\paragraph{Overview of the optimization problem}
\citet{khan2024off} studies robust optimization under overlap via Lipschitz uncertainty sets and establishes a no-interaction property, such that the Lipschitz uncertainty sets between datapoints can be reduced to a per-datapoint interval uncertainty for an \textit{anchor-point} in the overlap region. We also impose the additional compliance monotonicity constraint that $p_{1\mid 1}(x)-p_{1\mid 0}(x)\ge 0,\;\forall x$. We set the Lipschitz constant width of the uncertainty set to $L=0.17,$ which is the largest effective Lipschitz constant observed on the overlap region. We collect these constraints on robust $p_{1\mid 0}(x),p_{1\mid 1}(x)$ in the product uncertainty set $\mathcal{P}_{\mathrm{wc}-\mathrm{cov}},$ see the appendix \Cref{apx-addldisc-scorecard} for more details and the full formulation.

Individuals $i=1,\ldots,n$ are assigned to LP cells $j(i)\in\mathcal{J}$ (a pretrial scorecard bin).

To preserve interpretability, we also require optimization over \textit{monotone scorecards}, i.e. recommendation within one $(FTA, NCA)$ pair requires recommendation for larger $(FTA, NCA)$ pairs in a partial order, a linear constraint. The scorecard polytope is therefore
\[
\Pi\coloneqq
\{
\pi=(\pi_{j})_{j\in\mathcal{J}}:
\ 0\le \pi_{j}\le 1,\ 
\E[\pi(\mathrm{FTA}, \mathrm{NCA})]\le B,\ 
\pi(i,j)\ge \pi(k,l)\ \text{whenever }(i,j) \succ \pi_{(k,l)}
\},
\]
where $m_{j}$ are cell masses, $B$ is a supervision budget cap, and $\succ$ is the fixed PSA partial order on the ordered risk levels (e.g.\ $j,k\in\{1,3,5\}$). Under the status quo policy, about $18\%$ are recommended stricter supervised release. 

We therefore solve the following local scalarization, which is specific to this application:
$$
\min_{\pi \in \Pi} \max _{p \in \mathcal{P}_{\mathrm{wc}-\mathrm{cov}}}\{ 100\,\E[Y(\pi)] + 20\,\E[T(\pi)]
+\lambda \Delta(\pi, p)
\} .
$$

\paragraph{Parametrizing the Lipschitz uncertainty sets}
For compactness in the scorecard LP below, we write $p_{10}(x):=p_{1\mid 0}(x)$ and $p_{11}(x):=p_{1\mid 1}(x)$, so the monotonicity constraint is equivalently $p_{11}(x)\ge p_{10}(x)$.
\citep{khan2024off} gives Lipschitz uncertainty intervals for each datapoint:
$$
\begin{aligned}
\ell_{10, i}:=0 \vee \max _{k \in \mathcal{O}}\left(\hat{p}_{10, k}-L d\left(x_i, x_k\right)\right), & u_{10, i}:=1 \wedge \min _{k \in \mathcal{O}}\left(\hat{p}_{10, k}+L d\left(x_i, x_k\right)\right), \\
\ell_{11, i}:=0 \vee \max _{k \in \mathcal{O}}\left(\hat{p}_{11, k}-L d\left(x_i, x_k\right)\right), & u_{11, i}:=1 \wedge \min _{k \in \mathcal{O}}\left(\hat{p}_{11, k}+L d\left(x_i, x_k\right)\right) .
\end{aligned}
$$

Due to the ordinal nature of the data, the problem is equivalently parameterized over problems with equivalent truncated interval overlap sets. Partition $\{1,\ldots,n\}$ into nonempty groups $g=1,\ldots,G$ such that all $i\in g$ share the same cell index $j(g)\coloneqq j(i)$ and the same partial-identification rectangle $[\ell_{10,g},u_{10,g}]\times[\ell_{11,g},u_{11,g}]$
for feasible compliance pairs $(p_{10},p_{11})$.
Let
\[
\mathcal{V}^{(g)}\coloneqq
\bigl\{(p_{10},p_{11})\in[\ell_{10,g},u_{10,g}]\times[\ell_{11,g},u_{11,g}]:\ p_{11}\ge p_{10}\bigr\},
\qquad
\mathcal{P}_{\mathrm{wc\text{-}cov}}\coloneqq \prod_{g=1}^{G}\mathcal{V}^{(g)}.
\]

\paragraph{Compliance monotonicity constraint }
In the supervised release case study, for robust optimization over Lipschitz uncertainty sets, we also impose an additional monotonicity constraint that $p_{11}(X)\ge p_{10}(X)$. This is nonetheless reasonable, as it corresponds to assuming that recommending stricter conditions for supervised release does not make a judge (stochastically) \textit{less} likely to impose it. As a result, the resulting optimization problem differs somewhat from our pointwise interval solutions used in \Cref{prop-robustlp}, as we have the intersection of the partial identification rectangle with the monotonicity constraint. Nonetheless, we take a vertex enumeration approach to enumerate over the vertices of the partial identification polytope, in order to arrive at the following equivalent finite linear program. 

\paragraph{Finite linear program (vertex reformulation).}
For each $g$, let $\mathcal{K}_{g}=\{(p_{0,g,k},p_{1,g,k})\}_{k=1}^{K_{g}}$ be the vertex set of the polygon $\mathcal{V}^{(g)}$ (equivalently, the extreme points of $\mathcal{V}^{(g)}$).
Nature chooses one pair $(p_{10}^{(g)},p_{11}^{(g)})\in\mathcal{V}^{(g)}$ per group, and we set $(p_{10,i},p_{11,i})=(p_{10}^{(g)},p_{11}^{(g)})$ for every $i\in g$.
\begin{align*}
\min_{\pi,\,d_{+},\,\{u_g,w_g,\zeta_g\}_{g=1}^{G}}\quad
& \frac{1}{n}\sum_{i=1}^{n}\hat\mu_{0,i}
+\sum_{g=1}^{G}\zeta_{g}
+\lambda d_{+}
\\
\text{s.t. }\quad
& \zeta_{g}\;\ge\;\sum_{i\in g}\frac{\hat\tau_{i}+\alpha}{n}
\left\{
(1-\pi_{j(g)})\,p_{0,g,k}+\pi_{j(g)}\,p_{1,g,k}
\right\},
\qquad  \forall\,g\in\{1,\ldots,G\},\ \forall\,k\in\{1,\ldots,K_{g}\},
\\
& u_{g}\;\ge\;
\sum_{i\in g}\left(\frac{A_i}{p_a}-\frac{1-A_i}{p_b}\right)p_{0,g,k}
+\pi_{j(g)}
\left(\sum_{i\in g}\left(\frac{A_i}{n_{\mathrm{nw}}}-\frac{1-A_i}{n_{\mathrm{w}}}\right)\right)
\left(p_{1,g,k}-p_{0,g,k}\right),
\qquad  \forall\,g,\ \forall\,k,
\\
& w_{g}\;\ge\;
-\sum_{i\in g}\left(\frac{A_i}{p_a}-\frac{1-A_i}{p_b}\right)p_{0,g,k}
-\pi_{j(g)}
\left(\sum_{i\in g}\left(\frac{A_i}{n_{\mathrm{nw}}}-\frac{1-A_i}{n_{\mathrm{w}}}\right)\right)
\left(p_{1,g,k}-p_{0,g,k}\right),
\qquad  \forall\,g,\ \forall\,k,
\\
& \sum_{g=1}^{G}u_{g}\le d_{+},\qquad
\sum_{g=1}^{G}w_{g}\le d_{+}, \qquad d_{+}\ge 0, \qquad \pi \in \Pi
\end{align*}

\paragraph{Results discussion}

\Cref{fig:scorecards-psa} displays the results. Each ``scorecard'' has FTA scores on the y-axis, and NCA scores on the x-axis, increasing as one goes further down or right. The optimization uses this local scalarization of FTA and supervision burden, with an additional $\lambda$-weighted objective of treatment disparity $\E[T(\pi) \mid A=a ] - \E[T(\pi) \mid A=b]$; each scorecard also includes an estimate of the worst-case disparity. The left figures on the top row include cost-optimal and fairness-penalized scorecards; by only recommending supervision to the individuals at highest-risk of FTA, even the cost-minimizing scorecard alone can control the worst-case disparity at $0.0246$ below the current status-quo realized values, $0.0572.$ The pooled scorecard is too simple for explicit fairness penalization to move further in the space of simple scorecards. The next row considers a slightly richer parametrization: different FTA-NCA scorecards depending on an age cutoff, which we optimize by enumerating over potential cutoffs, given that age was already discretized. Such scorecards remain interpretable, although blank cells refer to never-seen combinations of covariates. We find that cost-optimized and fair rules tend to withhold recommendations for older individuals with low NCA activity, improving worst-case disparity control to $0.0229$ or $0.0217$ with fairness penalization. It is important that in this situation, \textit{even cost-minimizing scorecards without explicit fairness penalty can improve worst-case disparities relative to the current status quo, subject to our robust Lipschitz extrapolation partial identification intervals}. By restricting the policy parametrization to match closely with the current status quo, our quantitative estimates of improvement shrink relative to our full policy optimization - on the other hand, we can provide concrete recommendations for local changes to status-quo decision-making matrices that could improve disparities at little cost in FTA. Optimizing over covariate-rich data-driven recommendation policies can mitigate fairness-accuracy concerns by providing Pareto-improving local improvements upon the status quo.

\begin{figure}[ht!]
    \centering
    \includegraphics[width=\linewidth]{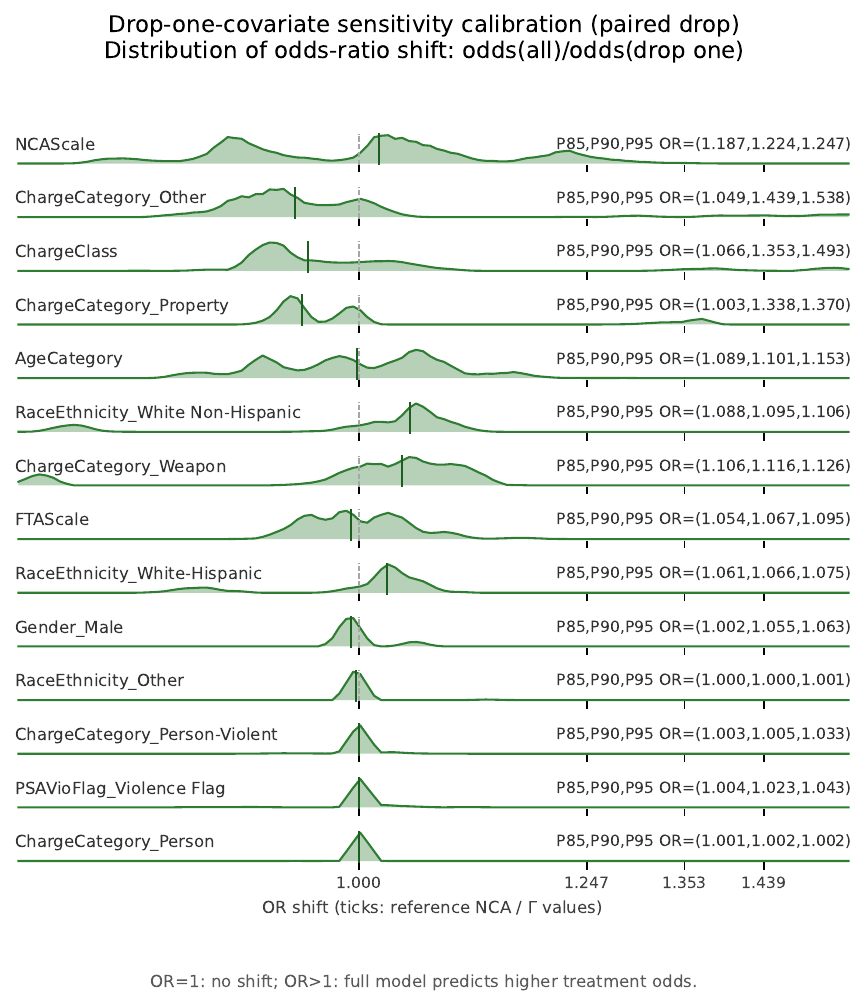}
    \caption{``Sparklines'' plot for calibrating size of uncertainty set of robustness to unobserved confounders. Density comparison of odds ratios induced by training propensities with dropped covariates (one per
line, in order). x-axis is the odds ratio, while y-axis (for each subplot) is a density plot of the odds ratios. Most informative categories include NCA, FTA, age, and charge category. }
    \label{fig:oddsratio-sparklines}
\end{figure}

\subsubsection{Sensitivity analysis: robustness checks under violations of unconfoundedness.}\label{apx-sensanalysis-unconf}

In \Cref{fig:oddsratio-sparklines} we see a sparklines plot to diagnose the odds-ratios that would be induced, if we had dropped each of the observed covariates, between selection into treatment based on all of the covariates vs. omitting each covariate. Such a plot is often used to interpret what different sizes of the uncertainty set mean, in relation to how informative \textit{currently observed} covariates are of selection into treatment. The most informative covariates of treatment are NCA, FTA, age and charge category. However, charge category has many discrete levels indicating different charges, where we see certain (rare) charges have an outsize effect on selection into treatment. It is not a good fit for the uniform control implied by the marginal sensitivity model and its $\Gamma$ parameter. Therefore, we turn to the most informative covariate, NCA, to guide our choice of $\Gamma.$ The $95\%$ quantile of odds ratios obtained by dropping NCA is 1.247. We choose $\Gamma=1.247$ as the size of uncertainty set.

\paragraph{Interval uncertainty sets on $\tau$ for the robust LP (MSM sensitivity).}
We use \citet{oprescu2023b} to obtain bounds on the heterogeneous treatment effect for the pretrial encouragement design using a B-Learner implementation
with logistic nuisances and a random-forest second stage. \citet{oprescu2023b} optimizes a marginal sensitivity model (MSM) odds-ratio bound $\Gamma\ge 1$. We follow our earlier specification with logistic regression propensity scores. Estimating robust bounds additionally requires quantile regression and ultimately we use a random forest regression for the final stage for the estimation of bounds.

\begin{figure}
    \centering
\includegraphics[width=\linewidth]{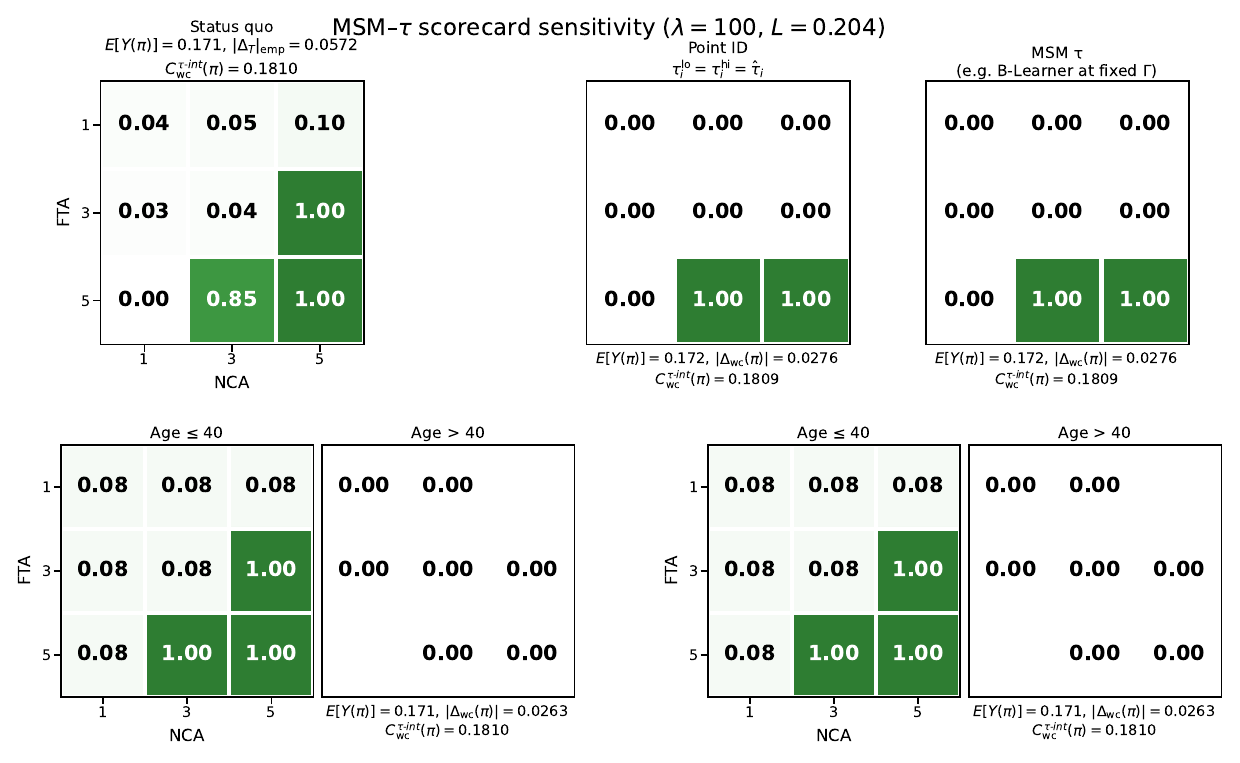}
    \caption{Interpretable supervised release scorecards optimized over uncertainty sets on $\tau$ alone with $\Gamma = 1.247$. Worst-case disparities assessed with $L=0.204$.}
    \label{fig:robust-scorecards}
\end{figure}

\end{APPENDICES}

\ACKNOWLEDGMENT{}

\end{document}